\newcommand*{\ditto}{---\texttt{"}---}
\newcommand{\cmark}{\ding{51}}%
\newcommand{\xmark}{\ding{55}}%
\newcommand{\thetafinal}{\theta^+}
 \newcommand{\B}{\mathcal{B}}
 \newcommand{\sol}{\mathrm{Sol}}
 \newcommand{\concat}[1]{\overline{#1}}
 \newcommand{\tensorplan}{{\sc TensorPlan}\xspace}
 \newcommand{\mtpdetq}{\mathrm{TP'}}
 \newcommand{\approxmeasure}{{\sc ApproxTD}\xspace}
 \newcommand{\ordo}{\mathcal{O}}
 \newcommand{\aplan}{\mathcal{A}}
 \newcommand{\Bernoulli}{\mathrm{Ber}}
 \newcommand{\ltheta}{{\theta}}
 \newcommand{\Wc}{{W_{\mathrm{close}}}}
 \newcommand{\BigTheta}{{\Theta}}
 \newcommand{\minsteps}{\ceil{\sd/4}}
 \newcommand{\fix}{\mathrm{fix}}
 \newcommand{\notfix}{\neg{\mathrm{fix}}}
 \newcommand{\efix}{\mathrm{e}^{\fix}}
 \newcommand{\enotfix}{\mathrm{e}^{\notfix}}
 \newcommand{\sreach}{\cS_{\mathrm{r}}}%
\newcommand{\snotreach}{\cS_{\neg{\mathrm{r}}}}%
 \newcommand{\ctflip}{\mathrm{ct}^{\mathrm{flip}}}
 \DeclareMathOperator{\diff}{diff}
 \newcommand{\phiq}{\phi_q}
 \newcommand{\phiv}{\phi_v}
 \newcommand{\barltheta}{\bar{\ltheta}}
\newcommand{\barphiv}{\bar{\phi}_v}
 \DeclareMathOperator{\reach}{Reach}
\newcommand{\bfitDelta}{X}  
\newcommand{\vmin}{\land}
\newcommand{\vmax}{\lor}
\newcommand{\ld}{d}
\newcommand{\sd}{{p}}
\newcommand{\sgamma}{{\sd/2}}
\newcommand{\stheta}{{w}}  
\newcommand{\ntheta}{{{\bar{\stheta}}}}  
\newcommand{\CleanTest}{{\tt CleanTest}}
\newcommand{\true}{{\tt true}}
\newcommand{\false}{{\tt false}}
\newcommand{\simm}{\simulatesc\xspace}
\newcommand{\EpisodeStart}{{\tt EpisodeStart}}
 \newcommand{\ldd}{\overline{\Delta}}
 \newcommand{\hd}{\hat{\Delta}}
\algnewcommand{\algorithmicgoto}{\textbf{goto}}%
\algnewcommand{\Goto}[1]{\algorithmicgoto~\ref{#1}}%
\algnewcommand{\Break}{\textbf{break}}%
\algnewcommand{\Initialize}[1]{%
  \State \textbf{Initialize:}
  \Statex \hspace*{\algorithmicindent}\parbox[t]{.8\linewidth}{\raggedright #1}
}
\algnewcommand{\Inputs}[1]{%
  \State \textbf{Inputs:}
  \Statex \hspace*{\algorithmicindent}\parbox[t]{.8\linewidth}{\raggedright #1}
}
\definecolor{LightGreen}{rgb}{0.8,1,0.8}
\definecolor{LightRed}{rgb}{1,0.8,0.8}
\definecolor{purplish}{rgb}{0.05,0.4,0.7}
\renewcommand{\phi}{\varphi}
\newcommand{\W}{\mathcal{W}}
\newcommand{\bR}{\mathbb{R}}
\DeclareMathOperator{\poly}{poly}
\definecolor{emerald}{rgb}{0.31, 0.78, 0.47}
\newtheorem{theorem}{Theorem}[section]
 \newtheorem{lemma}[theorem]{Lemma}
\newtheorem{assumption}[theorem]{Assumption}
\newcommand{\E}{\mathbb E}
\newcommand{\EE}[1]{\mathbb E[#1]}
\newcommand{\EEg}[1]{\mathbb E\left[#1\right]}
\newcommand{\ip}[1]{\left\langle #1 \right\rangle}
\newcommand{\norm}[1]{\left\|#1\right\|}
\newcommand{\R}{\mathbb{R}}
\newcommand{\N}{\mathbb{N}}
\newcommand{\cA}{\mathcal{A}}
\newcommand{\thetabound}{{B}}
\newcommand{\cC}{\mathcal{C}}
\newcommand{\cF}{\mathcal{F}}
\newcommand{\cH}{\mathcal{H}}
\newcommand{\cM}{\mathcal{M}}
\newcommand{\cP}{\mathcal{P}}
\newcommand{\cS}{\mathcal{S}}
\newcommand{\one}[1]{\mathbb{I}\{#1\}}
\newcommand{\simulatesc}{{\textsc{Simulate}}\xspace}
\renewcommand{\epsilon}{\varepsilon}
\newcommand{\ceil}[1]{\left\lceil {#1} \right\rceil}
\newcommand{\floor}[1]{\left\lfloor {#1} \right\rfloor}
\DeclareMathOperator*{\argmin}{arg\ min}
\DeclareMathOperator*{\argmax}{arg\ max}
\newcommand{\bbP}{\mathbb{P}}
\newif\ifsup\suptrue
 \let\Ginclude@graphics\@org@Ginclude@graphics 
\begin{document}

\title[TensorPlan and the Few Actions Lower Bounds]{TensorPlan and the Few Actions Lower Bound for Planning in MDPs under Linear Realizability of Optimal Value Functions}~

\altauthor{%
 \Name{Gell\'ert Weisz}\\
 \addr{DeepMind, London, UK}\\
 \addr{University College London, London, UK}
 \AND
 \Name{{Cs}aba Szepesv\'ari}\\
 \addr{DeepMind, London, UK}\\
 \addr{University of Alberta, Edmonton, Canada}
 \AND
 \Name{Andr\'as Gy\"orgy}\\
 \addr{DeepMind, London, UK}
}
\maketitle

\begin{abstract}
We consider the minimax query complexity of online planning with a generative model in fixed-horizon Markov decision processes (MDPs) with linear function approximation. 
Following recent works,
we consider broad classes of problems where  
either 
(i) the optimal value function $v^\star$ or 
(ii) the optimal action-value function $q^\star$ lie in the linear span of some features; or
 (iii) both $v^\star$ and $q^\star$ lie in the linear span when restricted to the states reachable from the starting state. 
Recently, \citet{weisz2020exponential} showed that under (ii) the minimax query complexity of any planning algorithm 
is at least exponential in the horizon $H$ or in the feature dimension $d$ when the size $A$ of the action set can be chosen to be exponential in $\min(d,H)$. 
On the other hand, for the setting (i), \citet{weisz2021query} introduced TensorPlan, a planner whose query cost is polynomial in all relevant quantities when the number of actions is fixed.
Among other things, these two works left open the question whether polynomial query complexity is possible when $A$ is subexponential in $\min(d,H)$. 
In this paper we answer this question in the negative: we show that an exponentially large lower bound holds when $A=\Omega( \min(d^{1/4},H^{1/2}))$, under either (i), (ii) or (iii). In particular, this implies a perhaps surprising exponential separation of query complexity compared to the work of \citet{du2021bilinear} who prove a polynomial upper bound when (iii) holds for all states. Furthermore, we show that the upper bound of TensorPlan can be extended to hold under (iii) and, for MDPs with deterministic transitions and stochastic rewards, also under (ii).
\end{abstract}

\begin{keywords}%
  Reinforcement Learning, Planning, Online Planning, Linear Function Approximation, Information Theoretic Lower Bound, Sample Complexity%
\end{keywords}

\section{Introduction}
We are concerned with the query complexity of \textbf{online planning} in fixed-horizon Markov decision processes (MDPs) 
 with large state spaces and finite action sets when the planner is used in a \textbf{closed-loop} configuration.
In each step of the closed-loop process, the planner is \textbf{called} with the state of the process,
after which it is allowed to query a simulator of the MDP until it decides to stop. Then it needs to return an action, which is used to move the state of the process. 
This is repeated until the time for the episode runs out.
The goal of the planner is to maximize the total expected reward incurred in the episode. 

To deal with large state spaces, 
the planner is helped by giving it access to features (elements of $\mathbb{R}^d$, the $d$-dimensional Euclidean space). The features are associated with the states, but the planner is only given the features of that states that it encounters either in response to a query or when the planner is called with a state.
The features are assumed to be such that the optimal value $v^\star(s)$ at any state $s$ is equal to 
the linear combination of the features at the state, where the vector $\theta^\star\in \mathbb{R}^d$ formed by the coefficients is fixed (regardless of the state) but unknown. An MDP together with a feature-map (regardless of whether $v^\star$ is realizable or not) is called a \textbf{featurized MDP}. 
We also consider the analogous case when the linear combination of features, which can now also depend on the actions, gives the optimal action-value function $q^\star$, 
as well as the case when the planner has access to two sets of features, one for representing the optimal value function and another one for representing the optimal action-value function. 

The model described so far, regardless of which value functions are realizable, is called planning with \textbf{local access} as the planner can query the simulator for transitions and associated features only at states previously encountered. 
We also consider planning with \textbf{global access} to the features where the planner is given 
the set of all the states and associated features in advance (no queries required), and the option to query the simulator for transitions at any state of its choice. %

Fix a set of featurized MDPs $\cM$ and a positive real $\delta$.
We say that a planner $P$ is \textbf{$\delta$-sound} for $\cM$ if 
for any $M\in \cM$ and any start state $s$ of the underlying MDP,
the total expected value the planner achieves in an episode 
when used in $M$ from $s$ 
is at most $\delta$ worse than the optimal value associated with that start state. 
The set of $\delta$-sound planners for $\cM$ under the global access (local access) model is denoted by 
$\cP_{\mathrm{GA}}(\cM, \delta)$ (respectively, by $\cP_{\mathrm{LA}}(\cM, \delta)$).

A planner's \textbf{query cost} is the worst-case expected number of queries it ever uses in a call.
For the global access model and a featurized MDP $M$ this is denoted by $q_{\mathrm{GA}}(P,M)$, while for the local access model this is denoted by $q_{\mathrm{LA}}(P,M)$.

Our main results are concerned with the \textbf{minimax query cost} that $\delta$-sound planners incur. 
In particular, for a class of featurized MDPs $\cM$ and $\delta>0$, we denote by $\cC^\star_{\mathrm{GA}}(\cM,\delta)$ the minimax query cost (in short, query complexity) of $\delta$-sound planners over $\cM$ given the global access model: 
\[
\cC^\star_{\mathrm{GA}}(\cM,\delta) = \inf_{P\in \cP_{\mathrm{GA}}(\cM, \delta)} \sup_{M\in \cM} q_{\mathrm{GA}}(P,M).
\]
Similarly, we define
\[
\cC^\star_{\mathrm{LA}}(\cM,\delta) = \inf_{P\in \cP_{\mathrm{LA}}(\cM, \delta)} \sup_{M\in \cM} q_{\mathrm{LA}}(P,M).
\]
Local access is more demanding in that any planner that is $\delta$-sound for $\cM$ under the local access model is automatically $\delta$-sound for $\cM$ under the global access model. It follows that 
\begin{align}
\cC^\star_{\mathrm{GA}}(\cM,\delta)\le \cC^\star_{\mathrm{LA}}(\cM,\delta)
\label{eq:qclaga}
\end{align}
no matter the choice of $\cM$ and $\delta$.

In this paper we are concerned mainly with three classes of featurized MDPs. For $\thetabound\ge 0$ and positive integers $d,H,A$, these are defined as follows:
\begin{itemize}
\item $v^\star$-realizable class: $\cM^{v^\star}_{\thetabound,d,H,A}$ is the class of finite-state-space featurized MDPs with $A$ actions, 
where the feature-vectors are $d$-dimensional, the length of the episodes is $H$.
For any $(M,\phi)$ in this class, 
$M$ is an MDP with some state space $\cS$ and 
random rewards confined to (say)  $[0,1]$, 
the associated feature-map $\phi:\cS \to \R^d$ with $\sup_{s\in\cS}\norm{\phi(s)}_2\le 1$ is such that  for some  $\theta^\star\in \R^d$ with $\| \theta^\star\|_2\le \thetabound$, 
$v^\star_M(s)=\phi(s)^\top \theta^\star$ holds for all $s\in \cS$ where $v^\star_M$ is the optimal value function in $M$.
\footnote{For the sake of simplifying the notation, 
we assume that the states encode the stage index that the process can be at within an episode and we also add a final absorbing state where all actions incur a zero reward. This allows us to use the total expected reward criterion and thus a notation where the dependence on the stage index of values can be suppressed, and also means that we can talk about the initial states in an MDP.}
\item $q^\star$-realizable class: $\cM^{q^\star}_{\thetabound,d,H,A}$ is the class of featurized MDPs as above except that here 
for any $(M,\phi)$ in the class, for $[A]:=\{1,\dots,A\}$, $\phi: \cS \times [A] \to \R^d$ with $\sup_{s\in\cS,a\in[A]}\norm{\phi(s,a)}_2\le 1$ and $\theta^\star\in \R^d$ with $\| \theta^\star\|_2\le \thetabound$, we now require that $q^\star_M(s,a) = \phi(s,a)^\top \theta^\star$ holds for all states $s\in \cS$ and actions $a\in [A]$, where $q^\star_M(s,a)$ is the optimal action-value at $(s,a)$.
\item Reachable-$v^\star/q^\star$-realizable class: $\cM^{v^\star/q^\star \mathrm{ reach}}_{\thetabound,d,H,A}$ is the class of featurized MDPs
as above except that here the MDPs $M$ are associated with two feature-maps, $\phiv: \cS \to \R^d$ and $\phiq:\cS \times [A] \to \R^d$ (their 2-norms bounded by 1 as before), and it is assumed that
there exists some $\theta^\star\in \R^d$ with $\|\theta^\star\|_2\le \thetabound$
such that
 $v^\star_M(s)= \phiv(s)^\top \theta^\star$ and $q^\star_M(s,a) = \phiq(s,a)^\top \theta^\star$ hold for any action $a$ and any state $s$ of the MDP that is reachable from the initial states.
 \footnote{It is without loss of generality that we use that same $\theta^\star$ in the inner products that yield $v^\star_M(s)$ and $q^\star_M(s,a)$: if these parameters are not shared, we can concatenate them with only a factor 2 increase in $d$ and $B$.}
\end{itemize}
Our main results are a lower and an upper bound for the query complexity of planning for $\cM$ where $\cM$ is one of the above classes. The lower bound is for the case when the number of actions grows polynomially with $d\wedge H$, where we define $a\vmin b := \min(a,b)$:
\begin{theorem}[Lower bound with global access, at least $\mathrm{poly}(d\wedge H)$ actions]
\label{thm:lb}
For $\delta,\thetabound,d,H$ sufficiently large, $A\ge d^{1/4}\wedge H^{1/2}$,
\[
\cC^\star_{\mathrm{GA}}(\cM\cap \cM^{\mathrm{Pdet}},\delta) = 2^{\Omega(d^{1/4}\wedge H^{1/2})}\,,
\]
where 
$\cM^{\mathrm{Pdet}}$ is the class of featurized MDPs with deterministic transitions and
\[
\cM \in \{ \cM^{v^\star}_{\thetabound,d,H,A}, \cM^{q^\star}_{\thetabound,d,H,A}, \cM^{v^\star/q^\star \mathrm{ reach}}_{\thetabound,d,H,A} \}\,.
\]
\end{theorem}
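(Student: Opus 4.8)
The plan is to prove the lower bound by the standard recipe for information-theoretic query lower bounds: exhibit a large family of featurized MDPs lying in the small class $\cM\cap\cM^{\mathrm{Pdet}}$ that are pairwise hard to distinguish under the global-access model, and show that no planner making subexponentially many queries can be $\delta$-sound against all of them. Both restrictions in the statement only strengthen the conclusion: shrinking the class to deterministic transitions gives $\cC^\star_{\mathrm{GA}}(\cM\cap\cM^{\mathrm{Pdet}},\delta)\le\cC^\star_{\mathrm{GA}}(\cM,\delta)$, and since global access is the easier model for the planner, by \eqref{eq:qclaga} the bound automatically transfers to local access. Throughout I set the complexity parameter $m:=\ceil{d^{1/4}\wedge H^{1/2}}$, so that the target reads $2^{\Omega(m)}$; since $A$ is an integer with $A\ge d^{1/4}\wedge H^{1/2}$ we have $A\ge m$, while (up to the rounding in the definition of $m$) the horizon and dimension accommodate a skeleton of depth $\Theta(m^2)$ and a feature map of dimension $\Theta(m^4)$ — exactly the two slacks the construction consumes.

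First I would fix a deterministic, layered transition skeleton of depth $\Theta(m^2)$ in which every state offers $A\ge m$ actions, and plant into it a hidden solution drawn from a pool of $2^{\Omega(m)}$ candidate action-sequences: following the planted sequence leads to a terminal state of high value, whereas any deviation is funnelled into a low-value absorbing trap, with the high/low gap chosen to exceed $\delta$. The family is indexed by the planted solution. The design requirement that makes the global-access model genuinely hard is that the feature map must be \emph{identical} across the whole family — the planner is handed all features for free — so that every bit of hidden information lives in the queryable transitions, while realizability must nonetheless hold instance by instance.

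The crux, and the step I expect to be the main obstacle, is therefore constructing this single fixed feature map: it must simultaneously realize $v^\star_M$ (respectively $q^\star_M$) for every member $M$ of the family using only $d=\Theta(m^4)$ coordinates, while keeping $\norm{\phi}_2\le 1$ and $\norm{\theta^\star}_2\le\thetabound$. Since the optimal value at a state must encode, linearly in the features, how far the current state lies along the planted sequence, the features have to carry enough of the combinatorial search state; realizing the necessary comparisons and products against one common $\theta^\star$ is what forces a tensorized (degree-four) encoding and hence the exponent $d^{1/4}$, just as the depth budget forces $H^{1/2}$. Crucially, because these features are the same for all instances they reveal nothing about which candidate is planted, so realizability is maintained without leaking the solution. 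For the reachable-realizability class $\cM^{v^\star/q^\star\,\mathrm{reach}}$ one only needs the identities on reachable states, which relaxes the construction; the $v^\star$-only and $q^\star$-only classes are handled on the same skeleton with the encoding carried, respectively, by the state features $\phiv$ or the state-action features $\phiq$.

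Finally, with the family in hand I would apply Yao's principle: place the uniform prior on the planted solution and lower-bound the average, over this prior, of the number of queries a deterministic planner needs. Because the instances are feature-identical and the planted sequence is concealed behind the trap, any single transition query is informative about the solution only when it happens to probe a solution-revealing edge, which under the prior occurs with probability at most $2^{-\Omega(m)}$; hence after $t$ queries the planner has located the solution with probability at most $t\,2^{-\Omega(m)}$. A planner that has not located it must, at some decision point, choose among $\ge m$ indistinguishable actions and so falls into the trap with constant probability, violating $\delta$-soundness; consequently any $\delta$-sound planner must use $t=2^{\Omega(m)}$ queries on average, and therefore in the worst case, giving $\cC^\star_{\mathrm{GA}}(\cM\cap\cM^{\mathrm{Pdet}},\delta)=2^{\Omega(m)}$. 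The only remaining bookkeeping is to verify the norm bounds and to fix the ``sufficiently large'' thresholds on $\delta,\thetabound,d,H$ so that the value gap genuinely exceeds $\delta$ and $m\ge 1$.
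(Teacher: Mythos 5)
Your high-level architecture matches the paper's: a family of deterministic-transition MDPs indexed by a hidden parameter, an instance-independent feature map obtained by a degree-four tensorization (whence $d^{1/4}$) over a skeleton of depth $\Theta(m^2)$ (whence $H^{1/2}$), and an information-theoretic argument that queries reveal the hidden parameter only with probability $2^{-\Omega(m)}$. However, the construction you describe has two gaps, and they sit exactly at the hard part of the proof.

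First, the ``planted path of high value versus low-value trap'' value structure is incompatible with realizability. If $v^\star$ is (near-)constant and high on an exponentially rare planted set and uniformly low elsewhere, it is essentially an indicator function, and no instance-independent $\phi$ with $\norm{\phi(s)}_2\le 1$, $d=\Theta(m^4)$ and $\norm{\theta^\star}_2\le\thetabound$ can represent all $2^{\Omega(m)}$ such indicators linearly. The paper avoids this by making the value degrade \emph{gracefully}: the hidden parameter is a hypercube vertex $\stheta^\star\in\{-1,1\}^{\sd}$, states carry weight vectors built by bit flips, and $v^\star(s)$ is a product of quadratics $g(\diff(\cdot,\cdot))$ in Hamming distances, hence a genuine degree-four polynomial in $\stheta^\star$ that is nonzero everywhere. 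Suboptimality of a planner that fails to locate $\stheta^\star$ then comes not from a trap but from the multiplicative decay $(25/32)^{9}$ versus the optimal value $\ge 11/32$ (Lemma~\ref{lem:f-bounds}), and establishing that $v^\star$ actually has this closed form requires proving that the greedy-toward-$\stheta^\star$ policy is optimal (Lemmas~\ref{lem:optimise-ks} and~\ref{lem:pi-theta-star-optimal}) --- a step with no analogue in your sketch. Second, your construction reads as having deterministic rewards, but then it cannot work for $\cM^{q^\star}_{\thetabound,d,H,A}\cap\cM^{\mathrm{Pdet}}$: as the paper notes, with deterministic transitions \emph{and} rewards the result of \citet{Wen_Roy_2013} yields a polynomial-query planner for that class. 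The paper's rewards are Bernoulli with parameter $f_{\stheta^\star}(\cdot)$, and the indistinguishability argument needs a change-of-measure step (Lemma~\ref{lem:7-8-m0-to-mtheta}) showing that a single sample of a low-parameter Bernoulli is almost uninformative; your ``only solution-revealing edges are informative'' counting does not by itself handle feedback that is stochastic and nonzero off the solution. Your Yao-style averaging over a uniform prior is a workable substitute for the paper's explicit $\argmin$ choice of $\stheta^\star$ in Eq.~\ref{eq:thetastar-choice}, but the two issues above must be repaired before that step applies.
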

Together with \eqref{eq:qclaga} this result also gives a lower bound on the query complexity of planning when only local access is available to the featurized MDP.

We also prove a result that provides a polynomial upper bound on the corresponding query complexity for a fixed number of actions.
\begin{theorem}[Upper bound with local access]
\label{thm:ub}
For 
$\cM \in \{ \cM^{v^\star}_{\thetabound,d,H,A},
 \cM^{q^\star}_{\thetabound,d,H,A} \cap \cM^{\mathrm{Pdet}},
  \cM^{v^\star/q^\star \mathrm{ reach}}_{\thetabound,d,H,A} \}$, arbitrary positive reals $\delta,\thetabound$ and arbitrary positive integers $d,H$,
\[
\cC^\star_{\mathrm{LA}}(\cM,\delta) = O\Big(\mathrm{poly}\Big( \big(\tfrac{dH}{\delta}\big)^A, \thetabound \Big)\Big)\,.
\]
\end{theorem}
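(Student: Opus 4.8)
The plan is to establish the bound separately for each of the three admissible choices of $\cM$ by exhibiting a suitable variant of \tensorplan, and, for the $v^\star$-realizable class, by invoking the analysis of \citet{weisz2021query} essentially unchanged. Recall the mechanism of \tensorplan: it is an optimistic local-access planner that maintains a candidate set $\Theta\subseteq\R^d$ guaranteed to contain $\theta^\star$; at each call it picks the plausible $\theta\in\Theta$ that is most optimistic about the current state, acts according to the (look-ahead) policy $\theta$ induces, and, whenever a simulated roll-out returns substantially less than the value $\theta$ predicted, excises a neighbourhood of the offending $\theta$ from $\Theta$. The source of the $A$ in the exponent is the encoding of the per-state identity $v^\star(s)=\max_a q^\star(s,a)$ as the vanishing of the degree-$A$ product $\prod_{a\in[A]}\big(v^\star(s)-q^\star(s,a)\big)=0$, which is linear in the symmetric tensor power $\theta^{\otimes A}$ and hence lives in a lifted space of dimension $O(d^A)$; a volumetric/eluder-type count in that space bounds the number of excisions, and thus the query cost, by $O(\poly((dH/\delta)^A,\thetabound))$. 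For $\cM=\cM^{v^\star}_{\thetabound,d,H,A}$ this is exactly the statement of \citet{weisz2021query}, with the $q^\star(s,a)$ terms produced by one-step look-ahead, $q^\star(s,a)=\E[r(s,a)]+\E[\phi(s')]^\top\theta^\star$, whose expectations are estimated from simulator queries. The one observation common to both extensions is that \tensorplan only ever evaluates features and queries the simulator at states reachable from an initial state, so realizability on the reachable set already suffices to run the algorithm and to drive its analysis.

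For the reachable-$v^\star/q^\star$ class $\cM^{v^\star/q^\star\,\mathrm{reach}}_{\thetabound,d,H,A}$ both feature maps are supplied, so I would run \tensorplan using $v_\theta(s)=\phiv(s)^\top\theta$ as the value estimate and $q_\theta(s,a)=\phiq(s,a)^\top\theta$ for action selection, maintaining in $\Theta$ two families of constraints certified along roll-outs: the degree-$A$ identities $\prod_{a}\big(\phiv(s)^\top\theta^\star-\phiq(s,a)^\top\theta^\star\big)=0$, and the linear Bellman identities $\phiq(s,a)^\top\theta^\star=\E[r(s,a)]+\E[\phiv(s')]^\top\theta^\star$. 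Crucially, because $v^\star$ is realized linearly through $\phiv$, the next-state term $\E[\phiv(s')]^\top\theta^\star$ is linear in the averaged successor features and is estimated by sampling successors, so stochastic transitions pose no difficulty; and because the single $\theta^\star$ realizes both $v^\star$ and $q^\star$ on the reachable set, every constraint is certifiable using only reachable-state features. The excision count is then governed by the same $O(d^A)$-dimensional tensor lift as before, yielding the claimed bound.

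For the deterministic class $\cM^{q^\star}_{\thetabound,d,H,A}\cap\cM^{\mathrm{Pdet}}$ no value features are provided, so I would instead take $v_\theta(s)=\max_{a}\phiq(s,a)^\top\theta$, which realizes $v^\star$ at $\theta^\star$, and enforce the Bellman identities $\phiq(s,a)^\top\theta^\star=\E[r(s,a)]+\max_{a'}\phiq(s',a')^\top\theta^\star$. Here determinism is essential: the successor $s'$ of $(s,a)$ is a single, observable state, so $\max_{a'}\phiq(s',a')^\top\theta^\star$ is computed exactly from one query and the constraint, after expanding the max as a degree-$A$ product, is again linear in $\theta^{\otimes A}$. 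Under stochastic transitions the same constraint would read $\phiq(s,a)^\top\theta^\star=\E[r(s,a)]+\E_{s'}\big[\max_{a'}\phiq(s',a')^\top\theta^\star\big]$, an expectation of a maximum that does not factor through any fixed tensor of the successor features and so breaks the lift; this is precisely why the $q^\star$ bound is claimed only for deterministic transitions. With the lift in place the analysis reduces to that of the previous case.

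The step I expect to be the crux is the soundness-and-counting argument under realizability that holds only on the reachable set. Two things must be threaded through carefully: first, that the optimistic look-ahead policy induced by any plausible $\theta$ never drives the process off the region where the realizability identities are available --- which follows from locality but interacts delicately with the optimism used to select $\theta$; and second, that each detected shortfall excises a quantifiably large portion of $\Theta$ in the $O(d^A)$-dimensional lifted space, so that the volumetric bound still caps the number of refinements at $\poly((dH/\delta)^A)$. For the deterministic $q^\star$ case the additional delicate point is the exact computability of $\max_{a'}\phiq(s',a')^\top\theta^\star$ afforded by determinism, on which the entire tensor encoding rests.
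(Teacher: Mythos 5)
Your overall architecture --- invoke \citet{weisz2021query} verbatim for $\cM^{v^\star}_{\thetabound,d,H,A}$, use locality of \tensorplan for the reachable class, and exploit determinism for the $q^\star$ class --- matches the paper, but the execution diverges in the latter two cases. For $\cM^{v^\star/q^\star\,\mathrm{reach}}_{\thetabound,d,H,A}$, your own key observation (that \tensorplan only ever touches states in $\reach_M(s_0)$) is exactly the paper's entire proof: one restricts $M$ to its reachable set, notes the restricted MDP lies in $\cM^{v^\star}_{\thetabound,d,H,A}$, and invokes the existing theorem as a black box, discarding $\phiq$ altogether. Your proposal to additionally maintain $\phiq$-based Bellman constraints inside $\Theta$ is unnecessary and would force you to redo the \tensorplan analysis for a modified algorithm; the simpler route your observation already enables is the one to take. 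For $\cM^{q^\star}_{\thetabound,d,H,A}\cap\cM^{\mathrm{Pdet}}$ the paper does something genuinely different from what you propose: it builds a delayed-action MDP $\bar M$ whose states are pairs $(s,a)$, shows by backward induction that $\bar v^\star((s,a))=q^\star(s,a)$ (this is where determinism enters --- with stochastic transitions one would need $\max_{a'}\E[q^\star(S',a')]=\E[\max_{a'}q^\star(S',a')]$, which fails), sets $\barphiv((s,a))=\phiq(s,a)$ with one extra coordinate for the new initial state, and then runs unmodified \tensorplan on $\bar M$ through a wrapper simulator. This is a black-box reduction to Theorem~\ref{thm:tensorplan-original}. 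Your alternative --- defining $v_\theta(s)=\max_a\phiq(s,a)^\top\theta$ and lifting the identity $\prod_{a'}\bigl(q_\theta(s,a)-r(s,a)-q_\theta(s',a')\bigr)=0$ into $[1,\theta]^{\otimes A}$ --- is structurally plausible, but your closing claim that ``the analysis reduces to that of the previous case'' is the one real gap: replacing the linear value representation by a maximum of linear functions changes the optimistic selection step (you now maximize a piecewise-linear function over $\sol(\bfitDelta)$), the anchoring of the consistency test, and the definition of the excised constraints, so the soundness and eluder-counting arguments of \citet{weisz2021query} do not apply verbatim and would have to be re-derived. The paper's delayed-MDP construction exists precisely to avoid that re-derivation; if you want a self-contained proof you should either carry out the modified analysis in full or adopt the reduction.
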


The rest of the paper is organized as follows: 
In the next section, we discuss these result and their relationship to existing work. This is followed by 
Section~\ref{sec:prelim} that introduces our notation and gives the precise problem description. 
The proof of Theorem~\ref{thm:lb} is presented in Section~\ref{sec:lb}:
after a brief intuitive overview in Section~\ref{sec:proof-overview}, 
for modularity, a simplified ``abstract game'' is introduced and is shown to be hard to solve in Section~\ref{sec:abstract-game}; the remaining proof turns the abstract game into a featurized MDP with a low action count that is similarly hard to solve for a planner;
this is summarized in Section~\ref{sec:mdp-summary} before the arguments are presented in detail.
Finally, the proof of Theorem~\ref{thm:ub} is given in Section~\ref{sec:ub}.

\section{Discussion and related works}
\label{sec:disc}
A great many problems of  interest can be formulated as optimal sequential decision making in a stochastic environment. 
If the model of the environment is given or learned with a sufficient accuracy, one only has to figure out how to use the model to find good actions. This is the problem addressed in planning.
An elegant, minimalist approach to describe stochastic controlled environments is to adopt the language of
MDPs.
The price of simplicity (and thus generality) is that efficient planning in large state-spaces is intractable, a phenomenon pointed out by \citep{bellman57} 
and today informally referred to as Bellman's curse of dimensionality. 
While dynamic programming methods in MDPs with $S$ states, $A$ actions and a horizon of $H$ can solve the planning problem with $\mathrm{poly}(S,A,H)$ resources 
\citep{Tseng90,Ye11:MOR,Scherrer2016-gd},
in the lack of extra information, 
the query cost of the easier problem of online planning
\citep[Chapter 6][]{kolobov2012planning}
 is at least $\Omega(A^H)$ when the number of states is unbounded \citep{kearns2002sparse}.
An intriguing approach
to avoid intractability when both $S$ and $H$ are large is the use of ``function approximation'' which promises to empower planners to extrapolate beyond the states that the planner has encountered.
This approach has been proposed
shortly after MDPs have been introduced when it was observed that in various problems of practical interest,
value functions that the dynamic programming algorithms aim to compute can be well approximated with the linear combination of only a few basis functions, which themselves can be guessed by studying the structure of the problem to be solved \citep{BeKaKo63,SchSei85}.
This raises the question of whether under such a favorable condition a provably efficient planner exist, i.e., whether the curse can be sidestepped.

While this question was arguably one of the main driving forces behind much of the research in operations research and reinforcement learning since the beginnings, most
of the early results focused on the case when the function space underlying the features have a certain completeness property when dynamic programming algorithms can be successfully adopted \citep[e.g.,][]{BeTs96,tsitsiklis1996feature,Munos03,Munos05,szemu:avi2005}. %
For more recent works in this, and some other related directions, see, e.g., 
\citep{Du_Kakade_Wang_Yan_2019,LaSzeGe19,du2021bilinear} and the references therein.

While interesting, these works left open the question of whether efficient planners exist in the case when the function space may lack the completeness property but is still able to represent the optimal value function.
The first results in this direction are quite recent
\citep{Wen_Roy_2013,weisz2020exponential,weisz2021query,du2021bilinear}.
Of these, the closest to our results is a result of 
\citet{weisz2020exponential} who proved an exponential lower bound:
\begin{theorem}[\citealp{weisz2020exponential}, Theorem 9, lower bound for exponentially many actions]\label{thm:exp-lb-exp-a}
For any $\delta>0$ sufficiently small,
and $H,d,\thetabound$ sufficiently large, if
$A=2^{\Omega(d\wedge H)}$ %
then \footnote{Recall that $a\wedge b=\min(a,b)$.}
\[
\cC^\star_{\mathrm{GA}}(\cM^{q^\star}_{\thetabound,d,H,A}\cap \cM^{\mathrm{Pdet}},\delta)  =  2^{\Omega(\ld\vmin H)}\,.
\]
\end{theorem}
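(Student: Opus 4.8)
The plan is to prove the lower bound by the standard recipe for minimax query-complexity bounds: exhibit a large family of deterministic-transition, $q^\star$-realizable MDPs that share a single feature map but are pairwise hard to tell apart with few simulator queries, and then reduce the minimax query cost to the average cost against a fixed prior over this family (Yao's principle, to handle randomized planners). Since $\cC^\star_{\mathrm{GA}}$ is at most $\cC^\star_{\mathrm{LA}}$, and the lower bound is under global access, the construction must even survive handing the planner all features for free.

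First I would build the feature skeleton. Fix a budget $m=\Theta(d\wedge H)$ and take a collection of ``probe'' directions $\{u_a\}_{a\in[A]}\subset\R^d$ that are nearly orthonormal (a spherical code or a random packing on the unit sphere), so that $\langle u_a,u_{a'}\rangle$ is small for $a\ne a'$ while $\|u_a\|_2=1$; having $A=2^{\Omega(m)}$ actions is exactly what lets us pack this many directions. The states are laid out in $H$ stages; at each stage the planner plays one of the $A$ actions, and $\phi(s,a)$ is assembled from these probe directions together with a few bookkeeping coordinates tracking the stage and accumulated reward. The hidden parameter of an instance is a target sequence $\omega=(a_1^\star,\dots,a_H^\star)$ of ``correct'' actions (a codeword); rewards are deterministic and arranged so that positive reward is collected at stage $t$ only if the planner has played the prefix $(a_1^\star,\dots,a_t^\star)$, while any deviation routes the process into a zero-reward absorbing state. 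This yields $2^{\Omega(m)}$ instances inside $\cM^{q^\star}_{\thetabound,d,H,A}\cap\cM^{\mathrm{Pdet}}$, all presenting the same revealed features.

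Next I would verify realizability and the norm constraints, which is where the deterministic transitions are essential. Because the dynamics are deterministic, for each target $\omega$ one can propagate rewards backwards to obtain $q^\star$ in closed form, and by construction $q^\star(s,a)$ becomes an affine readout of the inner products $\langle \phi(s,a),\theta^\star_\omega\rangle$, where $\theta^\star_\omega$ is a bounded combination of the probe directions selected by $\omega$. The near-orthogonality of the $u_a$ is what simultaneously keeps $\|\theta^\star_\omega\|_2\le\thetabound$ and $\|\phi\|_2\le 1$ while still letting $q^\star$ separate the one good action from its $A-1$ decoys at each stage by a fixed margin. Putting a uniform prior on $\omega$, the indistinguishability property is that, before the planner has queried the exact rewarding prefix, the simulator's responses are identical across every instance whose target agrees with the queries issued so far, so each query eliminates only a vanishing fraction of the surviving targets. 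Hence a planner making $2^{o(m)}$ queries finds the rewarding path with probability $o(1)$, and on the complementary event it forgoes a $\Theta(1)$ chunk of value, exceeding $\delta$ for $\delta$ small; averaging contradicts $\delta$-soundness, giving $\cC^\star_{\mathrm{GA}}=2^{\Omega(d\wedge H)}$.

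I expect the main obstacle to be the tension between realizability and concealment: the feature map must literally encode $q^\star$ through $\theta^\star_\omega$, yet the global-access model hands the planner every feature, so observing those features must leak essentially nothing about which action is optimal. Resolving this requires the probe directions to be at once a good code for realizability (small cross-correlations, controlled norms so $q^\star$ stays linear with bounded $\theta^\star_\omega$) and an information-theoretic haystack (exponentially many near-indistinguishable targets); quantifying how little a single reward observation can reveal about $\theta^\star_\omega$, and checking that the per-stage separation margin survives the near-orthogonality slack, is the delicate part of the argument.
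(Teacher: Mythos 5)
First, a point of order: this theorem is not proved in the present paper at all --- it is quoted from \citet{weisz2020exponential} (their Theorem 9), so there is no in-paper proof to compare against; the closest relative here is the new construction of Section~\ref{sec:lb}, which descends from the same ideas. Your high-level recipe (pack $2^{\Omega(d\wedge H)}$ nearly orthogonal probe directions, hide a target, argue indistinguishability against a uniform prior, pass from global to local access for free) does match the spirit of the cited argument.

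However, the specific mechanism you propose has a genuine gap, and it sits exactly at the point you yourself label ``the delicate part.'' If any deviation from the secret prefix $(a_1^\star,\dots,a_t^\star)$ routes the process to a zero-reward absorbing state, then $q^\star(s,a)$ is essentially an indicator of an exact prefix match times a future value, i.e.\ a highly non-linear function of the codeword $\omega$; there is no bounded $\theta^\star_\omega$ and $\omega$-independent feature map with $\norm{\phi(s,a)}_2\le 1$ for which such an indicator equals $\ip{\phi(s,a),\theta^\star_\omega}$ with a constant per-stage margin --- you would be asking $d$ real coordinates to encode $H\log_2 A$ independent secrets with constant separation. Relatedly, your indistinguishability claim (responses are \emph{identical} across all instances agreeing with the queries so far) is incompatible with realizability: with deterministic rewards tied to $q^\star=\ip{\phi,\theta^\star_\omega}$, off-target responses necessarily vary with $\omega$ and leak information. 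The construction of \citet{weisz2020exponential} (and, analogously, Section~\ref{sec:mdp-summary} here) resolves both tensions differently: the hidden parameter is a single direction rather than a length-$H$ codeword; the optimal value decays \emph{multiplicatively} through a product of inner-product terms whose running value is recorded in the (deterministic) state and folded into the feature as a scalar multiplier, which is what makes $q^\star$ linear (after tensorization) in one bounded $\theta^\star$; and the information leaked by off-target queries is controlled not by exact equality of responses but by showing that the ``reveal'' events are rare and bounding the cumulative effect via a likelihood-ratio argument (cf.\ Lemma~\ref{lem:7-8-m0-to-mtheta} together with Lemma~\ref{lem:close-to-corner-ct}). Without these ingredients your family of instances is either not realizable or not indistinguishable, so the proposal as written does not go through.
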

According to this result,
 as long as there are exponentially many actions,
planning remains intractable even
for featurized MDPs where the features provided realize the optimal action-value function of the associated MDP
and even if the MDPs are deterministic.
Note that the exponential lower bound in this theorem is nontrivial since the query complexity of finding a good approximation to a function that lies in the span of $d$ features from input-output examples is polynomial in the number of features regardless the cardinality of the input domain of the function, %
hence, the intractability in the above result cannot be solely attributed to the presence of a large action set.

\begin{quotation}
\emph{
Thus, an intriguing question is whether planning for the same setting as considered by that of Theorem~\ref{thm:exp-lb-exp-a}
but with subexponential number of actions is tractable.}
\end{quotation}

This question is partially answered by
Theorem~\ref{thm:lb}, which states that even with an action count that is polynomial in $d$ and $H$, planning remains intractable for the same class of MDPs.
This theorem also extends the result to two additional settings.
The remaining problem then is 
whether planning is tractable when $2\le A = o(d^{1/4}\wedge H^{1/2})$.
For a constant number of actions, Theorem~\ref{thm:ub} answers this question in the positive, though part of this theorem
when $\cM = \cM^{v^\star}_{\thetabound,d,H,A}$ has been proved earlier by \citet{weisz2021query} and in fact the proof of 
Theorem~\ref{thm:ub} for the remaining two cases follows closely their proof.
The importance of
these extensions we give here is that they complement the lower bounds we prove. 
(Sadly, the featurized MDP classes used in these results are incomparable in the sense that 
we know of no general way of transforming results from one class to another, hence the need for the separate proofs.)

\begin{quotation}
\emph{
Intriguingly, even the new results leave open whether online planning with local access is tractable under $q^\star$ realizability when the MDPs involved have \textbf{stochastic transition dynamics and rewards} while the number of actions is fixed.}
\end{quotation}

In fact, even though our upper bound holds generally for the $v^\star$-realizable and reachable-$v^\star/q^\star$-realizable classes of MDPs, for the $q^\star$-realizable class our upper bound only holds for MDPs with deterministic transitions. %
If in addition to the transitions, the rewards are also deterministic,
the result of \citet{Wen_Roy_2013} can be used to show a polynomial query (and even computational) complexity for online planning with local access.
While the compute cost would depend linearly on the number of actions, the number of actions would not even appear in the query cost. 
(This result can also be proved directly by arguing that 
the subspace that contains $q^\star$ is either trivial, or a rollout with any action sequence will provide new information that can be used to decrease the dimension of this subspace by one.)

Given that our upper bound is polynomial when the number of actions is fixed, one may speculate that when the number of actions is large, perhaps one should replace each stage of an episode with $\log_2(A)$ stages, where actions would be chosen by determining their bits one by one, in a sequential fashion. The difficulty then is that this calls for an extension of the state space and a new, suitable feature-map. If one could derive a new, suitable feature-map given the old one and other information available during planning,  this would result in a planner with query complexity $O(\mathrm{poly}( \tilde{d}H\log_2(A)/\delta, \thetabound ))$, showing a very mild dependence on the number of actions provided that $\tilde{d}$, the dimensionality of the new feature-map can be kept small. Sadly, our lower bound tells us that this \textbf{action binarization} approach cannot work when the number of actions is at least $\Omega(d^{1/4}\wedge H^{1/2})$.

\begin{table}[t]
\centering
\begin{tabular}{|c|c|c|c|c|}
\hline
Publications & Action count & MDP class & $\poly(\cdot)$ sample  \\
      &     &            & complexity? \\ \hline
\rowcolor{LightGreen} \citet{Wen_Roy_2013} & any & $\cM^{q^\star}_{\thetabound,d,H,A}\cap \cM^{\mathrm{det}}$ & \cmark  \\ \hline 
\rowcolor{LightGreen} \citet{du2021bilinear} & any & $\cM^{v^\star/q^\star}_{\thetabound,d,H,A}$ & \cmark  \\ \hline 
\rowcolor{LightGreen} \citet{weisz2021query} & $\ordo(1)$ & $\cM^{v^\star}_{\thetabound,d,H,A}$ & \cmark  \\ \hline 
\rowcolor{LightRed} \citet{weisz2020exponential} & $2^{\Omega(\ld\wedge H)}$ & $\cM^{q^\star}_{\thetabound,d,H,A}\cap \cM^{\mathrm{Pdet}}$ & \xmark \\ \hline \hline
\rowcolor{LightRed} This work & $\Omega(\ld^{1/4}\wedge H^{1/2})$ & $\cM^{q^\star}_{\thetabound,d,H,A}\cap \cM^{\mathrm{Pdet}}$ & \xmark  \\ \hline
\rowcolor{LightRed}     \ditto     &         \ditto                    & $\cM^{v^\star}_{\thetabound,d,H,A}\cap \cM^{\mathrm{Pdet}}$ & \xmark  \\ \hline
\rowcolor{LightRed}     \ditto     &         \ditto                    &$\cM^{v^\star/q^\star \mathrm{ reach}}_{\thetabound,d,H,A}\cap \cM^{\mathrm{Pdet}}$  & \xmark  \\ \hline
\rowcolor{LightGreen} \ditto     & $\ordo(1)$ & $\cM^{q^\star}_{\thetabound,d,H,A}\cap \cM^{\mathrm{Pdet}}$ & \cmark \\ \hline 
\rowcolor{LightGreen} \ditto     & \ditto & $\cM^{v^\star}_{\thetabound,d,H,A}$ & \cmark \\ \hline 
\rowcolor{LightGreen} \ditto     & \ditto & $\cM^{v^\star/q^\star \mathrm{ reach}}_{\thetabound,d,H,A}$ & \cmark \\ \hline 
\end{tabular}
\caption{Comparison of various query complexity results for online planning with global access, and features realizing the optimal value or action-value function. 
The symbol $ \cM^{\mathrm{det}}$ stands for the class of finite MDPs with deterministic transitions and rewards.
\cmark~indicates the existence of a sound planner with query cost polynomial in relevant parameters (excluding $S$ and $A$); \xmark~indicates that such a planner does not exist.}
\label{tab:comparison-of-prior-work}
\end{table}

Recently, the topic of online learning with good features has also seen many new results. As opposed to planning, here there is no simulator that can reset the state, and the only reset possible is to the start state of the MDP.
As such, this is a harder setting than online planning.
We would like to emphasize two results in this topic closely related to Theorem~\ref{thm:ub}.
First, the work of \citet{du2021bilinear} implies that %
as long as all the features are given in advance (as in global access), regardless the number of actions $A$, the class 
$\cM^{v^\star/q^\star}_{\thetabound,d,H,A}$ enjoys a minimax query complexity of order $\mathrm{poly}(\thetabound,d,H)$.
For this result, we define the class 
$\cM^{v^\star/q^\star}_{\thetabound,d,H,A}$
like the class
$\cM^{v^\star/q^\star \mathrm{ reach}}_{\thetabound,d,H,A}$,
except that realizability is required to hold over the \textbf{entire state-space}, and not only for states reachable from the initial states.
Note that while according to Theorem~\ref{thm:lb}, 
online planning with \textbf{global (and thus also local) access} over $\cM^{v^\star/q^\star \mathrm{ reach}}_{\thetabound,d,H,A}$ is \textbf{intractable},
the result just mentioned implies that online planning with \textbf{global access} over $\cM^{v^\star/q^\star}_{\thetabound,d,H,A}$ is \textbf{tractable}.
Thus, while it is immediate from the definitions that 
\[
\cM^{v^\star/q^\star}_{\thetabound,d,H,A} \subset \cM^{v^\star/q^\star \mathrm{ reach}}_{\thetabound,d,H,A},
\]
the two results together imply 
that the class on the right-hand side (RHS) is  substantially larger than the one on the left-hand side (LHS).
However, this difference disappears if in addition to the feature-maps, the planner 
working with the class on the RHS 
is also provided with the reachable subset of the state-space.
Indeed, in this case, the MDP that the planner works with can be redefined by removing all the unreachable states, leaving a featurized MDP that belongs to the class on the LHS.
The two results together thus indicate that this extra piece of knowledge is hard to obtain.
Indeed, as this information is not provided in the \textbf{local access} model when the state space is redefined to reachable states only, Theorem~\ref{thm:lb}
implies that online learning with \textbf{local access} is \textbf{intractable}, which proves the first exponential information theoretic separation result between local and global access that the authors are aware of (cf. \citealp{du2021bilinear}).

The fact that the set of reachable states is hard to obtain should not be too surprising. In robotics and many other problems the reachable part of the state space is known to have a fairly complicated geometry.  In such applications, assuming that the features describe the value functions over the whole state space means that they encode the complicated shape of these reachable sets, as well. As this looks difficult to achieve, the class that requires realizability only for reachable states appears more attractive
even though for this class, per our lower bound, tractability only holds when the number of actions is relatively small.
For convenience, we summarize the results discussed so far in Table~\ref{tab:comparison-of-prior-work}.

The second result of interest in online learning with good features is due to \citet{wang2021exponential}, who gave an exponential lower bound in the flavor of Theorem~\ref{thm:exp-lb-exp-a}
by adapting the hard MDP construction of \citet{weisz2020exponential} to satisfy a \textbf{constant suboptimality gap} between the action values of the best and second-best actions for all states.
Instead of exponentially downscaling the values in the more advanced stages, such a result is possible by implementing this reduction effect through zero-reward transitions to the episode-over stage, such that the probability of reaching an advanced stage (instead of the value at such a stage) is exponentially small.
While this does not lead to a hardness result in our online planning setup \citep[at least under global access, see e.g.,][]{Du_Kakade_Wang_Yan_2019}, but we note that we expect similar modifications to the hard MDP class underlying our Theorem~\ref{thm:lb} to lead to a similar, constant suboptimality gap version of the theorem in the online learning case. %

At this stage, the reader may also wonder about the limitations of the present work.
For example, we restricted the MDPs to those that have finite state spaces.
As it turns out, this was done only for the sake of convenience: on the one hand, the lower bounds are not impacted by this restriction,
while, on the other hand, the upper bounds go through with no change to the proofs apart from the occasional need to switch to a more technical, measure-theoretic language.
Similar comments apply to our MDP model which assumes that the same number of actions is available at all states,
or changing the problem to allow misspecification errors (when the features can only represent the target functions only with some positive error).
If one allows for such misspecification errors, the misspecification level $\eta$ (measured by the maximum norm) 
will put a lower bound on the suboptimality gap $\delta$ that the planner can achieve.
The argument of \citet{weisz2021query} can be used in this case and gives that with a polynomial query complexity, one can achieve
suboptimality gaps of magnitude $\delta = \Omega(\mathrm{poly}(H,d) \eta^{1/A})$.
While (similarly) the upper bound of Theorem~\ref{thm:ub} can be adapted to the infinite-horizon discounted MDP setting using the arguments of \citet{weisz2021query},
a more significant limitation of our lower bound (Theorem~\ref{thm:lb}) is that it does not immediately apply to the discounted setting, 
even though the lower bound of \citet{weisz2020exponential} (with exponentially many actions) does.
Indeed, the linear structure of our MDP construction relies on the rewards counting towards $v^\star$ and $q^\star$ with the same weight if they are delayed by a few steps.
The final limitation is that the results do not apply to problems where a controller is needed to work based on only features of the state. 
This is due to the limitation of the online planning protocol that requires state information to be passed to the planner (in addition to the features of the state).  
For settings like this, global planning, which directly aims for arriving at a policy that depends on the states only through the features, is more suitable.

\section{Notation and problem setup}\label{sec:prelim}

The purpose of this section is to introduce the notation we use and the necessary definitions that will allow us to precisely formulate the problems we study. %
We start with the notation. This is followed by a quick review of definitions and basic concepts concerning MDPs. The section will be closed by describing the planning problems considered.

\subsection{Notation}
Let $\N_+ = \{1, 2, \dots \}$ be the set of positive integers, and $\N=\{0\}\cup\N_+$.
Let $\bR$ denote the set of real numbers, $\B_d(r)=\{x\in\R^d\,:\,\norm{x}_2\le r\}$ the $d$-dimensional ball of radius $r$, and let $[i] = \{1,\dots,i\}$ be the set of integers from $1$ to $i$ for an integer $i \in \N_+$. For $i,j\in\N$, we use $[i:j]=\{i,i+1,\ldots,j\}$ if $i\le j$, and $[i:j]=\{\}$ otherwise.
For vectors $a$ and $b$ of compatible sizes, $\ip{a,b}=a^Tb$ denotes their inner product.
For a True or False statement $X$ (possibly depending on random variables),
let $\one{X}$ take $1$ if $X$ is True, and $0$ otherwise. %
Let $a \vmin b=\min(a,b)$ and $a \vmax b=\max(a,b)$.
For an event $E$, let $E^C$ denote its complementary event. 
Let $()$ denote the empty sequence.

\subsection{Episodic Markov decision processes with bounded rewards}
A \textbf{Markov decision process (MDP)} is defined by a tuple $M=(\cS,\cA,Q)$ of states, actions, and a transition-reward-kernel, respectively.
The structure $M$ defines a discrete time
sequential decision making problem where in time step $t=0,1,\dots$, 
an environment responds to an action $A_t(\in \cA)$ of an agent by transitioning 
from its current state $S_t(\in \cS)$ to a new random state $S_{t+1}(\in \cS)$ 
while also generating a random reward $R_{t+1}\in \R$ so that the distribution of $(R_{t+1},S_{t+1})$ given $S_0,A_0,R_1,S_1,\dots,A_{t-1},R_t,S_t,A_t$ 
is given by $Q(\,\cdot\,|\,S_t,A_t)$ regardless of the history before $S_t, A_t$.
Formally, $Q$ is a probability kernel from state-action pairs to reward-state pairs.
For simplicity, it is assumed that $R_{t+1}$ above is supported in $[0,1]$.

To simplify the presentation, 
we assume that the state space is finite. %
As noted beforehand, the definitions and results can be naturally translated to infinite state spaces.
Similarly, assume that the set of actions is finite and $\cA=[A]$ for some integer $A$.

In this work we focus on the \textbf{fixed-horizon undiscounted total expected reward objective}. 
Denoting the horizon by $H$,
under this objective, the goal is to find a \textbf{policy}, a way of choosing actions given the past, such that the total expected reward over $H$ steps is maximized regardless of the initial state of the process. (Formally, a policy is a stochastic kernel from histories to actions.)
The $H$ steps of the process is also called an \textbf{episode}.
As it is well known, the optimal policy, which maximizes the stated objective, depends on the number of steps left before the episode finishes. 
In this work, we will use an equivalent formulation which avoids this dependence.
In this formulation, only the first $H$ rewards can be non-zero, while the process continues indefinitely and the objective is changed to the total undiscounted expected reward. 
To emulate the fixed-horizon setting, one can then create $H$ disjoint copies of the state space, each corresponding to one step of the process while copying the transition structure to transition from one copy to the next one, and add an extra state ($\bot$) such that after $H$ steps this state is reached from which point this state is never left while the reward incurred remains zero regardless of the actions taken.
This is summarized below: %
\begin{assumption}
[Fixed-horizon MDP]
\label{ass:fixed}
The state space $\cS$ satisfies $\cS = \cup_{h=0}^{H} \cS_h$ with $\cS_{H} = \{ \bot \}$ and $Q$ is such that for any $s\in \cS_h$, $h\in [0:H-1]$ and $a\in \cA$, $Q(\cdot|s,a)$ is supported on
 $[0,1]\times ( \{\bot\}\cup\cS_{h+1} )$,
while for $h=H$, this support is $\{0\} \times \cS_{H} = \{0\}\times \{\bot\}$.
In particular, the sets $\{\cS_h\}_{h\in [0:H]}$ are pairwise disjoint.
\end{assumption}
Thanks to this assumption, when writing definitions, we consider the infinite horizon total expected reward criterion. This criterion assigns to a policy $\pi$ used in MDP $M$ from initial state $s\in \cS$ the value $v^\pi(s)$, which is defined as %
\begin{align}
v^\pi(s) = \E_{M,s}^{\pi} \left[ \textstyle \sum_{t=0}^\infty R_{t+1} \right]\,.
\label{eq:v-pi-def}
\end{align}
Here $\E_{M,s}^\pi$ is the expectation corresponding to the probability distribution $\bbP_{M,s}^\pi$ over trajectories of infinite length composed of state-action-reward triplets where this probability distribution arises from using policy $\pi$ in every step, with the first state fixed to $s$, while next states and rewards are generated according to $Q$. %
Under our assumption the value of any policy in any state is well-defined. The value, obviously, depends on $M$, but to minimize clutter this dependence is suppressed. We will also need the action-value function of a policy. This is defined similarly as above, except that one fixes both the initial state and the initial action.
Thus, for $(s,a)\in \cS \times \cA$, 
\begin{align}
q^\pi(s,a) = \E_{M,s,a}^{\pi} \left[ \textstyle \sum_{t=0}^\infty R_{t+1} \right]\,.
\label{eq:q-pi-def}
\end{align}
where 
$\E_{M,s,a}^\pi$ is the expectation corresponding to the probability distribution $\bbP_{M,s,a}^\pi$ over the trajectories as before, except that this time the first state-action pair is fixed to $(s,a)$ instead of just fixing the first state to $s$. 
Note that under Assumption~\ref{ass:fixed}, both $v^\pi$ (mapping states to reals, the \textbf{value function of $\pi$}) and $q^\pi$ (mapping state-action pairs to reals, the \textbf{action-value function of $\pi$}) are well defined and take values in $[0,H]$ and the infinite sums can be truncated after stage $H$.

Define $v^\star:\cS \to \R$ and $q^\star: \cS \times \cA \to \R$, 
the \textbf{optimal value} and, respectively, \textbf{optimal action-value function}
as
\begin{align}
v^\star(s) = \sup_{\pi} v^\pi(s), \qquad
q^\star(s,a) = \sup_{\pi} q^\pi(s), \qquad s\in \cS, a\in \cA\,. \label{eq:qstar-def}
\end{align}
A policy $\pi$ is said to be optimal if $v^\star = v^\pi$. It is well known that in our setting an optimal policy always exists and in fact the policy that uses any maximizer
of $q^\star(s,\cdot)$ when the state $S_t$ is $s$ is an optimal policy. 
This policy, as the choice of the action only depends on the last state, is called \textbf{memoryless}. 
Since the choice is also deterministic, the policy is also \textbf{deterministic}.
A deterministic memoryless policy can be concisely given as a map from states to actions. By slightly abusing notation, in what follows, we will identify such policies with such maps
and write $\pi:\cS \to \cA$ to denote a memoryless deterministic policy.
Given such a policy $\pi:\cS \to \cA$, its value functions $v^\pi$ and $q^\pi$ satisfy
the following equations:
\begin{align}
v^\pi(s)&=q^\pi(s,\pi(s))\,, \\ 
q^\pi(s,a)& = r(s,a) + \sum_{s'\in \cS} P(s'|s,a) v^\pi(s')\,, \qquad  s\in \cS, a\in \cA\,,
\label{eq:q-pi-bellman}
\end{align}
where $P(s'|s,a)$, derived from the transition kernel $Q$, is the probability of arriving at state $s'$ when the process is in state $s$ and action $a$ is taken
while $r(s,a)$ is the expected reward along this transition.
Formally, $P(s'|s,a) = Q([0,1]\times \{s'\}|s,a)$ and $r(s,a) = \int_{[0,1]\times \cS}  r dQ(r,s'|s,a)$.
The coupled equations \eqref{eq:q-pi-bellman} are
known as the Bellman equations for $\pi$ \citep{Put94}.%

Oftentimes in MDPs the rewards and the next states are independently chosen. In this case, $Q(\cdot|s,a)$ takes the form of the ``product'' of a probability kernel $R$ mapping state-action pairs to $[0,1]$ and the probability kernel $P$ mapping state-action pairs to states. In some constructions below, we will thus specify an MDP with the help of two such kernels. %

When the dependence of $v^\pi$, $v^\star$, or $q^\star$ on $M$ is important, we will put $M$ in the index of these symbols. For example, for a policy $\pi$ for $M$,
we will write $v_M^\pi$ to denote its value function in $M$.

\subsection{Online planning with featurized MDPs} %
\label{sec:planning}
The purpose of this section is to make the earlier definitions more formal. While we repeat some of the definitions (in a more precise way) from the introduction, to avoid unnecessary duplications, some definitions (e.g., the definitions of featurized MDPs) are not repeated.

As described in the introduction, in \textbf{online planning} 
a planner is used in a closed-loop configuration.
In our $H$-horizon setting, given an MDP $M=(\cS,\cA,Q)$ satisfying Assumption~\ref{ass:fixed}, 
in step $t\in [0:H-1]$ of using the planner,
the planner is given access to state $S_t$ of the process. By convention, $S_0\in \cS_0$ and thus we also have $S_t\in \cS_t$, for every $t$, where $(\cS_h)_{0\le h \le H}$ 
is the decomposition of $\cS$ from Assumption~\ref{ass:fixed}.
The planner is given access to a \textbf{simulation oracle} that can be queried
with state action pairs $(s,a)\in \cS \times \cA$, to which the oracle responds with a reward-state pair $(R',S')$ generated from $Q$:
\begin{align*}
(R',S')\sim Q(\cdot|s,a)\,.
\end{align*}
The planner is given the freedom to decide which queries and how many of them to use whenever it is called.
Eventually, the planner needs to stop querying and return an action $A_t\in \cA$, which is then used to generate the next state in $M$ and an associated reward:
\[
(R_{t+1},S_{t+1})\sim Q(\cdot|S_t,A_t)\,.
\] 
When used in an MDP $M$, a planner induces a policy $\pi_M$.
Note that 
$\pi_M$ is a stochastic policy (possibly history-dependent, if the planner saves information between its calls),
where the stochasticity comes from the randomness of the entire planner-oracle interaction (and possibly some independent randomization).
As such, $\pi$ itself is not a random element.
The goal of planner design is to find planners that induce near-optimal policies for large classes of MDPs while controlling the total computational cost of planning.
Formally, a \textbf{planner is $\delta$-sound} for class $\cM$ if for any $M=(\cup_h \cS_h,\cA,Q)\in \cM$
and any $s_0\in \cS_0$,
if the planner is used in $M$ while $S_0=s_0$, it holds that $\EE{ \sum_{t=1}^H R_t \,|\, S_0=s_0} \ge v_M^\star(s_0)-\delta$, or, equivalently, %
\begin{align*}
v_M^{\pi_M}(s_0)\ge v_M^\star(s_0)-\delta\,.
\end{align*}
Above, $\E$ is the expectation operator induced by the probability measure $\bbP$
induced
by the interconnection of the planner with the simulation oracle of $M$ and MDP $M$
over the interaction sequences that contain all information that flows between the planner, the simulation oracle and the MDP.
As all queries involve a computation step, a lower bound on the compute cost of planning is the \textbf{query cost}, which is defined as the expected number of queries that the planner uses in a planning step (or call).
For a planner and MDP, the maximal query cost over all possible calls to the planner is the worst-case query cost of using the planner on the MDP. The notion is naturally extended to the concept of query complexity of classes of MDPs by again taking the worst-case over the possible MDPs in the class.

In the special case when the planner is given access to the state space (and feature-map) of the MDP
and can thus call the simulator at any state of the MDP, the problem described so far is known as (online) planning with a \textbf{generative model}, or \textbf{global access}.
As opposed to this, if the planner is not given access to the state space (or the feature-map) and can only query the simulator at states that it has encountered beforehand, we say that the planner has \textbf{local access} only to the simulator.

When interacting with a featurized MDP $(M,\phiv)$ in the local access model, with, say, state features (i.e., $\phiv:\cS \to \R^d$, where $M= (\cS,\cA,Q)$), in step $t$ of the planning process, the planner is called with $(S_t,\phiv(S_t))$, while the simulation oracle returns
\begin{align*}
(R',S',\phiv(S'))\sim Q(\cdot|s,a)
\end{align*}
when queried with $(s,a)$.
In an alternate problem setting 
if the features are defined over state-action pairs, i.e., the planner interacts with a featurized MDP of the form $(M,\phiq)$ with $\phiq:\cS \times \cA \to \R^d$, 
 in step $t$ of the planning process,
the planner is called with $(S_t,\phiq(S_t),(\phiq(S_t,a))_{a\in \cA})$, while the simulation oracle returns
\begin{align*}
(R',S',(\phiq(S',a))_{a\in \cA})\sim Q(\cdot|s,a)
\end{align*}
when queried with $(s,a)$.
When the setting is such that both state and state-action features are available, the interaction is modified accordingly to include both types of features. 
The definitions of query cost and soundness remain the same for these settings.

\section{Lower bound (the proof of Theorem~\ref{thm:lb})}
\label{sec:lb}
In this section we give the proof of Theorem~\ref{thm:lb}. We start by introducing the high level ideas underlying the proof.

\subsection{Overview} %
\label{sec:proof-overview}
We prove the lower bound by designing a class of MDPs where by traversing the MDP, the agent effectively has to pick corners of a $p$-dimensional hypercube, in sequence, until either $K$ picks were made or a pick was sufficiently close to the secret ``solution'' corner 
$\stheta^\star$.
Here, $p\approx H^{1/2}\wedge d^{1/4}$ (large if both $H$ and $d$ are large) and $K\approx H/p$ (large if $H$ is large).
If the agent picks a corner close to the solution, the episode is effectively terminated and the agent receives the highest possible reward achievable from that state.
Otherwise, the agent's next pick has to substantially differ from the previously picked corner.
After each choice, the highest reward achievable shrinks by a penalty factor that is governed by how different the subsequent picks are: picking dissimilar corners results in a larger penalty (i.e., a smaller penalty factor).
Since subsequent picks need to be substantially different, this means that $q^\star$ (or $v^\star$) reduces at an exponential rate throughout the episode until a guess is sufficiently close to the solution or all $K$ picks are exhausted, in which case the agent receives a Bernoulli reward with expectation $\exp(-\Omega(K))$. 
Without additional information, guessing sufficiently close to the solution is a needle-in-a-haystack problem with an exponentially large haystack: with probability above (say) $3/4$,
the secret corner will not be found within $\exp(\Omega(p))$ guesses.
Additional information is not provided to the agent as long as the final reward is 0. 
Since the probability that this Bernoulli outcome is identically zero
for the first $\exp(\Omega(K))$ guesses can be made to be $3/4$ or larger,
if a planner uses at most 
$\exp(\Omega(p \vmin K))$ guesses,
with probability at least $1/2$, neither blind guessing nor the Bernoulli outcomes will lead to success.
Thus, in expectation, any sound planner has to query more than $\exp(\Omega(p \vmin K))$ times.

To achieve realizability of $q^\star$ (or $v^\star$), it is sufficient if the value of the optimal policy is a low-order polynomial of the $p$-dimensional secret solution at any state in the MDP. 
To achieve this, the mechanics of choosing a guess and the penalty factor are carefully chosen in such a way that the optimal policy has a simple ``greedy'' structure that moves any guess as close as possible to the solution.
The value of this greedy optimal policy is then proved to be a $4^{\text{th}}$-order polynomial of 
$\stheta^\star$, which gives rise to a $d\approx p^4$ dimensional feature-map that can realize the optimal values.

For the sake of simplicity and modularity, rather than defining the MDP, we first define 
a simplified ``abstract game'' where an ``abstract planner'' has to guess the above-mentioned 
secret parameter. This abstract game is essentially what has been described in the previous paragraph.
This construction focuses on the information theoretic aspect of the proof, leaving the construction of the MDP with the required realizability properties to the subsequent sections.

\subsection{Abstract game}
\label{sec:abstract-game}

The abstract game has a length parameter $K\in\N_+$ and 
an integer dimensionality 
parameter $\sd\ge2$, which are known to the abstract planner.
Let $W=\{-1,1\}^\sd$.
Let $\bm{0}$ and $\bm{1}$ indicate the $\sd$-dimensional vectors of all zeros and all ones, respectively.
For vectors $x$ and $y$ from $W$, define $\diff(x,y)$ as the Hamming distance between $x$ and $y$, i.e., the number of components where $x$ and $y$ are different. We will use the property of the Hamming distance that it can be written as an (affine) bilinear function of its arguments:
for $\stheta_1,\stheta_2 \in W$,
\begin{align}
\label{eq:diff-def}
\diff(\stheta_1,\stheta_2)=\frac12\left(\sd-\ip{\stheta_1,\stheta_2}\right)\,.
\end{align}
Note that $\diff(\cdot,\cdot)$ is a metric on the set $W$.
Let 
\begin{align}
W^\star=\{w\in W\,:\, \sd/4\le \diff(\bm{1}, \stheta)\le 3\sd/4\} \label{eq:thetastar-distance-to-1}
\end{align}
be the set that will hold the game's secret parameter: $\stheta^\star\in W^\star$. 
For any $k\in\N$, let 
\begin{align}
W^{\circ k}=\{(\stheta_i)_{i\in[k]} \in W^{k}\,:\, \diff(\stheta_{i-1},\stheta_{i})\ge \sd/4 \text{ for } i\in [k]\}\,,
\label{eq:w-circ}
\end{align}
with $\stheta_0:=\bm{1}$ defined for convenience,
be the subset of $k$-length sequences of $W$ where the elements are ``sufficiently far'' from each other. %

The union of these over $k\le K$ is the action set of the bandit-like game.
Given $\stheta^\star$,
the reward function
$f_{\stheta^\star}:\{()\}\cup\bigcup_{k\in[K]} W^{\circ k}\to\R$ (index dropped when clear from context) is defined as follows (again $\stheta_0:=\bm{1}$):%
\footnote{The reason for this form of $f$ will become clear only when the MDP corresponding to the abstract game is defined. For now, let us only note that (1) as the input sequence grows in size, their elements being sufficiently far ensures an exponential rate of reduction of $f$, and (2) $g(x)$ is the second-order Taylor expansion of $(1-1/\sd)^{x}$, which ensures through some inequalities that the optimal strategy for maximizing $f$ is to greedily move towards $\stheta^\star$ in the MDP as fast as possible. A simple optimal policy with a low-order polynomial expression for $f$ allows 
deriving linear features for the MDP's value function.} 
$f(())=g(\diff( \stheta_{0}, \stheta^\star))$, and for $k\in[K]$,
\begin{align*}%
f_{\stheta^\star}\left((\stheta_{i})_{i\in[k]}\right)
&=
\left(\prod_{i\in[k]} g(\diff( \stheta_{i-1},  \stheta_{i}))\right) g(\diff( \stheta_{k}, \stheta^\star))
 & \text{where}\\
g(x)&=1-\frac{x}{\sd}+\frac{(x-1)x}{2\sd^2}\,. %
\end{align*}

The game is sequential. It proceeds in steps where the abstract planner performs a query and receives a corresponding response (both the query and the response may be randomized).
At each step $t\in\N_+$, the abstract planner randomly chooses 
whether to continue or not, and what its output or next query (correspondingly) is.
If it continues, it chooses a sequence length $L_t\in[K]$,
and a sequence $S_t=(\stheta^t_i)_{i\in[L_t]}\in W^{\circ L_t}$.
Otherwise, if it returns, it chooses its output
$S_t=(\stheta^t_i)_{i\in[8]}\in W^{\circ 8}$. Note that the output is confined to have the fixed length of $8$.%
\footnote{The constant $8$ here is sufficiently small to prove that planners cannot guess close enough to $\stheta^\star$ with any of these $8$ attempts, yet large enough so that to achieve a small suboptimality in the MDP problem (that will be derived later), it will be crucial to guess a vector among these $8$ vectors that is close to $\stheta^\star$.}
To distinguish this from the case when the planner continues, we let $L_t=0$ denote that the planner wants to return an output.
Let $N=\min \left\{t\in{\N_+} \,:\, L_t = 0 \right\}$ indicate the step at which the planner returns. Thus, the planner's output is $S_{N}$.

At step $t$, denote the choice of the planner by $X_t=(L_t, S_t)$.
If the planner is not done yet ($L_t>0$, and thus $t<N$) then, in response to the planner's query,
a random response $Y_t=(U_t, V_t, Z_t)\in \{0,1\}\times\{0,1\}\times [0,1]$ is 
generated 
as follows:
\begin{itemize}
\item $U_t$ indicates whether the penultimate component of $S_t$ is close to $\stheta^\star$ (for convenience define $\stheta^t_0=\bm{1}$):
\[U_t = \one{\diff(\stheta^t_{L_t-1}, \stheta^\star)<\sd/4}\,.\]
\item $V_t$ indicates whether the last component of $S_t$ is close to $\stheta^\star$:
\[V_t=\one{\diff(\stheta^t_{L_t}, \stheta^\star)<\sd/4}\,.\]
\item $Z_t$ is distributed as $\Bernoulli(f_{\stheta^\star}(S_t))$ if either $V_t=1$ (the last component of $S_t$ is close to $\stheta^\star$) or $L_t=K$ (all components are used in $S_t$), else $Z_t=0$.
Here, $\Bernoulli$ denotes the Bernoulli distribution. 
This is well-defined as $f_{\stheta^\star}(S_t)\in[0,1]$ by Lemma~\ref{lem:f-bounds}.
\end{itemize}

If, on the other hand, the planner indicates that it is done ($L_t=0$, and thus $t=N$) then there is no feedback, but the payoff (reward) to the planner is
\begin{align}\label{eq:final-abstract-reward}
R=f_{\stheta^\star}\left((\stheta^N_i)_{i\in [k^\star]}\right)
\end{align}
where $k^\star=k^\star(S_t;\stheta^\star)$
denotes the first component of $S_t=(\stheta^N_k)_{k\in [8]}$ that is sufficiently close to $\stheta^\star$, or $8$ if none of them are:
\[
k^\star=\min\{k\in [8]\,:\, k=8\text{ or }\diff(\stheta^N_k, \stheta^\star)<\sd/4\}\,.
\]
For future reference, it will be useful to introduce $\tau_{\stheta^\star}(s)$ 
to denote the first $k^\star(s,\stheta^\star)$ components of $s=(\stheta_i)_{i\in [8]}$ so that 
$R = f_{\stheta^\star}(\tau_{\stheta^\star}(S_t))$.
While the interaction is over at this stage, for simplifying notation, we introduce $Y_t$ and define it as  $Y_t=(0,0,0)$.

This finishes the description of the abstract game; for a given value of $\stheta^{\star}$ we will refer it as ``abstract game $\stheta^{\star}$''. To summarize, in this game, the planner can choose actions from a combinatorially structured action set to collect information for the final round where it needs to choose an action from a smaller (but still combinatorially large) subset of the action set. The feedback is nonlinear. The essence of the information theoretic argument that will follow will be that good planners essentially need to find $\stheta^\star$.

For these information theoretic arguments, as well as the statement of the main result of this section, some extra definitions are necessary.
For $t\in\N_+$, let $F_t=(X_i, Y_i)_{i\in[t-1]}$. For each step $t$ sequentially, if the game is not over yet, i.e., $t-1<N$, the planner $\aplan$ defines the distribution of $X_t$ given $F_t$. Given $F_t$ and $X_t$, the distribution of $Y_t$ is defined as above.
Together, $\aplan$ and $\stheta^\star$ define $\bbP_{\stheta^\star}^\aplan$, the probability distribution over interaction sequences $(X_t, Y_t)_{t\in[N]}$ between the planner and the game, where the sequence needs to satisfy that $L_t>0$ for $t<N$ and $L_N=0$.%
\footnote{Luckily for us, $F_t$ takes values in a finite set, which makes it trivial to show that 
$\bbP_{\stheta^\star}^\aplan$ with the required properties exist.}
The planner is well-defined if $\bbP_{\stheta^\star}^\aplan [N<\infty]=1$.
Let $\E_{\stheta^\star}^\aplan$ be the expectation operator corresponding to $\bbP_{\stheta^\star}^\aplan$.
The abstract planner is sound with worst-case query cost $\bar N$ if for all $\stheta^\star \in W^\star$,
$\E_{\stheta^\star}^\aplan [N-1]\le \bar N$, and $\E_{\stheta^\star}^\aplan [R] \ge 
\max_{s\in \W^{\circ 8}} f_{\stheta^\star}(\tau_{\stheta^\star}(s))-0.01$.
We note in passing that 
$\max_{s\in W^{\circ 8}} f_{\stheta^\star}(\tau_{\stheta^\star}(s))
=f_{\stheta^\star}(())$, i.e., the maximizing sequence is the empty sequence.

The main result of this section is the following claim, which states that the abstract game is hard:
\begin{theorem}\label{thm:abstract-lb}
For any abstract planner that is sound with query cost $\bar N$,
\[
\bar N = 2^{\Omega\left(\sd\wedge K\right)}\,.
\]
\end{theorem}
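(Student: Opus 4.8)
The plan is to run a Bayesian information-theoretic argument: place the secret $\stheta^\star$ uniformly at random on $W^\star$ and show that any planner making subexponentially many queries collects essentially no information about $\stheta^\star$, so it cannot reliably return a sequence whose truncation is close to $\stheta^\star$, and hence cannot meet the soundness guarantee on average (the guarantee holds for every $\stheta^\star\in W^\star$, so in particular after averaging). The central device is a \emph{reference measure} $\bbP_0$ under which every response is the null symbol $(0,0,0)$ regardless of the planner's queries; since $\aplan$ is fixed, under $\bbP_0$ the entire interaction, and in particular the returned output $S_N\in W^{\circ 8}$, is a function of $\aplan$'s internal randomness alone and is therefore independent of $\stheta^\star$. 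It then suffices to show that $\bbP^\aplan_{\stheta^\star}$ and $\bbP_0$ are statistically indistinguishable for a typical $\stheta^\star$ when $\bar N$ is subexponential.

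The key quantitative fact is that a response differs from the null symbol only on an \emph{informative} event, and such events are exponentially rare. At a continuing step with query $X_t=(L_t,S_t)$, the pair $(U_t,V_t)$ can be nonzero only if the penultimate or last component of $S_t$ lies within Hamming distance $\sd/4$ of $\stheta^\star$, and $Z_t$ can be nonzero only if $V_t=1$ or $L_t=K$. I would bound the two sources separately. For the indicator source, the number of points of $W$ within distance $\sd/4$ of any fixed point is at most a $2^{-\Omega(\sd)}$ fraction of $|W^\star|=(1-o(1))2^{\sd}$ (a standard binomial-tail bound), so a single fixed query component falls within $\sd/4$ of a uniform $\stheta^\star$ with probability $2^{-\Omega(\sd)}$. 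For the Bernoulli source, a full-length query ($L_t=K$) that is \emph{not} close satisfies $f_{\stheta^\star}(S_t)\le g(\sd/4)^{K}=2^{-\Omega(K)}$, using the monotonicity of $g$ on $[0,\sd]$ (Lemma~\ref{lem:f-bounds}) and the spacing constraint $\diff(\stheta_{i-1},\stheta_i)\ge\sd/4$ defining $W^{\circ K}$, so $Z_t=1$ with probability $2^{-\Omega(K)}$. Combining, the per-query probability of an informative response is $2^{-\Omega(\sd)}+2^{-\Omega(K)}=2^{-\Omega(\sd\wedge K)}$.

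To turn this into a bound over the whole adaptive, random-length interaction I would sum over steps with a stopping-time argument rather than a crude truncation. Let $T$ be the first informative step. By the coupling, up to step $T$ the query $X_t$ depends only on the all-null history and is thus independent of $\stheta^\star$, so the averaged conditional probability that step $t$ is informative, given that no informative response occurred earlier and that the planner has not yet stopped, is $2^{-\Omega(\sd\wedge K)}$; since the number of query steps preceding $T$ is at most $N-1$, summing gives $\bbP_{\mathrm{avg}}[T\le N-1]\le \E_{\mathrm{avg}}[N-1]\,2^{-\Omega(\sd\wedge K)}\le \bar N\,2^{-\Omega(\sd\wedge K)}$, where $\bbP_{\mathrm{avg}},\E_{\mathrm{avg}}$ average over a uniform $\stheta^\star\in W^\star$ and we used $\E^\aplan_{\stheta^\star}[N-1]\le\bar N$. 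On the complement $\{T>N-1\}$ the coupling makes $S_N$ distributed as under $\bbP_0$, hence independent of $\stheta^\star$; as $S_N$ has only eight components, none lies within $\sd/4$ of $\stheta^\star$ except with probability $8\cdot2^{-\Omega(\sd)}$, and on that event $k^\star=8$, $\tau_{\stheta^\star}(S_N)=S_N$, and $R=f_{\stheta^\star}(S_N)\le g(\sd/4)^{8}$ by the positivity/monotonicity of $g$ and the spacing in $W^{\circ 8}$. Bounding $R\le1$ on the rare events yields $\E_{\mathrm{avg}}[R]\le g(\sd/4)^{8}+8\cdot2^{-\Omega(\sd)}+\bar N\,2^{-\Omega(\sd\wedge K)}$. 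Soundness, however, forces $\E_{\mathrm{avg}}[R]\ge g(3\sd/4)-0.01$ since $f_{\stheta^\star}(())=g(\diff(\bm1,\stheta^\star))\ge g(3\sd/4)$ for $\stheta^\star\in W^\star$ and the empty sequence is optimal. Because $g(\sd/4)^{8}$ is a constant strictly below $g(3\sd/4)-0.01$, these are incompatible unless $\bar N\,2^{-\Omega(\sd\wedge K)}=\Omega(1)$, i.e.\ $\bar N=2^{\Omega(\sd\wedge K)}$.

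I expect the main obstacle to be the coupling in the presence of \emph{adaptive, variable-length} queries together with a random stopping time. One must verify that the event ``no informative response so far'' is measurable with respect to the observed transcript, so that the planner cannot smuggle in dependence on $\stheta^\star$ through it, and that the per-step union bound remains valid even though each $X_t$ is chosen as a function of earlier responses. The cleanest route is to define the informative event step by step and argue that, up to the first informative step, the law of the transcript under $\bbP^\aplan_{\stheta^\star}$ coincides with that under $\bbP_0$ for every $\stheta^\star$, so that $\stheta^\star$ can influence $\aplan$'s decisions only through an informative response, whose total probability is controlled above; the reward separation between $g(\sd/4)^{8}$ and $g(3\sd/4)-0.01$ is then a routine constant computation from the explicit quadratic $g$ and Lemma~\ref{lem:f-bounds}.
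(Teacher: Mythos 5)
Your high-level plan is the paper's: a null reference game $\bbP_0$ in which every response is $(0,0,0)$, the Hoeffding count showing a Hamming ball of radius $\sd/4$ occupies an $e^{-\sd/8}$ fraction of $W$ (Lemma~\ref{lem:close-to-corner-ct}), the bound $f_{\stheta^\star}(S_t)\le(25/32)^{K+1}$ for full-length far queries, an averaging/minimizing choice of $\stheta^\star$, and the final constant separation between $(25/32)^{8}$-ish and $f_{\stheta^\star}(())-0.01\ge 11/32-0.01$. The gap is the step $\bbP_{\mathrm{avg}}[T\le N-1]\le \E_{\mathrm{avg}}[N-1]\cdot 2^{-\Omega(\sd\wedge K)}$. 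The per-step bound you invoke --- that the conditional probability of step $t$ being informative, given an all-null transcript, is uniformly $2^{-\Omega(\sd\wedge K)}$ --- is false for large $t$: conditioning on ``no informative response in the first $t-1$ steps'' pushes the posterior of $\stheta^\star$ onto the part of $W^\star$ not covered by the $2(t-1)$ Hamming balls around earlier query components (and reweights it by the $Z$-survival factors), and once $t\gtrsim e^{\sd/8}$ this conditional probability can approach $1$. Put differently, the number of ``guesses'' the planner gets before the first informative response is governed by the stopping time of the \emph{null} run, and $\E_0[N-1]$ is not bounded by $\bar N$ (a planner that enumerates $W^\star$ and stops only upon a nonzero response has $\E_0[N-1]=\infty$ while $\E_{\stheta^\star}^{\aplan}[N-1]$ is finite for every $\stheta^\star$); so you cannot charge the guesses one-for-one against $\E_{\mathrm{avg}}[N-1]\le\bar N$. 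The same issue touches your claim that the transcript laws ``coincide'' up to the first informative step: they agree only up to the multiplicative factor $\prod_t(1-f_{\stheta^\star}(s_t)\one{l_t=K})$, which is $\Omega(1)$ only for transcripts of length $O(1/\epsilon)$ with $\epsilon=(25/32)^{K+1}$.

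The paper closes exactly this hole with a deterministic truncation: it fixes $n$ as in Eq.~\ref{eq:n-def} (roughly $\min(e^{\sd/8}/16,\,1/(7.5\epsilon))$), applies Markov's inequality under the \emph{true} measure to get $\bbP_{\stheta^\star}^{\aplan}[N-1\ge n]\le\bar N/n$, and analyzes only the first $n$ steps, where at most $2n+8$ guesses are made --- so the union bound costs $(2n+8)e^{-\sd/8}\le 1/8$ (Lemma~\ref{lem:no-reveal-whp}) and the likelihood ratio between $\bbP_{\stheta^\star}^{\aplan}$ and $\bbP_0^{\aplan}$ on the all-null event is at least $(1-\epsilon)^n\ge 7/8$ (Lemma~\ref{lem:7-8-m0-to-mtheta}). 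The $\sd\wedge K$ rate then comes out of the two constraints on $n$ rather than from a per-step union bound over an unbounded horizon. With this truncation inserted in place of your stopping-time inequality, the remainder of your argument (the independence of the output from $\stheta^\star$ on the bad event, the $8\cdot e^{-\Omega(\sd)}$ term, and the reward-separation computation) is sound and reproduces the paper's proof.
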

The proof is given in a number of lemmas. We start with some elementary properties of $f_{\stheta^\star}$: %
\begin{lemma}[Properties of $f_{\stheta^\star}$]
\label{lem:f-bounds}
For any $\stheta^\star\in W^\star$, $k\in \N_+$, $s= (\stheta_{k'})_{k'\in [k]}\in W^{\circ k}$,
the following hold:
\begin{align}
 \frac{11}{32} \le f_{\stheta^\star}(()) &\le \frac{25}{32}\,, \label{eq:f-lb}\\
0<f_{\stheta^\star}(s) &\le \left(\frac{25}{32}\right)^{k+\one{\diff(\stheta_k,\stheta^\star)\ge \sd/4}}  \label{eq:f-ub-last-far}
\end{align}
\end{lemma}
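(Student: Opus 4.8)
The plan is to reduce both bounds to elementary monotonicity of the univariate quadratic $g$. First I would observe that $g(x)=1-x/\sd+(x-1)x/(2\sd^2)$ is an upward-opening parabola whose derivative $g'(x)=-1/\sd+(2x-1)/(2\sd^2)$ vanishes at $x=\sd+1/2$; since $\sd\ge 2$, this vertex lies strictly to the right of $\sd$, so $g$ is strictly decreasing on the entire interval $[0,\sd]$. This is essentially the only structural fact needed, because every argument fed into $g$ is a Hamming distance between vectors of $W=\{-1,1\}^\sd$ and hence lies in $\{0,1,\dots,\sd\}\subseteq[0,\sd]$. Alongside this I would tabulate $g$ at the four values that matter: direct substitution gives $g(0)=1$, $g(\sd/4)=25/32-1/(8\sd)$, $g(3\sd/4)=17/32-3/(8\sd)$, and $g(\sd)=(\sd-1)/(2\sd)=1/2-1/(2\sd)$. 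In particular $g(\sd/4)<25/32$ and $g(\sd)>0$ for every $\sd\ge 2$, while $g(3\sd/4)\ge 11/32$ holds precisely because $3/(8\sd)\le 3/16=6/32$ when $\sd\ge 2$, with equality at $\sd=2$.

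For \eqref{eq:f-lb} I would use that the empty sequence corresponds to $k=0$, so the product in the definition of $f$ is empty and $\stheta_k=\stheta_0=\bm{1}$; thus $f_{\stheta^\star}(())=g(\diff(\bm{1},\stheta^\star))$. By the definition of $W^\star$, the argument satisfies $\sd/4\le\diff(\bm{1},\stheta^\star)\le 3\sd/4$, so strict monotonicity of $g$ together with the tabulated endpoint values yields $11/32\le g(3\sd/4)\le f_{\stheta^\star}(())\le g(\sd/4)\le 25/32$, which is exactly \eqref{eq:f-lb}.

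For \eqref{eq:f-ub-last-far}, positivity is immediate: every factor of $f_{\stheta^\star}(s)$ is $g$ evaluated on $[0,\sd]$, where $g\ge g(\sd)>0$, so the product is strictly positive. For the upper bound I would bound the factors separately. Membership $s\in W^{\circ k}$ forces $\diff(\stheta_{i-1},\stheta_i)\ge \sd/4$ for each $i\in[k]$, so each of the $k$ step-factors is at most $g(\sd/4)<25/32$. It then remains to control the terminal factor $g(\diff(\stheta_k,\stheta^\star))$ by a case split matching the indicator: if $\diff(\stheta_k,\stheta^\star)\ge\sd/4$ it is again at most $g(\sd/4)<25/32$, contributing one further factor of $25/32$ while the indicator equals $1$; otherwise $\diff(\stheta_k,\stheta^\star)\ge 0$ yields only the trivial bound $g(\cdot)\le g(0)=1$ while the indicator equals $0$. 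Multiplying the factors gives $f_{\stheta^\star}(s)\le (25/32)^{k+\one{\diff(\stheta_k,\stheta^\star)\ge \sd/4}}$, as claimed.

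I expect no serious obstacle here: the whole lemma is a bookkeeping exercise once monotonicity of $g$ on $[0,\sd]$ is in hand. The only points demanding genuine care are (i) correctly interpreting the empty-sequence convention $\stheta_0=\bm{1}$ so that $f_{\stheta^\star}(())=g(\diff(\bm{1},\stheta^\star))$, and (ii) tracking the constants tightly enough to see that the hypothesis $\sd\ge 2$ is exactly what delivers the lower constant $11/32$ rather than a larger value, since it is the sole place where $\sd\ge 2$ is actually used.
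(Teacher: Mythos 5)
Your proof is correct and follows essentially the same route as the paper's: establish that $g$ is decreasing on $[0,\sd]$, bound each factor of the product using the $W^{\circ k}$ separation condition and the definition of $W^\star$, and evaluate $g$ at $0$, $\sd/4$, $3\sd/4$, and $\sd$ to extract the constants. Your version is merely more explicit about the derivative computation and the tabulated values (including the observation that $\sd\ge 2$ is what makes $g(3\sd/4)\ge 11/32$ and $g(\sd)>0$), which the paper leaves implicit.
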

\begin{proof}
We prove Eq.~\ref{eq:f-ub-last-far} by first showing that  %
\begin{align}
0<f_{\stheta^\star}(s) &\le \left(\frac{25}{32}\right)^k.
\label{eq:f-ub}
\end{align}
This follows since $f$ is the product of $k+1$ terms, each defined using the function $g$.
Now, notice that $g(x)$
decreases as $x$ increases in the range $0\le x \le \sd$, so for all $k'\in[k]$, 
thanks to $\diff( \stheta_{k'-1},  \stheta_{k'})\ge \sd/4$ which holds since by assumption $s\in W^{\circ k}$, we have
\[
0< g(\sd) \le g(\diff( \stheta_{k'-1},  \stheta_{k'})) \le g(\sd/4) < \frac{25}{32} \,.
\]
This, together with $0<g(0)\le 1$ proves Eq.~\ref{eq:f-ub}.
To finish the proof of Eq.~\ref{eq:f-ub-last-far}, note that if
$\diff(\stheta_k,\stheta^\star)\ge \sd/4$
then, similarly to the previous case, 
we have $0<g(\diff( \stheta_{k},  \stheta^\star)) \le g(\sd/4) < \frac{25}{32}$, 
which implies Eq.~\ref{eq:f-ub-last-far}.
As $\stheta^\star\in W^\star$, $\frac14\sd\le \diff(\bm{1}, \stheta^\star)\le \frac34\sd$. Hence,
$f_{\stheta^\star}(()) = g(\diff(\bm{1}, \stheta^\star))\ge g(\frac34\sd)\ge\frac{11}{32}$ and
$f_{\stheta^\star}(()) \le g(\frac14\sd)\le\frac{25}{32}$.
\end{proof}

Let
\begin{align}\label{eq:n-def}
n=\floor{\min\left(\frac{e^{{\frac{\sd}{8}}}}{16} - 5, \frac{\frac1\epsilon-1}{7.5}\right)}\,,
\end{align}
where
\begin{align*}
\epsilon&=\left(\frac{25}{32}\right)^{K+1}\,.
\end{align*}
For any $\stheta^\star\in W^\star$, let $E_n^{\stheta^\star}$ be the event
when in the first $n$ steps the planner does not hit on any vector that is close to $\stheta^\star$: %
\begin{align*}
E_n^{\stheta^\star} &= 
\bigcap_{t\in [n]}
\Bigg\{ t>N \text{ or } \left(t=N \text{ and } \min_{i\in[8]}\diff(\stheta_i^N,\stheta^\star)\ge \frac{\sd}{4}\right)\\
&\qquad\qquad\qquad \text{ or } 
\left(
t<N \text{ and }
\diff(\stheta^t_{L_t-1}, \stheta^\star)\ge\sd/4 \text{ and }
\diff(\stheta^t_{L_t}, \stheta^\star)\ge \sd/4
\right)
\Bigg\}\,.
\end{align*}
We define the ``abstract game $0$'' (and, for any planner $\aplan$, the associated probability distribution $\bbP_0^\aplan$) to be a variant of the game  where the responses are $Y_t \equiv(0,0,0)$ for all $t\in\N_+$ (irrespective of the choices of the planner).

Our next lemma claims that the bad event $E_n^{\stheta^\star}$ happens with large probability in abstract game $\stheta^\star$ whenever it happens with large probability in abstract game $0$. The reason for this is that the probability of ever receiving nonzero feedback on the bad event is a small value, which in fact can be bounded by $\epsilon$ (the only way to receive nonzero feedback is by playing to the end, hence $\epsilon$ appears). 
From here it will follow that since the number of steps is at most $n$ (bad events are defined for interactions of length at most $n$),
the probability of $E_n^{\stheta^\star}$ in game $\stheta^\star$ is at least the probability of this event in game $0$ times $(1-\epsilon)^n$, and the latter is
lower bounded by an absolute constant because $n$ is chosen to be not too large compared to $1/\epsilon$.

\begin{lemma}\label{lem:7-8-m0-to-mtheta}
Take $n$ as defined in Eq.~\ref{eq:n-def}.
Then, for any abstract planner $\aplan$ and for any 
$\stheta^\star \in W$,
\begin{align*}
\bbP_{\stheta^\star}^\aplan(E_n^{\stheta^\star}) 
&\ge 
\frac78 \bbP_0^\aplan(E_n^{\stheta^\star})\,.
\end{align*}
\end{lemma}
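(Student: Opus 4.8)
The plan is to exploit the fact that the bad event $E_n^{\stheta^\star}$ is a function of the \emph{planner's choices} $(X_t)$ alone: it only requires that the played and returned vectors stay at Hamming distance at least $\sd/4$ from $\stheta^\star$, and never references the responses $Y_t$ directly. Consequently the two games differ only in how they generate feedback, and on $E_n^{\stheta^\star}$ that feedback is almost the all-zero feedback of game $0$. First I would record that on $E_n^{\stheta^\star}$ both the penultimate and the last component of every query are far from $\stheta^\star$, so the indicators $U_t$ and $V_t$ are deterministically $0$; thus the only way a response can differ from $(0,0,0)$ is through $Z_t$.

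Next I would bound the per-step probability of nonzero feedback. On $E_n^{\stheta^\star}$ we have $V_t=0$, so $Z_t$ can be nonzero only when $L_t=K$, and in that case $Z_t\sim\Bernoulli(f_{\stheta^\star}(S_t))$ with $f_{\stheta^\star}(S_t)\le(25/32)^{K+1}=\epsilon$ by Eq.~\ref{eq:f-ub-last-far} (the last component being far contributes the extra unit in the exponent). Hence, conditionally on the far event and on the action played, $\bbP_{\stheta^\star}^\aplan(Y_t=(0,0,0))\ge 1-\epsilon$ at every step, with equality to $1$ whenever $L_t<K$.

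The core step is a change-of-measure computation. Introduce the sub-event $\hat E_n=E_n^{\stheta^\star}\cap\{Y_t=(0,0,0)\text{ for all }t\le n,\,t<N\}$. Since game $0$ always returns $(0,0,0)$, we have $\bbP_0^\aplan(\hat E_n)=\bbP_0^\aplan(E_n^{\stheta^\star})$, while trivially $\bbP_{\stheta^\star}^\aplan(E_n^{\stheta^\star})\ge\bbP_{\stheta^\star}^\aplan(\hat E_n)$. For each interaction sequence $\omega\in\hat E_n$ the planner sees identical (all-zero) feedback in both games, so the product of its conditional action probabilities $\aplan(X_t\mid F_t)$ is the same under $\bbP_{\stheta^\star}^\aplan$ and $\bbP_0^\aplan$; the only discrepancy is the product of per-step probabilities of observing zero feedback, which under $\stheta^\star$ is at least $(1-\epsilon)^n$ by the previous paragraph (there are at most $n$ feedback-bearing steps, namely those with $t<N$, $t\le n$). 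Summing the resulting pointwise bound $\bbP_{\stheta^\star}^\aplan(\omega)\ge(1-\epsilon)^n\,\bbP_0^\aplan(\omega)$ over $\omega\in\hat E_n$ yields $\bbP_{\stheta^\star}^\aplan(E_n^{\stheta^\star})\ge(1-\epsilon)^n\,\bbP_0^\aplan(E_n^{\stheta^\star})$.

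Finally I would verify $(1-\epsilon)^n\ge 7/8$. Using $\ln(1-\epsilon)\ge-\epsilon/(1-\epsilon)$ together with the choice $n\le(1/\epsilon-1)/7.5=(1-\epsilon)/(7.5\epsilon)$ from Eq.~\ref{eq:n-def} gives $n\ln(1-\epsilon)\ge-1/7.5$, so $(1-\epsilon)^n\ge e^{-1/7.5}>7/8$, which is precisely why the constant $7.5$ enters the definition of $n$. The main obstacle is the bookkeeping of this change of measure: one must be careful that $E_n^{\stheta^\star}$ constrains actions only, that conditioning on the far event freezes $U_t$ and $V_t$ to $0$, and that the count of feedback-bearing steps never exceeds $n$ even when the planner returns early (the return step carries the convention $Y_N=(0,0,0)$ in both games). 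Once these points are pinned down, the likelihood-ratio inequality and the scalar estimate are routine.
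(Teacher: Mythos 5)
Your proposal is correct and follows essentially the same route as the paper: a likelihood-ratio comparison between game $\stheta^\star$ and game $0$ in which the planner's conditional action probabilities cancel, the per-step probability of nonzero feedback on the bad event is bounded by $\epsilon=(25/32)^{K+1}$ (possible only when $L_t=K$, since $U_t=V_t=0$ there), and $(1-\epsilon)^n\ge 7/8$ by the choice of $n$. The only cosmetic differences are that the paper makes the change of measure rigorous by partitioning $E_n^{\stheta^\star}$ according to the history truncated at step $n\wedge N$ (rather than summing a pointwise bound over full interaction sequences, which would not literally hold for trajectories with $N>n$ whose continuations diverge), and that it verifies $(1-\epsilon)^n>7/8$ via the monotone limit of $\left(1-\tfrac{1}{1+7.5n}\right)^n$ instead of $\ln(1-\epsilon)\ge-\epsilon/(1-\epsilon)$.
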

\begin{proof}
We prove that
\begin{align}\label{eq:lb-abstract-transfer}
\bbP_{\stheta^\star}^\aplan(E_n^{\stheta^\star}) &\ge \left(1-\epsilon\right)^n \bbP_0^\aplan(E_n^{\stheta^\star})	\,.
\end{align}
Since by its choice, $n$ satisfies $n\le \left(\frac1\epsilon-1\right)/7.5$, or, equivalently, $1-\epsilon \ge 1- \frac{1}{1+7.5 n}$, it follows that 
\begin{align*}
(1-\epsilon)^n \ge \left(1- \frac{1}{1+7.5 n}\right)^n \ge \lim_{n\to\infty}  \left(1- \frac{1}{1+7.5 n}\right)^n = e^{-1/7.5}> 7/8\,,
\end{align*}
which shows that it suffices to prove Eq.~\ref{eq:lb-abstract-transfer}.

Let $(X_t,Y_t)_{t\in [N]}$ be a complete interaction history and
let $H$ denote the first $n\wedge N$ components of  this (thus, $H$ is shorter than the complete sequence when $n < N$). Let
 $\cH$ be the set of all possible values that $H$ can take.
For $h\in \cH$, let $E_h = E_n^{\stheta^\star} \cap \{ H= h\}$. Clearly, $E_n^{\stheta^\star}$ is the disjoint union of the sets $\{E_h\}_{h\in \cH}$. 
Let $\cH^+=\{ h\in \cH \,:\, \bbP_{0}^\aplan(E_h)>0\}$.
Then, $\bbP_0^\aplan(E_n^{\stheta^\star})=\sum_{h\in\cH^+}\bbP_0^\aplan(E_h)$, and we prove Eq.~\ref{eq:lb-abstract-transfer} by showing that for any $h\in \cH$,
\begin{align}\label{eq:to-show-rho-ub}
\rho = \frac{\bbP_{\stheta^\star}^\aplan \left(E_h\right)}{\bbP_{0}^\aplan \left(E_h\right)} \ge \left(1-\epsilon\right)^n\,.
\end{align}
Fix $h\in \cH^+$ and let $h = (x_t,y_t)_{t\in [n']}$ for some $0<n'\le n$.
Further, let $x_t = (l_t,s_t)$. Note that for $t<n'$, $l_t>0$ and either $n'=n$ or $l_{n'}=0$.

As $\bbP_{0}^\aplan(E_h)>0$, $y_t=(0,0,0)$ for all $t\in[n']$.
By definition of $\bbP_{\stheta^\star}^\aplan$ and $\bbP_{0}^\aplan$, both the numerator and denominator factorizes into the product of $n'$ terms.
Given the same history, the distribution of $X_t$ under both 
$\bbP_{\stheta^\star}^\aplan$ and $\bbP_{0}^\aplan$ are identical, 
so the terms that do not cancel remain:
\[
\rho=\prod_{t=1}^{n'} \frac{\bbP_{\stheta^\star}^\aplan\left(Y_t=(0,0,0) \,|\, X_t=x_t\right)}{\bbP_{0}^\aplan\left(Y_t=(0,0,0) \,|\, X_t=x_t\right)} 
=
\prod_{t=1}^{n'} \bbP_{\stheta^\star}^\aplan\left(Z_t=0 \,|\, X_t=x_t\right)\,,
\]
where $Y_t=(U_t, V_t, Z_t)$. Here, the last equality follows since
 $\bbP_{0}^\aplan[Y_t=(0,0,0) \,|\, X_t=x_t]=1$ by definition and 
 $
\bbP_{\stheta^\star}^\aplan\left(Y_t=(0,0,0) \,|\, X_t=x_t\right)
 =
\bbP_{\stheta^\star}^\aplan\left(Z_t=0 \,|\, X_t=x_t\right)
 $
 because on $E_h \subset E_n^{\stheta^\star}$, $U_t=V_t=0$ holds $\bbP_{\stheta^\star}^\aplan$ almost surely.
Now, by definition,
$\bbP_{\stheta^\star}^\aplan\left(Z_t=1 \,|\, X_t=x_t\right)
=
f_{\stheta^\star}(s_t) \one{ \diff(\stheta_{l_t}^t,\stheta^\star)\le p/4 \text{ or } l_t=K }$.
Since $E_h\subset E_n^{\stheta^\star}$, 
$\diff(\stheta_{l_t}^t,\stheta^\star)\le p/4 $ does not hold. Hence,
$\bbP_{\stheta^\star}^\aplan\left(Z_t=1 \,|\, X_t=x_t\right)
=
f_{\stheta^\star}(s_t) \one{ l_t=K } \le (25/32)^{K+1}=\epsilon$, 
where the inequality 
follows from Lemma~\ref{lem:f-bounds} using again that $E_h\subset E_n^{\stheta^\star}$ and thus 
the last component of $s_t$ must be ``far'' from $\stheta^\star$.
Putting things together and using that $n'\le n$ gives that $\rho\ge (1-\epsilon)^n$, as required.
\end{proof}

We plan to argue that the bad event happens with large probability in game $0$.
In this game, by definition, the planner needs to guess $\stheta^\star$ blindly (as there is no feedback ever). Hence, the success of the planner depends on whether they can without any feedback stumble upon $\stheta^\star$. To bound this success rate, it will be useful to bound the number of vectors close to a given vector in the hypercube $W$: %

\begin{lemma}\label{lem:close-to-corner-ct}
For any $\tilde\stheta\in W$, let $\Wc(\tilde\stheta)=\{\stheta\in W \,|\, \diff(\stheta, \tilde\stheta)<\sd/4\}$.
Then,
\[
|\Wc(\tilde\stheta)| \le 2^\sd \exp\left(-\frac{\sd}{8}\right)
\]
\end{lemma}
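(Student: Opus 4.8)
The plan is to reduce the count to a binomial tail and then invoke a standard Chernoff--Hoeffding bound. First I would stratify $\Wc(\tilde\stheta)$ according to the Hamming distance to $\tilde\stheta$. Since $W=\{-1,1\}^\sd$ and exactly $\binom{\sd}{j}$ vectors differ from $\tilde\stheta$ in precisely $j$ of the $\sd$ coordinates, the definition of $\Wc(\tilde\stheta)$ gives
\[
|\Wc(\tilde\stheta)| = \sum_{0\le j < \sd/4} \binom{\sd}{j}\,.
\]

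Next I would reinterpret this sum as a probability. If $\stheta$ is drawn uniformly at random from $W$, then its coordinates are i.i.d. uniform on $\{-1,1\}$, so for each coordinate the event that $\stheta$ disagrees with $\tilde\stheta$ there is an independent $\Bernoulli(1/2)$ event. Hence $\diff(\stheta,\tilde\stheta)$ is a sum of $\sd$ independent $\Bernoulli(1/2)$ variables; writing $X$ for such a $\mathrm{Bin}(\sd,1/2)$ random variable, the previous display becomes
\[
|\Wc(\tilde\stheta)| = 2^\sd\,\Probab\!\left(X < \tfrac{\sd}{4}\right)\,.
\]

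Finally I would bound the lower tail. The mean of $X$ is $\sd/2$ and the threshold $\sd/4$ sits a distance $\sd/4$ below the mean, so Hoeffding's inequality for bounded i.i.d. summands yields
\[
\Probab\!\left(X < \tfrac{\sd}{4}\right) \le \Probab\!\left(X \le \tfrac{\sd}{2} - \tfrac{\sd}{4}\right) \le \exp\!\left(-\frac{2(\sd/4)^2}{\sd}\right) = \exp\!\left(-\frac{\sd}{8}\right)\,.
\]
Combining the three displays gives the claimed bound.

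There is no genuine obstacle here: the argument is a textbook concentration estimate for a symmetric binomial. The only point requiring a little care is matching the constant $1/8$ in the exponent, which comes out exactly because the gap from the mean to the threshold is $\sd/4$ and $2(\sd/4)^2/\sd = \sd/8$. If one prefers not to quote Hoeffding, the identical bound follows from the elementary Chernoff recipe---applying Markov's inequality to $e^{-\lambda X}$ and optimizing, with the optimal choice $\lambda = \log 3$ for the threshold $\sd/4$---and in fact the sharper relative-entropy form is strictly stronger than what is needed, so the stated estimate has some slack to spare.
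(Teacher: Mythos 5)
Your proof is correct and follows essentially the same route as the paper: both reduce $|\Wc(\tilde\stheta)|$ to a tail probability of a sum of $\sd$ independent symmetric $\pm 1$/Bernoulli variables and apply Hoeffding's inequality with deviation $\sd/4$ from the mean, yielding the exponent $\sd/8$. The only cosmetic difference is that you parameterize via the binomial count of disagreeing coordinates while the paper works with the inner product $\ip{X,\bm{1}}$ of Rademacher components; these are equivalent.
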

\begin{proof}
By symmetry of the $\sd$-dimensional hypercube, without loss of generality, let $\tilde\stheta=\bm{1}$ and $\Wc=\Wc(\tilde\stheta)$. 
Let $X=(X_i)_i\in W$ be a uniformly distributed random variable on $W$. Note that the components $X_i$ of $X$ are independent Rademacher random variables.
We have
\begin{align*}
|\Wc|  & =
\sum_{\stheta\in W} \one{\ip{\stheta,\bm{1}}>\sgamma} =
|W|\, \bbP(\ip{X, \bm{1}}>\sgamma) \\
& = 2^\sd\, \bbP \left(\sum_{i\in\sd}X_i>\sgamma\right) \,
\le 2^\sd \exp\left(\frac{-2(\sd/2)^2}{4\sd}\right)  = 2^{\sd}\exp\left(-\frac{\sd}{8}\right)\,,
\end{align*}
where the second inequality holds by Hoeffding's inequality.
\end{proof}

Our next lemma shows that for any planner
the probability of a bad event has an absolute lower bound. We use the previous lemma to show that for any planner there exists a $\stheta^\star$ such that the probability of the corresponding bad event is lower bounded in game $0$, and then we apply Lemma~\ref{lem:7-8-m0-to-mtheta} to get a lower bound for the same event in game $\stheta^\star$.

\begin{lemma}\label{lem:no-reveal-whp}
For any abstract planner $\aplan$
there exists $\stheta^\star\in W^\star$
such that
\[
\bbP_{\stheta^\star}^\aplan(E_n^{\stheta^\star})  \ge \left(\frac78\right)^2\,.
\]
\end{lemma}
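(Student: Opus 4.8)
The plan is to combine the two preceding lemmas with an averaging argument over the secret parameter. By Lemma~\ref{lem:7-8-m0-to-mtheta} we have $\bbP_{\stheta^\star}^\aplan(E_n^{\stheta^\star}) \ge \tfrac78\,\bbP_0^\aplan(E_n^{\stheta^\star})$ for every $\stheta^\star\in W$, so it suffices to exhibit a single $\stheta^\star\in W^\star$ with $\bbP_0^\aplan(E_n^{\stheta^\star}) \ge \tfrac78$, i.e. $\bbP_0^\aplan\big((E_n^{\stheta^\star})^C\big)\le\tfrac18$. The crucial simplification is that in game $0$ there is never any feedback, so the law of the planner's interaction sequence does not depend on $\stheta^\star$ at all; the only $\stheta^\star$-dependence in $\bbP_0^\aplan\big((E_n^{\stheta^\star})^C\big)$ enters through the \emph{event} $E_n^{\stheta^\star}$.

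First I would bound, for a fixed realisation of the first $n$ steps in game $0$, the number of vectors that are ``tested'' against $\stheta^\star$ in the definition of $E_n^{\stheta^\star}$. At each query step $t<N$ only the penultimate and last components $\stheta^t_{L_t-1},\stheta^t_{L_t}$ matter (two vectors), and only at the single return step $t=N$ do the eight output components matter; for $t>N$ there is no constraint. Hence at most $m:=2n+6$ vectors $v_1,\dots,v_m$ are relevant, and since $\diff$ is symmetric, $(E_n^{\stheta^\star})^C$ occurs exactly when $\stheta^\star\in\bigcup_{j} \Wc(v_j)$, where $\Wc(v)=\{\stheta\in W:\diff(\stheta,v)<\sd/4\}$.

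Next I would average over $\stheta^\star$ drawn uniformly from $W^\star$ and exchange this sum with the expectation over the realisation (Fubini). For each fixed realisation, Lemma~\ref{lem:close-to-corner-ct} gives $\big|\bigcup_{j}\Wc(v_j)\big|\le m\,2^\sd\exp(-\sd/8)$, so $\sum_{\stheta^\star\in W^\star}\one{(E_n^{\stheta^\star})^C}\le m\,2^\sd\exp(-\sd/8)$ pointwise. Lower-bounding $|W^\star|\ge 2^\sd(1-2\exp(-\sd/8))$ --- which follows by applying Lemma~\ref{lem:close-to-corner-ct} to $\bm1$ and to $-\bm1$ to control $W\setminus W^\star$ --- yields
\[
\frac{1}{|W^\star|}\sum_{\stheta^\star\in W^\star}\bbP_0^\aplan\big((E_n^{\stheta^\star})^C\big) \le \frac{(2n+6)\,\exp(-\sd/8)}{1-2\exp(-\sd/8)}\,.
\]
Substituting the choice of $n$ from Eq.~\ref{eq:n-def}, namely $n\le \exp(\sd/8)/16-5$, gives $(2n+6)\exp(-\sd/8)\le \tfrac18-4\exp(-\sd/8)$, and a one-line check shows $\tfrac{1/8-4\exp(-\sd/8)}{1-2\exp(-\sd/8)}\le\tfrac18$ (equivalent to $4\ge\tfrac14$, once $\sd$ is large enough that $1-2\exp(-\sd/8)>0$). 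Thus the average is at most $\tfrac18$, so some $\stheta^\star\in W^\star$ satisfies $\bbP_0^\aplan\big((E_n^{\stheta^\star})^C\big)\le\tfrac18$; feeding this back into Lemma~\ref{lem:7-8-m0-to-mtheta} gives $\bbP_{\stheta^\star}^\aplan(E_n^{\stheta^\star})\ge(7/8)^2$.

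I expect the main obstacle to be the bookkeeping in the second step: arguing, uniformly over \emph{all} adaptive behaviours of the planner, that only a bounded number $O(n)$ of query vectors per realisation can ever trigger the complementary event, and making the reduction to $\stheta^\star\in\bigcup_j\Wc(v_j)$ watertight so that the pointwise count survives the exchange of sum and expectation. The constant-chasing in the final step is routine, but it must be done carefully, since the precise value of $n$ in Eq.~\ref{eq:n-def} is calibrated exactly so that the ratio lands at $1/8$.
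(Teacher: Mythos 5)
Your proposal is correct and follows essentially the same route as the paper's proof: reduce to game $0$ via Lemma~\ref{lem:7-8-m0-to-mtheta}, note that $(E_n^{\stheta^\star})^C$ forces $\stheta^\star$ to lie in a union of $O(n)$ Hamming balls whose law under $\bbP_0^\aplan$ is independent of $\stheta^\star$, bound its expected cardinality by Lemma~\ref{lem:close-to-corner-ct}, and average over $W^\star$ to extract a good $\stheta^\star$ (the paper takes an argmin where you take an element below the mean, which is the same step). Your count of $2n+6$ relevant vectors is a slightly tighter version of the paper's $2n+8$, and your constant-chasing lands at the same $1/8$ bound.
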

\begin{proof}
For any $\hat \stheta\in W^\star$, under event $\left(E_n^{\hat\stheta}\right)^c$,
either
there exists $t\in[n\vmin (N-1)]$ such that
$\diff(\stheta^t_{L_t-1}, \hat\stheta)<\sd/4$ or $\diff(\stheta^t_{L_t}, \hat\stheta)< \sd/4$,
or for some $i\in[8]$, 
$\diff(\stheta^N_i, \hat\stheta)<\sd/4$. That is,
$\left(E_n^{\hat\stheta}\right)^c \subset \{ \hat \stheta \in Z \}$ where
\[
Z :=
\bigcup_{t\in[n\vmin (N-1)]} \left(\Wc(\stheta^t_{L_t-1}) \cup \Wc(\stheta^t_{L_t} )  \right)
\bigcup 
\left(
\bigcup_{i\in[8]} \Wc(\stheta^N_i) \right) \,.
\]
By Lemma~\ref{lem:close-to-corner-ct}, 
\begin{align}
|Z| \le  (2n+8) 2^\sd \exp\left(-\frac{\sd}{8}\right)\,.
\label{eq:cardbound}
\end{align}
We also have that
$W^\star=W\setminus \Wc(\bm{1})\setminus \Wc(-\bm{1})$, so
 $|W^\star|\ge 2^\sd \left(1-2\exp\left(-\frac{\sd}{8}\right)\right)$.
As $\stheta^\star\in Z$ is the good event for the planner, 
we define
\begin{align}\label{eq:thetastar-choice}
\stheta^\star = \argmin_{\hat\stheta\in W^\star} \bbP_{0}^\aplan \left(\hat\stheta\in Z\right)\,.
\end{align}
Putting things together and using that $Z\subseteq W$, we get
\begin{align*}
\MoveEqLeft 
2^\sd \left(1-2\exp\left(-\frac{\sd}{8}\right)\right)
\bbP_{0}^\aplan \left(\stheta^\star \in Z\right)
\le 
|W^\star| 
\bbP_{0}^\aplan \left(\stheta^\star \in Z\right) \\
& \le
\sum_{\hat\stheta\in W^\star} \bbP_{0}^\aplan \left(\hat\stheta\in Z\right) 
 \le
\sum_{\hat\stheta\in W} \bbP_{0}^\aplan \left(\hat\stheta\in Z\right) 
 =
\E_{0}^\aplan [ \left|Z\right| ] 
\le (2n+8) 2^\sd \exp\left(-\frac{\sd}{8}\right)\,,
\end{align*}
Rearranging and using $(E_n^{\stheta^\star})^c \subset \{\stheta^\star\in Z\}$, we get
\[
\bbP_{0}^\aplan \left(\left(E_n^{\stheta^\star}\right)^c\right) \le
\bbP_{0}^\aplan \left(\stheta^\star \in Z\right) \le 
\frac{(2n+8) 2^\sd \exp\left(-\frac{\sd}{8}\right)}{2^\sd \left(1-2\exp\left(-\frac{\sd}{8}\right)\right)}
\le 2(n+5)\exp\left(-\frac{\sd}{8}\right)
\le \frac{1}{8}\,,
\]
where the last two inequalities follow by our choice of $n$.
Combining this with Lemma~\ref{lem:7-8-m0-to-mtheta} finishes the proof.
\end{proof}

With this, we are ready to prove Theorem~\ref{thm:abstract-lb}. In fact, all that is left to show is that 
if the planner is sound, then the probability of the bad event cannot be too high. That is, connecting the bad event to poor performance.

\begin{proof}[Proof of Theorem~\ref{thm:abstract-lb}]
Take a sound abstract planner $\aplan$ with query cost $\bar N$.
Let $\stheta^\star$ be the vector whose existence is guaranteed by the previous lemma.
By Markov's inequality,
\[
	\bbP_{\stheta^\star}^\aplan \left[ N-1 \ge n \right] \le \frac1n \bar N\,.
\]
Let $E'$ be the event under which both $N-1< n$ and $E_n^{\stheta^\star}$ hold:
$E' = \{ N-1<n \} \cap E_n^{\stheta^\star}$.
 By the union bound and Lemma~\ref{lem:no-reveal-whp},
\begin{align}\label{eq:e-prime-probab}
\bbP_{\stheta^\star}^\aplan\left[E'\right] \ge \left(\frac78\right)^2 - \frac1n \bar N\,.	
\end{align}
Under the event $E'$, the output of the planner $(\stheta^N_i)_{i\in[8]}$ satisfies 
$\diff(\stheta^N_i, \stheta^\star)\ge \sd/4$ for $i\in[8]$, and therefore $k^\star=8$ and,
by  Lemma~\ref{lem:f-bounds},  the reward $R$ of the game satisfies
$R< \left(\frac{25}{32}\right)^9$.
Therefore, combined with the soundness of $\aplan$, we get
\begin{align*}
\frac{11}{32}-0.01 
 \le f_{\stheta^\star}(())-0.01 \le \E_{\stheta^\star}^\aplan\left[R\right]
& \le 
\left(\frac{25}{32}\right)^9
+(1-\bbP_{\stheta^\star}^\aplan\left[E'\right] ) \frac{25}{32} \\
& \le
\left(\frac{25}{32}\right)^9
+\left(1-\left(\frac78\right)^2 \right) \frac{25}{32} + \frac{\bar N}{n} \frac{25}{32}\,,
\end{align*}
where we used Lemma~\ref{lem:f-bounds} to bound $f_{\stheta^\star}(())$, and the maximum value of $R$ (maximum value of $f$) by $\frac{25}{32}$.
To satisfy this inequality, we must have $\bar N>0.05 n$, and thus
by substituting Eqs.~\ref{eq:n-def} and simplifying we get
\begin{align*}
	\bar N &=
	\Omega\left(\min\left(\frac{e^{{\frac{\sd}{8}}}}{16} - 5, \frac{\frac1\epsilon-1}{7.5}\right)\right)
	= \min\left(2^{\Omega(\sd)},\Omega\left(\left(\frac{32}{25}\right)^{K+1}\right)\right)
	=2^{\Omega(\sd \wedge K) }\,.
\end{align*}
\end{proof}

\subsection{Description of the hard MDP class}\label{sec:mdp-summary}
Given a large enough horizon $H$ and a large enough dimension $\ld$,
in this section we construct a class of featurized MDPs with horizon $H$ and feature-space
dimension $\ld$,
such that 
{\em (i)}
each featurized MDP in the class corresponds to an abstract game with parameters $(K,\sd)$
such that $H \approx K\sd$, $A=\sd \approx \ld^{1/4}\wedge H^{1/2}$
{\em (ii)}
each MDP $M_{\stheta^\star}$ is associated with some abstract game 
$\stheta^\star\in W^\star \subset W = \{-1,1\}^p$;
{\em (iii)}
the feature-maps associated with the MDPs do not depend on $\stheta^\star$;
{\em (iv)} the respective realizability assumptions are satisfied by the featurized MDPs in the class;
{\em (v)} a planner that is guaranteed to achieve a high value in the MDPs 
can be used to achieve high values in the associated abstract game, which also means that
{\em (vi)} for every $\stheta^\star\in W^\star$, 
one should be able to emulate the queries in the featurized MDP  associated with $\stheta^\star$
using queries that are available in the abstract game with $\stheta^\star$, while the MDP planner should not get any information about $\stheta^\star$ by any other means than through these queries.

In the abstract game, at the end the planner needs to choose a sequence 
$(\stheta_i)_{i \in [8]}\in W^{\circ 8}$.
This will correspond to the first $8p$ steps of the path that the MDP planner traverses in the MDP,
which will have deterministic dynamics. 
To guarantee that the number of actions is small, 
choosing such a weight sequence will be implemented in the MDP by first choosing $\stheta_1$ in $p$ steps, then choosing $\stheta_2$ in another $p$ steps, etc.
In each of the $\sd$ steps of these rounds, choosing an action $a\in [p]$ will allow 
the MDP planner to flip component $a$ of the weight associated with the round. 
In particular, in the first $\sd$ steps, the components of $\stheta_1$ are chosen this way, starting from the weight vector $\stheta_0 = \bm{1}$. 
In the next $\sd$ steps, the components of $\stheta_2$ are chosen this way, but this time starting with $\stheta_1$. 
The process is identical for choosing $\stheta_{k}$ based on $\stheta_{k-1}$, where we let $1\le k \le K$ go up to $K$ to support arbitrary queries in the abstract game. 
To guarantee that the path chosen is in $\cup_k W^{\circ k}$, further rules are necessary. 
In particular, since we need to guarantee that $w_k$ differs from $w_{k-1}$ by at least $\sd/4$ positions, the dynamics is chosen so that in the first $\lceil \sd/4\rceil$ steps within the $k$th round, 
if an action is repeated 
then it is called illegal, and leads to the end-state $\bot$, while in the remaining $p-\lceil \sd/4\rceil\approx 3\sd/4$ steps an action repeat is called legal and leads to a ``frozen'' weight,
i.e., starting from the first such repeated action the weight associated with the path cannot be changed until the round is over.
These rules guarantee that if a path of length $k\sd$ does not end up in $\bot$, the path uniquely determines an element of $W^{\circ k}$ 
(in fact, the last state alone uniquely determines such an element). 
We associate with every action sequence subject to the constraints just described a unique state, which can be seen as a node on the action tree. 
We will say that a state $s\ne \bot$ belongs to some round $k\in[0:K-1]$, 
if the length $l$ of the associated action sequence $(a_i)_{i\le l}$ 
satisfies $k\sd \le l < (k+1)\sd$.
We say that the state is in step $i$ of round $k$ if also $l=k\sd+i$.

Normally, the transitions of the MDP follow the path in the action tree just described, and the rewards are zero.
However, there are two exceptions that depend on $\stheta^\star$.
To describe them, note that a state $s\ne \bot$ that is in step $\sd-1$ of some round $k$ is one step away from finalizing the choice of weight vector $\stheta_{k+1}$.
Indeed, such a state, together with the action performed in that state, defines the weight sequence 
$(\stheta_i)_{i\in [k+1]}\in W^{\circ k+1}$, while a shorter sequence $(\stheta_i)_{i\in [k]}\in W^{\circ k}$ is defined by all states $s\ne \bot$ that are in any step $i$ of some round $k$.

For a state that is in some step $i$ of some round $k$, the aforementioned exceptions to the MDP dynamics are:
{\em (i)} if $k>0$ and $\diff(\stheta_{k},\stheta^\star)<p/4$;
{\em (ii)} else if $i=\sd-1$, and either $k=K-1$ (last step of episode), or
$\diff(\stheta_{k+1},\stheta^\star)<p/4$.
In case {\em (i)}, the next state is $\bot$, and 
the reward is deterministically set to $\ip{\phi,\ltheta^\star}$, where $\phi$ is the feature-vector associated with the state or the state-action pair (depending on which class of featurized MDPs are considered), and $\ltheta^\star$ is a hidden weight vector corresponding to $\stheta^\star$.
In case {\em (ii)},
a Bernoulli reward with parameter 
$f_{\stheta^\star}( (\stheta_i)_{i\in [k+1]} )$ is generated, while also transitioning to $\bot$.
Note that the states associated with case {\em (i)} are unreachable from the initial state
 as any path to such state goes through a state that satisfies {\em (ii)}. 
While there is much information to be gained from any query where the state is of this type,
planners with local access can never issue such queries, 
while planners with global access still have very little chance of encountering such a state (the proportion of these states is exponentially small as can be seen from, e.g., the result of Lemma~\ref{lem:close-to-corner-ct}).
We refer the reader to Figure~\ref{fig:mdp-illustration} for an illustration of the MDP dynamics and the associated reward structure, and to Eq.~\ref{eq:rs-def} for a more precise definition.

\begin{figure}[t]
\includegraphics[width=\textwidth]{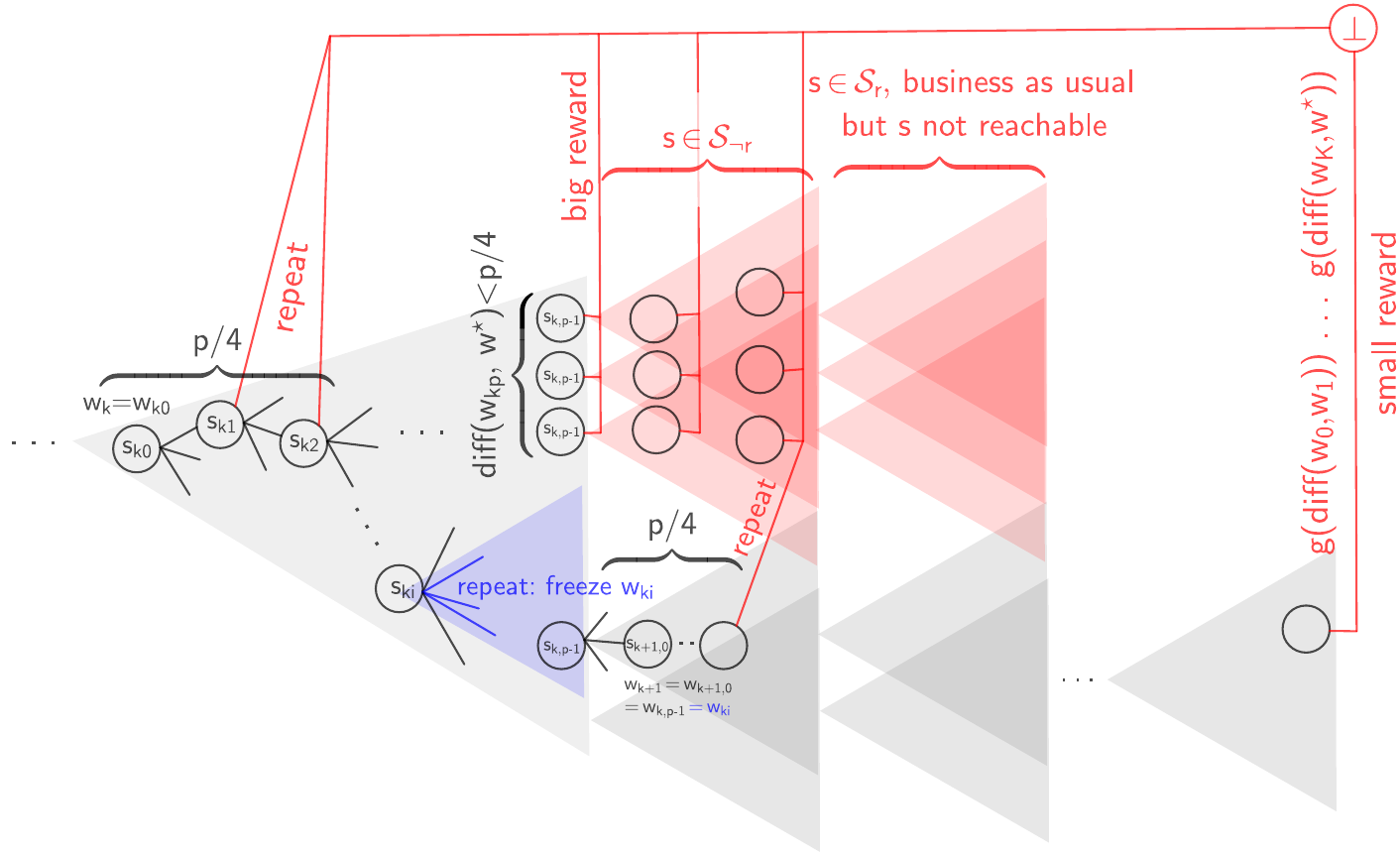}
\centering
\caption{Illustration of an MDP associated with a weight vector $\stheta^\star$. The nodes represent states, which are members of the action tree. Subtrees are illustrated with triangles. Edges represent actions, red edges transit to the episode-over state $\bot$. Unless the action was illegal, there are next-states that an MDP with some other $\stheta^\star$ would have transited to. These states still exist in $M$, but are unreachable, and illustrated with a red triangle. The blue triangle represents a part of the action tree where a legal repeated action freezes the weight corresponding the round. 
Unless written on the edge, there is no reward for the action.
In the figure, $(s_{ki})_{ki}$ represents a path through the state space,
while for $k$, $i$ fixed, $\stheta_{ki}$ represents the weight vector of step $i$ of round $k$.
}\label{fig:mdp-illustration}
\end{figure}

The next step is to show that one can define appropriate feature-maps such that the respective realizability conditions hold, which also means that we will need to compute the optimal value (or action-value) functions and then we will also need to show that a sound MDP planner for the appropriate class of MDPs can be used to derive a sound planner for the abstract game. 

Here, the main idea is that the optimal value corresponding to a state $s$ that is %
in some step $0\le i \le \sd-1$ of round $0\le k\le K-1$ takes the form
\begin{align*}
v^*(s) = f_{\stheta^\star}( (\stheta_i)_{i\in [k]} )
=
g(\diff(\stheta_{0},\stheta_{1})) \dots  
g(\diff(\stheta_{k-1},\stheta_{k}))
g(\diff(\stheta_{k},\stheta^\star))\,,
\end{align*}
where $\stheta_0=\bm{1}$ by convention, $\stheta_i$ for $1\le j < k$ is the weight vector for the corresponding round, while $\stheta_k$ is obtained 
by performing the component manipulations on $\stheta_{k-1}$ prescribed by the action in round $k$ until step $i$, after which, the weight obtained is moved as much as possible towards $\stheta^\star$.
Note that $\stheta_k$ here depends on both $s$ and $\stheta^\star$,
while the other weight vectors only depend on $s$.
In fact, one can write $\stheta_k = A(s) \stheta^\star + b(s)$ for some matrix $A(s)$ and vector $b(s)$ that depend on $s$.
Therefore, 
\begin{align}
v^*(s) = h(s)
g(\diff(\stheta_{k-1}(s),A(s) \stheta^\star + b(s)))
g(\diff(A(s) \stheta^\star + b(s),\stheta^\star))
\label{eq:vquad}
\end{align}
where $h(s) = 
g(\diff(\stheta_{0},\stheta_{1})) \dots  g(\diff(\stheta_{k-2},\stheta_{k-1}))$ is a scalar
that depends only on $s$.
The expression in Eq.~\ref{eq:vquad} is a fourth-order expression of $\stheta^\star$ since $g$ is a quadratic function, 
and while 
$\diff$ is (affine) bilinear in its two arguments and
so it appears that 
$\diff(A(s) \stheta^\star + b(s),\stheta^\star)$ could be quadratic itself, due to the special structure, this expression is still linear in $\stheta^\star$. 
As such, $v^*(s)$ is (roughly\footnote{The precise argument will also include lower-order tensor products.}) a linear function of $(\stheta^\star)^{\otimes 4}$, the fourth-order tensor product of this vector with itself, which gives rise to the definition of $\phiv$ and $\ltheta^\star$, which is (roughly) the flattening of 
$(\stheta^\star)^{\otimes 4}$.
Of course, it remains to verify that $\phiv(s)$ and $\ltheta^\star$ have small norms as required
and also that this definition extends to states that are not reachable from the initial state (to prove the result with global accessibility). 
In fact, it is exactly this second requirement that made us define the deterministic rewards of
$\ip{\phiv(s),\ltheta^\star}$
and the associated transitions to $\bot$. 
(In this case it will be necessary to show that this reward is indeed in the $[0,1]$ interval.)

A similar argument can be used for $q^*$ realizability, and also for $v^*/q^*$ reachable realizability (in which case the reward at unreachable states could be arbitrary).
To finish, one needs to show that a sound MDP planner can be used to implement a sound abstract planner.
For this, note that the steps that an MDP planner makes in the first $8$ rounds of an episode can be directly translated into an admissible weight sequence of length $8$. Further, by construction, the value achieved with this weight sequence is at least as high as the value that the MDP planner would achieve by completing the episode (the function $f_{\stheta^\star}$ and the MDP are such that cutting short a weight sequence obtained from a path in the MDP increases the value of the sequence).

In the remainder of this section, we fill in the gaps of this argument.

\subsection{The MDP construction}
We start with defining $A,\sd$ and $K$ as a function of the horizon $H\ge 81$ and dimension $\ld\ge 31$:
\begin{align}
A&=\sd=\min\left(\max\{ x\in\N_+:\, x^4+x^3+x^2+x+1\le \ld \},\,\floor{H^{1/2}}\right)\,, \label{eq:a-choice}\\
K&=\floor{H/\sd}\,, \nonumber\\
H'&=K\sd \nonumber\,.%
\end{align}
By our definition of $H$-horizon MDPs, any $H'$-horizon MDP for $H'\le H$ is also a $H$-horizon MDP (cf. Assumption~\ref{ass:fixed}). Hence, we shall construct a $H'$-horizon MDP with $H'$ defined above.
For future reference, it will be useful to note that
\begin{align}\label{eq:asymptotic-params}
\begin{split}
A(&=\sd)=\BigTheta\left(H^{1/2} \vmin \ld^{1/4}\right)\,, \qquad
K=\BigTheta(H^{1/2} \vmax H/(\ld^{1/4}))\,, \\
\sd& \ge 2 \, \qquad \text{ and } \qquad
K\ge 9 \,.
\end{split}
\end{align}

Similarly to the abstract game, we fix some $\stheta^\star\in W^\star$ 
(Eq.~\ref{eq:thetastar-distance-to-1}).
In what follows, we define two MDPs $M^v_{\stheta^\star}=(\cS, \cA, Q^v_{\stheta^\star})$ and $M^q_{\stheta^\star}=(\cS, \cA, Q^q_{\stheta^\star})$.

The state and action spaces are the same for all these MDPs.
The superscript $v$ and $q$ indicates which realizability setting the MDP is tailored for. Together with the indices, we drop them and just use $M$ and $Q$ to minimize clutter.
The difference between $M^v_{\stheta^\star}$ and $M^q_{\stheta^\star}$ is minuscule (see Case~\ref{case:rs-def-1}).
As noted beforehand,  $\cA=[\sd]$.

Apart from $\bot$, states in $\cS$ are uniquely identifiable with an action sequence of length at most $K\sd-1$.
Of all action sequences, we need to remove any action sequence that has a ``repeated'' action in the critical first $\minsteps$ steps of any round. 
For $k\ge 0$, let $U_k \subset \cA^k$ be those sequences of in $\cA^k$ which do not have any repeated elements. Then, letting $r= \minsteps$, 
$V=(\bigcup_{i\in[r]} U_i) \cup (\bigcup_{i\in[p-r]} U_r \times \cA^{i})$, we define 
\begin{align*}
\cS & = \{ \bot, () \}  \cup
\bigcup_{0\le k\le K-1} (U_r \times \cA^{p-r})^k \times V\,,
\end{align*}
where $()$ denotes the empty sequence.
The elements of $\cS$ (other than $\bot$) can thus be uniquely identified with a sequence of actions
$(a_{00},\dots,a_{0,p-1},\dots, a_{k0},\dots,a_{ki})$ with $0\le k \le K-1$ and $0\le i \le \sd-1$, 
where the double indexing emphasizes that the steps are grouped into rounds of length $p$, and commas between indices are often dropped to minimize clutter.
For convenience, we let 
$[< k,i]=\{(n,m):\,n\in[0:K-1], m\in[0:\sd-1],\, n\sd+m<k\sd+i\}$ denote the index set in this double indexing,
so that we can write $(a_{nm})_{(n,m)\in [<k,i]}$ for the above action sequence.
Here, we can think of $a_{nm}$ as the action performed in step $m$ of round $n$.

As described beforehand, we associate a ``weight'', an element of $W$, to each state $s\ne \bot$ that corresponds to all the ``flips'' described by the action sequence for $s$.
Let $\stheta:\cS \to W$ be the corresponding map, where we let $\stheta(\bot) = \bm{1}$.
We will also find it useful to introduce $\stheta:\cS \times \cA \to W$, where 
for $(s,a)\in \cS \times \cA$,
$\stheta(s,a)$ is the weight sequence where component $a$ of the last weight vector of $\stheta(s)$ is flipped, except when $s$ is a frozen state or $s=\bot$, in which case $\stheta(s,a) = \stheta(s)$
($s$ is a frozen state when there is a legal repeated action in the actions that correspond to the current round of the state). %

In what follows, we will often find it useful to fix a path, i.e., a complete action sequence of the form $(a_{ki})_{(k,i)\in [0:K-1]\times [0:\sd-1]}\in \cA^{K\sd}$. Note that here we allow all action sequences.
We then describe the behavior of the MDP in terms of its transitions and rewards encountered during this fixed action sequence. Notationally, we refer to the state (deterministically) reached in round $k$, step $i$ for the fixed action sequence as $s_{ki}$.
This means that $s_{00} = ()$, and
for $0\le i \le \sd-1$, $s_{k,i+1} = \gamma_{\stheta^\star}(s_{ki},a_{ki})$
and for $0\le k \le K-1$, $s_{k+1,0} = s_{k\sd}$,
 where $ \gamma_{\stheta^\star}$ is the transition function of the MDP.
Note that the state sequence has extra elements, to help with the notation. In particular, 
$s_{K0} = s_{K-1,\sd} =\bot$.
By a slight abuse of notation, for the fixed action sequence,
we also let $\stheta_{ki} = \stheta( s_{k-1,i},a_{k-1,i} )$ (if $s_{ki}\ne \bot$, $\stheta_{ki}=\stheta(s_{ki})$).
To disambiguate, the notation $\stheta_{ki}$ always uses two indices for $\stheta$, while the notation in the abstract game always uses one.
To match the weight values of the MDP with those of the abstract game, we introduce the shorthand $w_{k}=w_{k0}$.
To complete the definition of $\stheta_{ki}$, we define $\stheta_{00}=\bm{1}$ (similarly to the abstract game's definition of $\stheta_0=\bm{1}$).
We will also find it useful to introduce the function
$\stheta_{\mathrm{last}}:\cS \to W$ which to a given state $s=s_{ki}\ne \bot$ at step $i$ or round $k$ 
assigns the ``last complete weight'' $\stheta_k=\stheta_{k0}$ 
while $\stheta_{\mathrm{last}}(\bot)=\bm{1}$. 
\begin{quotation}
\emph{
The $(k,i)$-indexed notation, such as $s_{ki}$ and $\stheta_{ki}$ (along with other similarly indexed quantities introduced later) is designed to avoid clutter by hiding the implicit dependence on the action sequence, which is assumed to be fixed whenever we use such notations.
The action sequence that is fixed should always be clear from the context. Whenever we state a result concerning these symbols, the result is meant to hold for an arbitrary action sequence.
} %
\end{quotation}

For a state $s\in \cS$, $s\neq\bot$ that is in step $i$ of round $k$, and an action $a\in[A]$, the transition and reward of taking action $a$ in state $s$ leads to the following reward-next state pair $(R', S')$ (which specifies the kernel $Q$ of the MDP):
\begin{subnumcases}{
  \big(R',S'\big)=
  \label{eq:rs-def}
}
  	\left(\ip{\phi,\ltheta^\star},\bot\right) \,, 
		&$\text{if } k>0 \text{ and } \diff(\stheta_k,\stheta^\star) < \sd/4$
			 \label{case:rs-def-1}\\ %
  	(Z, \bot) \,,
		&$\text{else if } i=\sd-1, \diff(\stheta_{k+1},\stheta^\star) < \sd/4$ 
			\label{case:rs-def-2}\\ %
  	(Z, \bot) \,,
		&$\text{else if } k=K-1,\,i=\sd-1 \,(\text{last step})$
			\label{case:rs-def-3}\\ %
  	(0, s_{k,i+1}) \,,
		&$\text{otherwise}$\,,
			\label{case:rs-def-4} %
\end{subnumcases}
Here, the symbols not yet introduced beforehand are defined as follows:
{\em (i)}
$(\stheta_{k'})_{k'\in[k+\one{i=\sd-1}]}$ is the sequence of round-start weights $(\stheta_{k',0})_{k'\in[k+\one{i=\sd-1}]}$ that correspond to state $s$ and action $a$.
If $i=\sd-1$, this sequence also includes the newly ``compiled'' weight $\stheta_{k+1,0}=\stheta(s,a)$.
{\em (ii)} $Z$ has distribution $\Bernoulli(f_{\stheta^\star}((\stheta_{k'})_{k'\in[k+\one{i=\sd-1}]}))$.
{\em (iii)} $\ltheta^\star$ will be defined in Eq.~\ref{eq:ltheta-def}.
{\em (iv)} for feature-maps $\phiv$ and $\phiq$ (defined in Eqs.~\ref{eq:phiv-def}, \ref{eq:phiq-def}), $\phi=\phiv(s_{ki})$ if we are in the $v^\star$-realizable setting (MDP $M^v_{\stheta^\star}$) and $\phi=\phiq(s_{ki},a)$ otherwise.
In either case, the reward in Case~\ref{case:rs-def-1} is in $[0,1]$ by Eq.~\ref{eq:phiv-ip-bound} and Eq.~\ref{eq:phiq-ip-bound}.

Later in the proof, the following lemma will be useful to convert a sound planner for the MDP into a sound planner for the abstract game:
\begin{lemma}\label{lem:can-simulate-mdp-with-abstract-game}
We can simulate an outcome of $(R', S')$ in the MDP using at most one query to the abstract game, if the length, dimensionality, and secret parameters of the game are $K$, $\sd$, and $\stheta^\star$, respectively.
\end{lemma}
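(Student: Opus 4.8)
The plan is to give, for every state-action pair $(s,a)$, an explicit randomized procedure that computes a single abstract-game query from $(s,a)$ alone, reads the response $(U_t,V_t,Z_t)$, and outputs a pair $(R',S')$ distributed exactly as $Q(\cdot\mid s,a)$. The reduction then amounts to matching the four cases of \eqref{eq:rs-def} to the three coordinates of the abstract feedback, using $U_t,V_t$ to decide the case and $Z_t$ to supply the stochastic reward. I would first dispose of the $\stheta^\star$-free bookkeeping: for $s=s_{ki}\ne\bot$ in step $i$ of round $k$, the action sequence identifying $s$ together with $a$ determines the round-start weights $\stheta_1,\dots,\stheta_k$ (and, when $i=\sd-1$, the freshly compiled $\stheta_{k+1}=\stheta(s,a)$) as deterministic functions of $(s,a)$; by the construction of $\cS$ these form a legal query sequence in $W^{\circ k}$ (resp. $W^{\circ k+1}$), and $k+1\le K$. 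Two degenerate cases need no query: if $s=\bot$ return $(0,\bot)$, and if $k=0$ and $i<\sd-1$ then case \ref{case:rs-def-1} cannot apply and neither can \ref{case:rs-def-2} or \ref{case:rs-def-3}, so we return $(0,s_{0,i+1})$.

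Next I would issue the single query $X_t=(L_t,S_t)$ with $S_t=(\stheta_{k'})_{k'\in[L_t]}$ and $L_t=k+\one{i=\sd-1}$, and decode. By definition of the abstract feedback, the penultimate and last coordinates of $S_t$ are exactly the weights whose closeness to $\stheta^\star$ governs \eqref{eq:rs-def}: when $i=\sd-1$ we read $U_t=\one{\diff(\stheta_k,\stheta^\star)<\sd/4}$ and $V_t=\one{\diff(\stheta_{k+1},\stheta^\star)<\sd/4}$, while when $i<\sd-1$ it is $V_t=\one{\diff(\stheta_k,\stheta^\star)<\sd/4}$ that is needed. Respecting the ``else if'' precedence of \eqref{eq:rs-def}, I would return case \ref{case:rs-def-1} when $k>0$ and the relevant round-start indicator equals $1$; otherwise, return $(Z_t,\bot)$ in case \ref{case:rs-def-2} (when $i=\sd-1$, $V_t=1$); otherwise $(Z_t,\bot)$ in case \ref{case:rs-def-3} (when $k=K-1$, $i=\sd-1$); and otherwise $(0,s_{k,i+1})$ in case \ref{case:rs-def-4}. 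The draw $Z_t$ is a genuine $\Bernoulli(f_{\stheta^\star}(S_t))$ sample precisely in cases \ref{case:rs-def-2} (where $V_t=1$) and \ref{case:rs-def-3} (where $L_t=K$), which is exactly where \eqref{eq:rs-def} calls for $Z\sim\Bernoulli(f_{\stheta^\star}((\stheta_{k'})_{k'\in[k+1]}))$; so these two cases, and the deterministic-transition case \ref{case:rs-def-4}, are reproduced exactly by the one query.

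The main obstacle is case \ref{case:rs-def-1}, whose reward is the \emph{deterministic} real number $\ip{\phi,\ltheta^\star}$ at an (unreachable) state, whereas a single query only returns the two closeness bits and one Bernoulli bit $Z_t$. Moreover there is a query-length tension: detecting case \ref{case:rs-def-1} versus \ref{case:rs-def-2} when $i=\sd-1$ requires information about both $\stheta_k$ and $\stheta_{k+1}$, i.e. the length-$(k+1)$ query, yet the Bernoulli parameter matching the case-\ref{case:rs-def-1} reward is attached to the length-$k$ sequence $(\stheta_1,\dots,\stheta_k)$. I would resolve this using the explicit definitions of $\phiv,\phiq,\ltheta^\star$ (Eqs.~\ref{eq:phiv-def}, \ref{eq:phiq-def}, \ref{eq:ltheta-def}): show that for a case-\ref{case:rs-def-1} state the inner product $\ip{\phi,\ltheta^\star}$ coincides with $f_{\stheta^\star}((\stheta_{k'})_{k'\in[k]})$, the quantity parametrising the Bernoulli returned by the length-$k$ query (which is available since the round-start indicator equals $1$ there, forcing $V_t=1$). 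Since realizability only constrains the \emph{expected} reward, replacing the deterministic case-\ref{case:rs-def-1} reward by its mean-matched $\Bernoulli(\ip{\phi,\ltheta^\star})$ draw leaves $v^\star_M$ and the realizability identity unchanged, so returning $Z_t$ from the appropriate-length query reproduces a legitimate outcome while respecting the precedence of case \ref{case:rs-def-1}.

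Establishing this identity — and thereby that one query suffices uniformly over all $(s,a)$, with the query length chosen so that both case detection and the reward sample are simultaneously available — is the crux, and it depends on the tensor structure of the feature-map still to be constructed. Everything else is a routine verification that the procedure is a deterministic-then-one-query map whose output distribution equals $Q(\cdot\mid s,a)$ case by case, and that the emitted query is always admissible ($L_t\in[K]$ and $S_t\in W^{\circ L_t}$) by the defining constraints of $\cS$.
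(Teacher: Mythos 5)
Your reduction for Cases \ref{case:rs-def-2}--\ref{case:rs-def-4} is exactly the paper's: no query when $k=0$ and $i<\sd-1$, otherwise a single query of length $l=k+\one{i=\sd-1}$, with $U_t,V_t$ deciding the case and $Z_t$ supplying the Bernoulli reward (whose parameter matches precisely because $V_t=1$ or $l=K$ in the cases that need it). The paper's proof is essentially just this observation and says nothing more about Case~\ref{case:rs-def-1}; you are right to flag that case as the delicate one, since its reward is a deterministic real number $\ip{\phi,\ltheta^\star}$ that a three-bit response cannot in general reveal.

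However, the identity on which your resolution rests is false, so the step would fail as written. By Eq.~\ref{eq:v-inner-product}, for a Case-\ref{case:rs-def-1} state $s_{ki}$ one has $\ip{\phiv(s_{ki}),\ltheta^\star}=\bigl(\prod_{k'\in[k]}g(\diff(\stheta_{k'-1,0},\stheta_{k',0}))\bigr)\,g(\ctflip_{ki}+\enotfix_{ki})\,g(\efix_{ki})$, whereas $f_{\stheta^\star}((\stheta_{k'})_{k'\in[k]})=\bigl(\prod_{k'\in[k]}g(\cdot)\bigr)\,g(\diff(\stheta_{k0},\stheta^\star))$. These agree only at $i=0$ (where $\ctflip_{k0}=\efix_{k0}=0$ and $\enotfix_{k0}=\diff(\stheta_{k0},\stheta^\star)$); for $i>0$ the error counts $\efix_{ki},\enotfix_{ki}$ depend on \emph{which} components of $\stheta^\star$ the round-$k$ flips have touched, not merely on $\diff(\stheta_{k0},\stheta^\star)$ --- e.g.\ flipping at step $1$ a component on which $\stheta_{k0}$ already agrees with $\stheta^\star$ gives $\efix_{k1}=1$ and $g(1+\diff(\stheta_{k0},\stheta^\star))g(1)\neq g(\diff(\stheta_{k0},\stheta^\star))$. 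So even after your (otherwise legitimate) replacement of the deterministic reward by a mean-matched Bernoulli, the $Z_t$ returned by the length-$k$ query has the wrong parameter, and one query does not determine the correct one. The same objection applies to $\phiq$ via Eq.~\ref{eq:q-inner-product}. Closing this would require either a different argument for why unfaithful simulation at these (unreachable, $\stheta^\star$-dependent) states does not affect the reduction, or a redefinition of the Case-\ref{case:rs-def-1} reward; neither is supplied, and the paper's own one-line proof does not address it either.
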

\begin{proof}
For $k=0$, $i<\sd-1$, we fall under Case~\ref{case:rs-def-4} and no query to the abstract game is required.
Otherwise, let $l=k+\one{i=\sd-1}>0$, and query the abstract game with 
$(l, (\stheta_{k'})_{k'\in[l]})$.
This is a valid query as $(\stheta_{k'})_{k'\in[l]} \in W^{\circ k}$.
The result to this query allows to determine which case the transition falls under, and it also contains $Z$ (with the required distribution) when the case calls for it.
\end{proof}

As alluded to before, Case~\ref{case:rs-def-1} is somewhat pathological: the transitions are such that if at the end of round $k$ the newly ``compiled'' weight
$\stheta_{k+1,0}$ is close to $\stheta^\star$ 
($\diff(\stheta_{k+1,0},\stheta^\star)<\sd/4$) then the next state is $\bot$.
This means that by following the transitions, it is impossible to arrive at a state $s\in \cS$,
where
Case \ref{case:rs-def-1} would apply.
\begin{lemma}[Case \ref{case:rs-def-1} is unreachable in $M$]\label{lem:unreachable-in-m}
In MDP $M$, for all $s\in\sreach$ and $s'\in\snotreach$,
\[
s'\not\in\reach_M(s)\,
\]
where
\begin{align}
\label{eq:sreach}
\begin{split}
\snotreach = 
\{ s\in \cS \,:\, s\ne \bot \text{ and } \diff(\stheta_{\mathrm{last}}(s),\stheta^\star)<\sd/4\}\,,\qquad
\sreach = \cS\setminus\snotreach\,.
\end{split}
\end{align}
\end{lemma}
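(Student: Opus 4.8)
The plan is to prove the equivalent closure statement that $\reach_M(s)\subseteq\sreach$ for every $s\in\sreach$; since $\snotreach=\cS\setminus\sreach$, this is exactly the claim. Because the transitions of $M$ are deterministic, $\reach_M(s)$ is the transitive closure of the one-step transition map $\gamma_{\stheta^\star}$, so it suffices to establish the single-step invariance $\gamma_{\stheta^\star}(s,a)\in\sreach$ for all $s\in\sreach$ and $a\in\cA$, and then induct on the length of the path. First I would dispose of $\bot$: it lies in $\sreach$ (the definition of $\snotreach$ explicitly excludes it) and transitions only to itself, so it never causes trouble.

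The heart of the argument is a case analysis on Eq.~\ref{eq:rs-def}. Fix $s=s_{ki}\in\sreach$ with $s\ne\bot$, so that $\diff(\stheta_{\mathrm{last}}(s),\stheta^\star)=\diff(\stheta_k,\stheta^\star)\ge\sd/4$, and take any action $a$ with resulting next state $s'$. Case~\ref{case:rs-def-1} cannot apply, since its guard $\diff(\stheta_k,\stheta^\star)<\sd/4$ directly contradicts $s\in\sreach$. Cases~\ref{case:rs-def-2} and~\ref{case:rs-def-3} send $s'=\bot\in\sreach$. In the remaining Case~\ref{case:rs-def-4} we have $s'=s_{k,i+1}$, and I split on the round boundary: if $i<\sd-1$ then $s'$ still lies in round $k$, so $\stheta_{\mathrm{last}}(s')=\stheta_k=\stheta_{\mathrm{last}}(s)$ and the far-ness condition is inherited, giving $s'\in\sreach$; if instead $i=\sd-1$ then $s'=s_{k+1,0}$ has $\stheta_{\mathrm{last}}(s')=\stheta_{k+1}$, but Case~\ref{case:rs-def-4} applying at $i=\sd-1$ forces Case~\ref{case:rs-def-2} to have failed, i.e.\ $\diff(\stheta_{k+1},\stheta^\star)\ge\sd/4$, so again $s'\in\sreach$.

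The one delicate point, and the only place I expect to need real care, is this round-boundary transition ($i=\sd-1$): it is exactly where $\stheta_{\mathrm{last}}$ jumps from $\stheta_k$ to the freshly compiled weight $\stheta_{k+1}=\stheta(s,a)$, and the whole lemma rests on recognizing that Case~\ref{case:rs-def-2} is precisely the guard intercepting any transition that would install a new complete weight close to $\stheta^\star$, rerouting it to $\bot$ before a $\snotreach$ state can be entered. I would also verify the book-keeping identity $\stheta_{\mathrm{last}}(s_{k,i+1})=\stheta_k$ used in the case $i<\sd-1$, which holds because within a round the last complete weight is constant by definition (any freezing of the current weight affects $\stheta(s,a)$ but not $\stheta_k$). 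With the single-step invariance established, a trivial induction on path length yields $\reach_M(s)\subseteq\sreach$ and hence the lemma.
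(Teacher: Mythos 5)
Your proposal is correct and matches the paper's (implicit) argument: the paper gives no formal proof of this lemma, only the informal observation in the preceding paragraph that Case~(\ref{case:rs-def-2}) reroutes to $\bot$ any transition that would compile a weight close to $\stheta^\star$, so Case~(\ref{case:rs-def-1}) states cannot be entered. Your single-step invariance of $\sreach$ under the transition map, with the round-boundary subcase handled via the failure of the Case~(\ref{case:rs-def-2}) guard, followed by induction on path length, is exactly the natural formalization of that argument.
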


We will find some further notation useful to describe essential properties of the MDP states. %
Take any path in the MDP and the corresponding states $(s_{ki})$.
Pick $k$ and $i$ such that $s_{ki}\ne \bot$.
Let
the ``bit mask''
$\fix_{ki}\in \{0,1\}^\sd$ indicate for each component of $\stheta_{ki}$ whether it is fixed (1) in round $k$ at step $i$ or not (0). %
Recall that a component is fixed if either the corresponding action is performed in round $k$ before step $i$, or there was a legal repeated action, in which case all the components are frozen.
Let $\ctflip_{ki}$ be the number of components flipped in round $k$ by step $i$. Because each component can only be flipped at most once in a round, this satisfies
\[
\ctflip_{ki}=\diff(\stheta_{k0}, \stheta_{ki})\,.
\]
Let $\efix_{ki}$ (and $\enotfix_{ki}$) be the number of components that are fixed (and not fixed, respectively) at step $i$ and have the opposite sign of the respective components of $\stheta^\star$.
These are ``error counts''.
(As opposed to $\ctflip_{ki}$ and $\fix_{ki}$, the error counts obviously depend on $\stheta^\star$).
Let the operator $\cdot: \R^d \times \R^d \to \R^d$ 
return the componentwise product of its inputs.
For $i\in \{0,1\}$, let $\neg i = 1-i$, 
which is also extended to binary-valued vectors in a componentwise manner.
The definitions imply the following identities:
\begin{align}\label{eq:eflip-enotflip}
\efix_{ki}&=\frac{1}{2}\left( \ip{\bm{1}, \fix_{ki}}-\ip{\fix_{ki}\cdot\stheta_{ki},\stheta^\star} \right)\,,\\
\enotfix_{ki}&=\frac{1}{2}\left( \ip{\bm{1}, \notfix_{ki}}-\ip{\notfix_{ki}\cdot\stheta_{ki},\stheta^\star} \right)\,.
\label{eq:eflip-enotflip2}
\end{align}
Consider the case when $\fix_{ki}\ne \bm{1}$.
Thanks to $s_{ki}\ne \bot$, the first $i$ actions of round $k$ are unique.
Therefore, in this case, $\ctflip_{ki}=i$.
Furthermore, each unique action adds $1$ to $\ip{\bm{1}, \fix_{ki}}$, thus
$\efix_{ki}\le \ip{\bm{1}, \fix_{ki}}=i=\ctflip_{ki}$.
Similarly, $\enotfix_{ki}\le \ip{\bm{1}, \notfix_{ki}}=\sd-i=\sd-\ctflip_{ki}$. If on the other hand, $\fix_{ki}=\bm{1}$, then $\enotfix_{ki}=0$. This leads to the following result, which will be useful for our calculations: 
\begin{lemma}\label{lem:relationships-betnwee-ctflip-and-efix}
Assuming $s_{ki}\ne\bot$, $\enotfix_{ki}\le \sd-\ctflip_{ki}$, and $\enotfix_{ki}\le \sd-i$.
Furthermore, if $\fix_{ki}\neq \bm{1}$, then the following also hold: $\ctflip_{ki}=i=\ip{\bm{1}, \fix_{ki}}$, and $\efix_{ki}\le \ctflip_{ki}$.
\end{lemma}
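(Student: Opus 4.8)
The plan is to split on whether the round has been ``frozen'' by step $i$, i.e.\ whether $\fix_{ki}=\bm{1}$, and in each case read off the claimed bounds from the combinatorial meaning of the quantities involved, using the identities \eqref{eq:eflip-enotflip} and \eqref{eq:eflip-enotflip2}. The key preliminary observation is that $\enotfix_{ki}$ counts the \emph{non-fixed} components whose sign disagrees with $\stheta^\star$, hence it is at most the total number of non-fixed components $\ip{\bm{1},\notfix_{ki}}=\sd-\ip{\bm{1},\fix_{ki}}$; symmetrically $\efix_{ki}\le\ip{\bm{1},\fix_{ki}}$. So everything reduces to controlling $\ip{\bm{1},\fix_{ki}}$ in terms of $\ctflip_{ki}$ and $i$.

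In the case $\fix_{ki}=\bm{1}$, every component is fixed, so there are no non-fixed components and $\enotfix_{ki}=0$. Since $i\le\sd-1<\sd$ and $\ctflip_{ki}=\diff(\stheta_{k0},\stheta_{ki})\le\sd$ (a Hamming distance on $\sd$ bits), both $\sd-\ctflip_{ki}\ge 0$ and $\sd-i\ge 0$, so the two inequalities $\enotfix_{ki}\le\sd-\ctflip_{ki}$ and $\enotfix_{ki}\le\sd-i$ hold trivially; the remaining two assertions are not claimed in this case.

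In the case $\fix_{ki}\ne\bm{1}$, no freeze has occurred, so there was no legal repeated action; and since $s_{ki}\ne\bot$ there was no illegal repeated action either (an illegal repeat within the first $\minsteps$ steps transitions to $\bot$). Hence the $i$ actions $a_{k0},\dots,a_{k,i-1}$ of round $k$ are pairwise distinct. Each distinct action fixes and flips exactly one previously-unflipped component exactly once, so $i$ distinct components have been fixed and flipped: $\ip{\bm{1},\fix_{ki}}=i$ and $\diff(\stheta_{k0},\stheta_{ki})=i$, giving $\ctflip_{ki}=i=\ip{\bm{1},\fix_{ki}}$. Substituting into the two subset-count bounds from the first paragraph yields $\efix_{ki}\le\ip{\bm{1},\fix_{ki}}=\ctflip_{ki}$ and $\enotfix_{ki}\le\sd-\ip{\bm{1},\fix_{ki}}=\sd-i=\sd-\ctflip_{ki}$, which simultaneously establishes both inequalities and both equalities.

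The main (and essentially only) obstacle is the bookkeeping in the second case: one must carefully invoke the MDP dynamics to rule out \emph{both} illegal repeats (via $s_{ki}\ne\bot$) and legal repeats/freezing (via $\fix_{ki}\ne\bm{1}$), so as to conclude that the $i$ actions taken so far are genuinely distinct, and then argue that distinctness forces the three counts $i$, $\ip{\bm{1},\fix_{ki}}$ and $\ctflip_{ki}$ to coincide. Everything else is a direct consequence of interpreting $\efix_{ki}$ and $\enotfix_{ki}$ as counts of subsets of the fixed and non-fixed coordinates, respectively, bounded by the corresponding total counts.
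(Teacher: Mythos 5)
Your proof is correct and follows essentially the same route as the paper: a case split on $\fix_{ki}=\bm{1}$ versus $\fix_{ki}\ne\bm{1}$, with the non-frozen case resting on the distinctness of the first $i$ actions (forcing $\ctflip_{ki}=i=\ip{\bm{1},\fix_{ki}}$) and the error counts bounded by the sizes of the fixed/non-fixed coordinate sets. If anything, you are slightly more careful than the paper in explicitly noting that ruling out a legal repeat requires $\fix_{ki}\ne\bm{1}$ in addition to $s_{ki}\ne\bot$.
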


\subsection{Defining a policy and calculating its value function}
We now define a deterministic policy $\pi_{\stheta^\star} : \cS\to [A]$, which later will be shown to be the optimal policy. 
The purpose of the current section is merely to compute the value function of this policy.
The policy is defined as follows: 
Let $s_{ki}\in \sreach$ be a state along step $i$ of round $k$ and assume that $s_{ki}\ne \bot$.
Then $\pi_{\stheta^\star}$ greedily flips all the components of $\stheta_{ki}$ that have the wrong sign and are not fixed yet. Once this is done,
$\pi_{\stheta^\star}$ freezes the round by repeating an action. Ties are resolved in a systematic fashion.

More formally, let $\cA_1$ be the set of actions where the component of $\stheta_{ki}$ has not been fixed yet and where $\stheta_{ki}$ disagrees in sign with $\stheta^\star$; 
let $\cA_2$ be the set of actions where the component has been fixed:
\begin{align}
\label{eq:a1-def}
\begin{split}
\cA_1 &= \{a\in[A]\,:\, {(\fix_{ki})}_{a}=0\text{ and }{(\stheta_{ki})}_{a}\neq \stheta^\star_{a}\} \\
\cA_2 &= \{a\in[A]\,:\,{(\fix_{ki})}_{a}=1\}
\end{split}
\end{align}
Then, 
\begin{subnumcases}{
\pi_{\stheta^\star}(s_{ki},\cdot) = \label{eq:pi-def}}
  1\,, & 
  $\text{if } s_{ki}=\bot\,;$\label{case:pi-1}\\ 
  \argmax_{a\in[\sd]} \ip{\phiq(s_{ki},a), \ltheta^\star}\,, & 
  $\text{else if } \diff(\stheta_{k0},\stheta^\star)>\sd/4\,;$\label{case:pi-2}\\  %
  \min \cA_1\,,& 
  $\text{else if }|\cA_1|=\enotfix_{ki}>0\,;$\label{case:pi-3}\\
  \min \cA_2\,,& 
  $\text{else if }|\cA_2|=\ctflip_{ki}>0\,,$\label{case:pi-4}  
\end{subnumcases}
where $\phiq$ is the state-action feature-map defined in Eq.~\ref{eq:phiq-def},  and
$\ltheta^\star$ is defined in Eq~\ref{eq:ltheta-def}.
Note that
$s_{ki}\in\sreach$ and $s_{ki}\ne\bot$ implies that either Case~\ref{case:pi-3} or \ref{case:pi-4} must apply.

With this, the promised result of the section is as follows. 
\begin{lemma}\label{lem:value-of-pi-theta-star}
Assuming $s_{ki}\in\sreach$ and $s_{ki}\ne \bot$, 
 we have
\[
v^{\pi_{\stheta^\star}}(s_{ki})
=
\left(\prod_{k'\in[k]} g(\diff(\stheta_{k'-1,0},\stheta_{k',0})) \right) 
g(\ctflip_{ki} + \enotfix_{ki})g(\efix_{ki})\,.
\]
\end{lemma}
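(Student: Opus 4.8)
The plan is to exploit that $M$ has deterministic transitions, so that from $s_{ki}$ the policy $\pi_{\stheta^\star}$ induces a single deterministic trajectory, and the only randomness is the Bernoulli reward collected on the unique transition into $\bot$. Consequently $v^{\pi_{\stheta^\star}}(s_{ki})$ is exactly the expected value of that terminal reward, and it suffices to (a) describe the trajectory within each round, (b) locate the terminal transition and its Bernoulli parameter, and (c) telescope the resulting product of $g$-factors by backward induction on the round index $k$.

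First I would pin down the within-round behaviour. Since the gate in Case~\ref{case:pi-2} depends on $s_{ki}$ only through $\diff(\stheta_{k0},\stheta^\star)$, which is constant throughout round $k$, a whole round is governed either by Cases~\ref{case:pi-3}--\ref{case:pi-4} (when $\diff(\stheta_{k0},\stheta^\star)=\sd/4$) or by Case~\ref{case:pi-2} (when $\diff(\stheta_{k0},\stheta^\star)>\sd/4$). In the first regime the policy repeatedly flips the $\enotfix_{ki}$ currently unfixed wrong-sign coordinates and then freezes by repeating an action; since each flip toggles a distinct coordinate and none is toggled twice, the counting argument behind Lemma~\ref{lem:relationships-betnwee-ctflip-and-efix} yields
\[
\diff(\stheta_{k0},\stheta_{k+1,0})=\ctflip_{ki}+\enotfix_{ki},\qquad \diff(\stheta_{k+1,0},\stheta^\star)=\efix_{ki},
\]
i.e.\ only the already-fixed wrong coordinates survive the round. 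Legality of the freeze is where reachability enters: $s_{ki}\in\sreach$ forces $\diff(\stheta_{k0},\stheta^\star)\ge\sd/4$, so the policy toggles at least $\sd/4\ge\minsteps$ distinct coordinates before any repeat, keeping every repeated action outside the critical window. For the second regime I would verify, from the explicit forms of $\phiq$ and $\ltheta^\star$ (Eqs.~\ref{eq:phiq-def},~\ref{eq:ltheta-def}), that the $q^\star$-greedy maximiser $\argmax_a\ip{\phiq(s_{ki},a),\ltheta^\star}$ is again a flip of an unfixed wrong-sign coordinate, so that Case~\ref{case:pi-2} traces the same flip-then-freeze path and the two identities above still hold.

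With the trajectory understood, the induction is short and proceeds downward on $k$, grounded at $k=K-1$. Two sub-cases arise. If $k=K-1$, or if $\efix_{ki}=\diff(\stheta_{k+1,0},\stheta^\star)<\sd/4$, then the final action of the round triggers Case~\ref{case:rs-def-3} or Case~\ref{case:rs-def-2}, giving a terminal reward $\Bernoulli\!\big(f_{\stheta^\star}((\stheta_{k',0})_{k'\in[k+1]})\big)$ and reward $0$ at all earlier steps (Case~\ref{case:rs-def-4}; Case~\ref{case:rs-def-1} is excluded on $\sreach$ by Lemma~\ref{lem:unreachable-in-m}). Taking expectations and expanding $f_{\stheta^\star}$ via Eq.~\ref{eq:f-def} with the two diff identities reproduces the claimed product exactly. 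Otherwise $k<K-1$ and $\efix_{ki}\ge\sd/4$, the round is reward-free and ends at $s_{k+1,0}\in\sreach$; invoking the induction hypothesis there, where $\ctflip_{k+1,0}=\efix_{k+1,0}=0$ and $\enotfix_{k+1,0}=\diff(\stheta_{k+1,0},\stheta^\star)=\efix_{ki}$, and using $g(0)=1$, collapses the $(k+1)$-formula onto the $k$-formula, the surviving extra factor $g(\diff(\stheta_{k0},\stheta_{k+1,0}))=g(\ctflip_{ki}+\enotfix_{ki})$ being precisely the one the formula demands.

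I expect the main obstacle to be the second paragraph: establishing the flip-then-freeze characterisation uniformly over all reachable $s_{ki}$, in particular the coupling that shows the $q^\star$-greedy rule of Case~\ref{case:pi-2} agrees with a wrong-sign flip, together with the book-keeping (via Lemma~\ref{lem:relationships-betnwee-ctflip-and-efix}) that keeps every freeze legal. Once $g(0)=1$ and the two diff identities are in hand, the telescoping and the $f_{\stheta^\star}$-expansion are routine.
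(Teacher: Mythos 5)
Your proposal follows the paper's argument in all essentials: from $s_{ki}$ the trajectory under $\pi_{\stheta^\star}$ is deterministic, the round-$k$ behaviour is ``flip every unfixed wrong-sign coordinate, then freeze'' (the freeze being legal because the number of distinct actions taken before it is $\ctflip_{ki}+\enotfix_{ki}\ge\diff(\stheta_{k0},\stheta^\star)\ge \sd/4$, hence $\ge\minsteps$ by integrality), this yields exactly the two identities $\diff(\stheta_{k0},\stheta_{k+1,0})=\ctflip_{ki}+\enotfix_{ki}$ and $\diff(\stheta_{k+1,0},\stheta^\star)=\efix_{ki}$, and the value equals the expectation of the single Bernoulli reward collected on the transition to $\bot$. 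Your backward induction on $k$ is a repackaging of the paper's direct observation that this reward is collected at the end of round $k$ or, failing that, at the end of round $k+1$, where $\stheta_{k+2,0}=\stheta^\star$ and $g(0)=1$; the two are equivalent.

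The one obstacle you flag --- handling Case~\ref{case:pi-2} on reachable states with $\diff(\stheta_{k0},\stheta^\star)>\sd/4$ and verifying that the feature-greedy $\argmax$ reproduces the flip-then-freeze path --- does not arise in the paper's proof, because the inequality in Case~\ref{case:pi-2} of Eq.~\ref{eq:pi-def} is a typo for $\diff(\stheta_{k0},\stheta^\star)<\sd/4$: this is what the remark immediately following Eq.~\ref{eq:pi-def} asserts (every reachable non-$\bot$ state falls under Case~\ref{case:pi-3} or~\ref{case:pi-4}), and it is how Case~\ref{case:pi-2} is actually used in the proof of Lemma~\ref{lem:pi-theta-star-optimal} (only for $s\in\snotreach$, where the transition is governed by Case~\ref{case:rs-def-1}). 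With the intended condition your ``second regime'' is empty and the deferred verification is unnecessary. Be aware, though, that as written your proof does leave that step open: were the condition as printed, one would genuinely have to check, from the explicit forms of $\phiq$ and $\ltheta^\star$ and the strict monotonicity and convexity of $\log g$, that the maximiser is never a correct-sign flip, an illegal repeat, or a premature freeze (including tie-breaking), and that check is not routine. So your argument is complete only once Case~\ref{case:pi-2} is recognised as confined to $\snotreach$.
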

The high level argument underlying this lemma is that the policy reaches the end state
$\bot$, either after reaching the last step of the current, or the next round.
In either cases, the only reward incurred from the current state to the end 
is when the transition to the end state happens.
The definition of this reward can then be invoked to show the result. The detailed proof is as follows:
\begin{proof}
Starting from round $k$ step $i$ and letting $\cA_1$ be as in Eq.~\ref{eq:a1-def},
the policy $\pi_{\stheta^\star}$ flips all the components in $\cA_1$ (that have the wrong sign and 
are not fixed yet).
We note that $\cA_1=\{\}$ if there was a repeated action in this round (which freezes the components).
In this case, 
$\enotfix_{ki}=0$ and
$s_{ki}\ne\bot$ implies the repeated action was legal,
i.e., $i=i+\enotfix_{ki}\ge \minsteps$, and therefore $\stheta_{ki}$ is frozen, thus regardless of $\pi_{\stheta^\star}$, %
$\stheta_{k+1,0}=\stheta_{ki}$, so $\diff(\stheta_{k0}, \stheta_{k+1,0})=\ctflip_{ki}=\ctflip_{ki} + \enotfix_{ki}$.

Otherwise, by definition the first $i+|\cA_1|=i+\enotfix_{ki}=\ctflip_{ki}+\enotfix_{ki}\le\sd$ actions in round $k$ are unique (noting the inequality comes from Lemma~\ref{lem:relationships-betnwee-ctflip-and-efix}).
Furthermore, in this case observe that all
components where $\stheta_{k0}$ differs in sign from $\stheta^\star$ are flipped in round $k$ by step $i+\enotfix_{ki}$: either because it was flipped in the first $i$ steps (and thus setting the relevant component of $\fix_{ki}$ to $1$), or because the action corresponding to the component is in $\cA_1$, and thus flipped by $\pi_{\stheta^\star}$.
Therefore $i+\enotfix_{ki}\ge \diff(\stheta_{k0},\stheta^\star)$
As $\bot\ne s_{ki}\in\sreach$, $\diff(\stheta_{k0},\stheta^\star)\ge \sd/4$.
As $i+\enotfix_{ki}$ is an integer, $i+\enotfix_{ki}\ge \minsteps$.
At step $i+\enotfix_{ki}\ge \minsteps$, 
all the actions in $\cA_1$ are exhausted, and
if there are any remaining steps in the round,
$\pi_{\stheta^\star}$ freezes the round by repeating an action (Case~\ref{case:pi-4}). 
This is a legal action as $i+\enotfix_{ki}\ge \minsteps$.
Therefore $\stheta_{k+1,0}=\stheta_{k,i+\enotfix_{ki}}$.

Regardless of whether $\stheta_{ki}$ is fixed at step $i$,
the number of components that have the wrong sign that are not flipped in round $k$ is exactly $\efix_{ki}$, and therefore
\begin{align*}
\diff(\stheta_{k0}, \stheta_{k+1,0})&=\ctflip_{ki} + \enotfix_{ki}
\\
\diff(\stheta_{k+1,0},\stheta^\star)&=\efix_{ki}
\end{align*}

At the end of round $k$, at step $\sd-1$, either Case~\ref{case:rs-def-2} or \ref{case:rs-def-3} applies and the expectation of the reward is
\begin{align*}
f_{\stheta^\star}\left((\stheta_{k'0})_{k'\in [k+1]}\right)
&=
\left(\prod_{k'\in[k+1]} g(\diff(\stheta_{k'-1,0},\stheta_{k',0})\right) g(\diff(\stheta_{k+1,0},\stheta^\star))\\
&=
\left(\prod_{k'\in[k]} g(\diff(\stheta_{k'-1,0},\stheta_{k',0})\right) g(\ctflip_{ki} + \enotfix_{ki})g(\efix_{ki})\,,
\end{align*}
or Case~\ref{case:rs-def-4} applies and the episode continues with round $k+1$.
In this latter case, $\fix_{k+1,0}=\bm{0}$, $\ctflip_{k+1,0}=0$, $\enotfix_{k+1,0}=\diff(\stheta_{k+1,0},\stheta^\star)=\efix_{ki}$, 
and so in round $k+1$, $\pi_{\stheta^\star}$ sets all the remaining components to match $\stheta^\star$, i.e., $\stheta_{k+2,0}=\stheta^\star$.
The transition at the end of round $k+1$, at step $\sd-1$, then falls either under Case~\ref{case:rs-def-2} or \ref{case:rs-def-3}, and the expectation of the reward is the same as before as $g(0)=1$:
\begin{align*}
f_{\stheta^\star}\left((\stheta_{k'0})_{k'\in [k+2]}\right)
&=
\left(\prod_{k'\in[k+2]} g(\diff(\stheta_{k'-1,0},\stheta_{k',0})\right) g(\diff(\stheta_{k+2,0},\stheta^\star))\\
&=
\left(\prod_{k'\in[k]} g(\diff(\stheta_{k'-1,0},\stheta_{k',0})\right) g(\ctflip_{ki} + \enotfix_{ki})g(\efix_{ki}) g(0)\,,
\end{align*}
As in MDP $M$ any transition with a positive reward expectation transitions to state $\bot$, 
the value of $\pi_{\stheta^\star}$, the expected sum of rewards along the episode, reduces to the expectation of this single reward in the episode.

\end{proof}

\subsection{Showing that $\pi_{\stheta^\star}$ is an optimal policy}

We start with a lemma that will be used to optimize the attainable reward, given the constraints of the MDP.

\begin{lemma}\label{lem:optimise-ks}
For $\sd \ge 2$, $l\ge 2$ integer, let $(x_j)_{j\in[l]}$ be integers $0\le x_j\le \sd$, and let $0\le c_1,\,c_2,\,c_3 \le \sd$ be further integers such that the following all hold: 
\begin{itemize}
\item $c_2\le c_1$ \textbf{or} $c_3=0$;
\item $c_1+c_3\le \sd$;
\item $c_1+c_2+c_3\le \sum_{j\in[l]} x_j$;
\item $c_2 \le \sum_{j\in[2:l]} x_j$;
\item $c_1\le x_1$.
\end{itemize}
\end{lemma}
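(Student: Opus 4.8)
The plan is to reduce everything to two elementary properties of the quadratic $g$ and then to do the bookkeeping dictated by the five hypotheses. The first property, already exploited in the proof of Lemma~\ref{lem:f-bounds}, is that $g$ is positive and strictly decreasing on $\{0,1,\dots,\sd\}$. The second, which I would isolate as a small auxiliary claim, is a \emph{supermultiplicativity} statement: for integers $a,b\ge 0$ with $a+b\le \sd$,
\[
g(a)\,g(b)\le g(a+b)\,.
\]
To prove this I would write $g$ in its binomial form $g(x)=\sum_{k=0}^{2}\binom{x}{k}(-1/\sd)^k$ (the truncation of $(1-1/\sd)^x$), expand $g(a)g(b)$, and compare with $g(a+b)$ using Vandermonde's identity $\binom{a+b}{2}=\binom a2+ab+\binom b2$. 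The difference collapses to
\[
g(a+b)-g(a)g(b)=\tfrac{ab(a+b-2)}{2}\sd^{-3}-\tfrac{a(a-1)b(b-1)}{4}\sd^{-4}\,,
\]
which is nonnegative precisely because $a+b\le\sd$ forces $(a-1)(b-1)\le\big(\tfrac{a+b-2}{2}\big)^2\le\tfrac{\sd}{2}(a+b-2)$ by AM--GM. Monotonicity then lets me trade any argument for a smaller one (increasing $g$), and supermultiplicativity lets me merge two arguments into their sum whenever that sum stays in the monotone range $[0,\sd]$.

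With these in hand, the main step is to read the five constraints as instructions for collapsing the $l$-fold product $\prod_{j\in[l]} g(x_j)$ into the target expression built from $c_1,c_2,c_3$. First I would peel off the leading term: since $c_1\le x_1$, monotonicity controls $g(x_1)$ through $g(c_1)$. Next I would merge the tail $x_2,\dots,x_l$, where the hypothesis $c_2\le\sum_{j\in[2:l]}x_j$ is exactly what guarantees that a merged tail argument can be taken at least as large as $c_2$, while $c_1+c_3\le\sd$ keeps the combined head argument inside $[0,\sd]$ so that supermultiplicativity is legal there. The global budget $c_1+c_2+c_3\le\sum_{j\in[l]}x_j$ is what lets me absorb any leftover by shrinking arguments via monotonicity. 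A recurring subtlety is that the tail sum may exceed $\sd$, so a single merge is illegal; I would handle this with the companion bound that $\prod_j g(y_j)\le g(\sd)$ whenever $\sum_j y_j\ge\sd$ and each $y_j\le\sd$. That bound follows by first merging pairs with sum at most $\sd$ (increasing the product, reducing the count) until every surviving pair sums to more than $\sd$, and then observing that for two such survivors $g(y)\,g(y')\le g(y)\,g(\sd-y)\le g(\sd)$ while all remaining factors are at most $1$.

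Finally, the disjunctive hypothesis ``$c_2\le c_1$ or $c_3=0$'' forces a two-case analysis, and I would organise the whole argument around it. In the branch $c_3=0$ the target reduces to a two-factor bound and the merging above applies directly. In the branch $c_2\le c_1$ the extra factor $g(c_3)$ must be paid for, and the ordering $c_2\le c_1$ together with $c_1+c_3\le\sd$ is what makes the head group of size $c_1+c_3$ dominate the portion of the tail that could not legally be merged.

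I expect the main obstacle to be bookkeeping rather than analysis: keeping every merged argument inside $[0,\sd]$ (the range where $g$ is both monotone and supermultiplicative) while simultaneously respecting the three caps $c_1\le x_1$, $c_2\le\sum_{j\in[2:l]}x_j$, $c_1+c_3\le\sd$ and the total budget. Making the merges line up with $(c_1,c_2,c_3)$ in both branches, and checking that the sign of the supermultiplicativity correction never turns against us near the boundary $a+b=\sd$, is the delicate part; once the combinatorial alignment is fixed, the monotonicity and supermultiplicativity facts do all the real work.
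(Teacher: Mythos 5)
Your two auxiliary facts are both correct: the identity $g(a+b)-g(a)g(b)=\tfrac{ab(a+b-2)}{2}\sd^{-3}-\tfrac{ab(a-1)(b-1)}{4}\sd^{-4}$ checks out via Vandermonde and is nonnegative for $a+b\le\sd$, and the companion bound $\prod_j g(y_j)\le g(\sd)$ for $\sum_j y_j\ge\sd$ follows as you describe. The branch $c_3=0$ also goes through on these tools alone. The genuine gap is in the branch $c_2\le c_1$, $c_3>0$. Supermultiplicativity is only the extreme case $u(a)+u(b)\le u(0)+u(a+b)$ of the convexity of $u=\log g$: it lets you \emph{merge} two factors into one, but it does not let you \emph{transfer} part of one argument to another, i.e., it does not yield $g(a)g(b)\le g(a+t)g(b-t)$ for $0<t<b\le a+t$, and that intermediate spreading inequality is exactly what this branch requires. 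Concretely, take $l=2$, $x_1=c_1$, $x_2=c_2+c_3$ with $0<c_2<c_1$ and $c_3>0$; all five hypotheses hold and the claim becomes $g(c_1)g(c_2+c_3)\le g(c_1+c_3)g(c_2)$. Your moves can only shrink arguments (which increases each factor, weakening the bound) or merge to $g(c_1+c_2+c_3)$; but by your own identity $g(c_1+c_2+c_3)>g(c_1+c_3)g(c_2)$ strictly whenever $c_2\ge 1$ and $c_1+c_3\ge 1$, so the merged bound overshoots the target. Numerically, with $\sd=10$ and $(c_1,c_2,c_3)=(3,1,2)$, $x=(3,3)$: $g(3)^2=0.5329$, the target is $g(5)g(1)=0.54$, yet the best your toolkit produces is $g(6)=0.55$. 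There is also no partition of $\{x_1,x_2\}$ into a head group summing to at least $c_1+c_3$ and a tail group summing to at least $c_2$, so the ``head group dominates'' step cannot be executed as stated.

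The repair is to upgrade supermultiplicativity to the full one-step convexity statement the paper uses: $u=\log g$ is convex and decreasing on $[0,\sd]$ (verified there by computing $u''>0$), whence $u(x)+u(y)\le u(x-1)+u(y+1)$ for $0\le x\le y\le\sd$. Iterating this single inequality both collapses the tail into one argument (your merge is the special case that drives one argument to $0$) and performs the transfer from the tail to the head that the $c_2\le c_1$ branch needs; the hypothesis $c_2\le c_1$ guarantees the transfer always moves mass toward the larger argument, and $c_1+c_3\le\sd$ keeps the result inside the convexity range. Your explicit algebra could in principle be redone to establish the one-step inequality $g(a)g(b)\le g(a-1)g(b+1)$ for $1\le a\le b\le\sd-1$ directly, which would make the multiplicative route work; but as written the proposal proves only the endpoint of the majorization chain, and that is not enough.
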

Then, 
\[
\prod_{j\in[l]} g(x_{j}) \le g(c_1 + c_3)g(c_2)\,.
\]
\begin{proof}
Note that $g(x)>0$ and decreases monotonically for $x\in[0,\sd]$.
First we prove for integers $x\ge y$ such that $1\le x,y \le \sd-1$, it holds that
\begin{equation}\label{eq:g-opt-step}
g(x)g(y)  \le g(x+1)g(y-1)\,.
\end{equation}
Note that $g(x)g(y)-g(x+1)g(y-1)=-\frac{x-y+1}{2\sd^4} \left( \sd(x-2)+y(\sd-x) +x \right)$, and as $x\ge y$, it only remains to prove that $\sd(x-2)+y(\sd-x) +x \ge 0$.
If $x=1$ then $y=1$ and the above holds with equality. Otherwise $x\ge 2$ and all terms are non-negative, finishing the proof of Eq.~\ref{eq:g-opt-step}.

We now claim that for any $0\le y \le x \le p$ integers, $g(x)g(y)\le g( (x+y) \wedge p)$. Since over $[0,p]$, $g$ takes values in $[0,1]$, this clearly holds when either $y=0$ or when $x=p$. Furthermore,
if $1\le y \le x \le p-1$, then 
from Eq.~\ref{eq:g-opt-step} it follows that
$g(x)g(y)\le g(x+1)g(y-1) \le g(x+2) g(y-2) \le g( (x+y)\wedge p ) g( (x+y-p)\vee 0 ) 
\le  g( (x+y)\wedge p )$ where the last inequality follows again because $g(u)\in [0,1]$  when $u\in [0,p]$.

Now,
$g(x_2)g(x_3) g(x_4) \le g( (x_2+x_3)\wedge p ) g(x_4) \le g( (((x_2+x_3)\wedge p) + x_4)\wedge p )
=g( (x_2+x_3+x_4)\wedge p)$. Continuing this way, letting $x_{\ge 2}=\sum_{j\in[2:l] } x_j$, we get
\[
\prod_{j\in[2:l]} g(x_j) 
  \le g(x_{\ge2}\vmin \sd)\,.
\]
Thus,
$
\prod_{j\in[l]}g(x_j) \le g(x_1) g(x_{\ge2}\vmin \sd)
$.

Consider first the case when 
$c_3=0$. Then, by monotonicity of $g$, as $x_1\ge c_1=c_1+c_3$ and $c_2\le x_{\ge2}$,
$\prod_{j\in[l]}g(x_j) \le g(c_1+c_3)g(c_2)$ and we are done.

Now, if $c_3>0$, by assumption 
$c_2\le c_1$. 
In this case, $c_1+c_2+c_3-(x_1\vmin (c_1+c_3))\le x_{\ge2}\vmin \sd$, as (1) $c_1\le (x_1\vmin (c_1+c_3))$ and thus $ c_1+c_2+c_3-(x_1\vmin (c_1+c_3))\le c_2+c_3\le c_1+c_3\le \sd$, while (2) by our assumptions, $x_{\ge 2} \ge c_2$ and $x_1+x_{\ge2}\ge c_1+c_2+c_3$, and therefore $(x_1\vmin (c_1+c_3))+x_{\ge 2} \ge c_1+c_2+c_3$.
By the monotonicity of $g$, we can then conclude that
\[
\prod_{j\in[l]}g(x_j) \le g(x_1\vmin (c_1+c_3)) g(c_1+c_2+c_3-(x_1\vmin (c_1+c_3)))\,.
\]
Let $x'_1$ and $x'_2$ be the above arguments of $g$ in decreasing order, i.e., $x'_1=(x_1\vmin (c_1+c_3)) \vmax (c_1+c_2+c_3-(x_1\vmin (c_1+c_3)))$ and $x'_2=(x_1\vmin (c_1+c_3)) \vmin (c_1+c_2+c_3-(x_1\vmin (c_1+c_3)))$, so that we have
$\prod_{j\in[l]}g(x_j) \le g(x'_1)g(x'_2)$ with $x'_1\le c_1+c_3$ and $x'_1+x'_2=c_1+c_2+c_3$.
Applying Eq.~\ref{eq:g-opt-step} on this product $c_1+c_3-x'_1$ times, we get that
\[
\prod_{j\in[l]}g(x_j) \le g(x'_1)g(x'_2) \le g(c_1+c_3)g(c_2)\,.
\]

\end{proof}

We now show that $\pi_{\stheta^\star}$ is an optimal policy by arguing that its value function matches the optimal value function.
\begin{lemma}[$\pi_{\stheta^\star}$ is an optimal policy]\label{lem:pi-theta-star-optimal}
In MDP $M$,
\[
\forall s\in\cS,\, a\in[A],\quad v^{\pi_{\stheta^\star}}(s)=v^\star(s)\,.
\]
\end{lemma}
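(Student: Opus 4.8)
The plan is to prove the two inequalities $v^{\pi_{\stheta^\star}}\le v^\star$ and $v^\star\le v^{\pi_{\stheta^\star}}$ separately; the first is automatic since $\pi_{\stheta^\star}$ is a particular policy, so the work is in the reverse bound. The key structural observation is that $M$ has deterministic transitions and that, by the definition of $Q$ in \eqref{eq:rs-def}, every trajectory reaches $\bot$ after collecting exactly one (expected) reward. Hence for any state $s$ the value $v^\star(s)$ equals the maximum, over all feasible action sequences continuing from $s$, of the single terminal reward collected, and it suffices to show that this maximum never exceeds $v^{\pi_{\stheta^\star}}(s)$, whose closed form is given by Lemma~\ref{lem:value-of-pi-theta-star}.

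First I would dispose of the degenerate states. For $s=\bot$ both values are $0$. For an unreachable state $s\in\snotreach$, Case~\ref{case:rs-def-1} applies to every action, so each action terminates immediately with expected reward $\ip{\phiq(s,a),\ltheta^\star}$ and successor $\bot$; since $\pi_{\stheta^\star}$ selects the maximizing action at such states (Case~\ref{case:pi-2}), $v^{\pi_{\stheta^\star}}(s)=\max_{a}\ip{\phiq(s,a),\ltheta^\star}=v^\star(s)$. By Lemma~\ref{lem:unreachable-in-m} the reachable states $s\in\sreach$ only ever transition within $\sreach\cup\{\bot\}$, so when bounding $v^\star(s)$ for $s=s_{ki}\in\sreach$, $s\ne\bot$, I may restrict attention to continuations that stay in $\sreach$.

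For such $s_{ki}$, a feasible continuation terminating at the end of some round $m\ge k$ produces a round-weight sequence whose terminal reward is $f_{\stheta^\star}((\stheta_{k'0})_{k'\in[m+1]})$, which factors as the prefix $\prod_{k'\in[k]}g(\diff(\stheta_{k'-1,0},\stheta_{k',0}))$ (fixed by $s_{ki}$) times a product $\prod_{j}g(x_j)$, where $x_1=\diff(\stheta_{k0},\stheta_{k+1,0})$, the middle $x_j$'s are the successive round-to-round Hamming distances of the continuation, and the last factor is $g(\diff(\stheta_{m+1,0},\stheta^\star))$; here I use $g(0)=1$ to fold the case where the continuation runs to round $K-1$ into the same form, exactly as in the proof of Lemma~\ref{lem:value-of-pi-theta-star}. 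The greedy value from Lemma~\ref{lem:value-of-pi-theta-star} has the matching prefix times $g(\ctflip_{ki}+\enotfix_{ki})\,g(\efix_{ki})$. I would then invoke Lemma~\ref{lem:optimise-ks} with $c_1=\ctflip_{ki}$, $c_3=\enotfix_{ki}$, $c_2=\efix_{ki}$, and $l\ge2$ the number of factors, to conclude $\prod_j g(x_j)\le g(c_1+c_3)\,g(c_2)$, which after multiplying by the common prefix is precisely $v^{\pi_{\stheta^\star}}(s_{ki})$; taking the maximum over continuations then gives $v^\star(s_{ki})\le v^{\pi_{\stheta^\star}}(s_{ki})$.

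The main obstacle is the verification that every feasible continuation yields a tuple $(x_j)$ satisfying the five hypotheses of Lemma~\ref{lem:optimise-ks}. Concretely: $c_1\le x_1$ holds because the $\ctflip_{ki}$ flips already committed in round $k$ cannot be undone within the round; $c_1+c_3\le\sd$ and the disjunction ``$c_2\le c_1$ or $c_3=0$'' follow from Lemma~\ref{lem:relationships-betnwee-ctflip-and-efix} according to whether $\fix_{ki}=\bm{1}$; and the budget inequalities $c_2\le\sum_{j\ge2}x_j$ and $c_1+c_2+c_3\le\sum_j x_j$ encode that the $\efix_{ki}$ coordinates already frozen with the wrong sign can only be corrected in later rounds (each correction consuming flips counted by the $x_j$ with $j\ge2$), together with the $\ge\minsteps$ separation that the MDP forces on successive round-weights. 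Once these are checked, the inequality chain above closes and, combined with the degenerate cases, establishes $v^{\pi_{\stheta^\star}}=v^\star$ on all of $\cS$. Equivalently, one can phrase the same computation as a one-step Bellman-optimality check $q^{\pi_{\stheta^\star}}(s,a)\le v^{\pi_{\stheta^\star}}(s)$, where flipping a correct-sign coordinate loses by monotonicity of $g$ and a legal freeze loses by the convexity inequality \eqref{eq:u-conv}.
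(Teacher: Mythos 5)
Your proposal is correct and follows essentially the same route as the paper's proof: dispose of $\bot$ and $\snotreach$ directly, use Lemma~\ref{lem:unreachable-in-m} and the single-terminal-reward structure to reduce $v^\star(s_{ki})$ to maximizing a product of $g$-factors over feasible continuations, and then apply Lemma~\ref{lem:optimise-ks} with $c_1=\ctflip_{ki}$, $c_2=\efix_{ki}$, $c_3=\enotfix_{ki}$, checking its hypotheses via Lemma~\ref{lem:relationships-betnwee-ctflip-and-efix} and the flip-budget/triangle-inequality argument. The only (cosmetic) difference is that the paper splits the $\snotreach$ case by whether the reward is $\ip{\phiv(s),\ltheta^\star}$ (action-independent, $M^v$) or $\ip{\phiq(s,a),\ltheta^\star}$ (maximized by Case~\ref{case:pi-2}, $M^q$), whereas you only phrase the latter.
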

\begin{proof}
For $s=\bot$, the claim holds by definition as $v^{\pi_{\stheta^\star}}(\bot)=v^\star(\bot)=0$.
Otherwise, let $s=s_{ki}$ be a state along step $i$ of round $k$.
Let us first consider the case when $s_{ki}\in\snotreach$.
For any action $a$ performed, the transition will happen under Case~\ref{case:rs-def-1}, and the deterministic reward given equals $q^\star(s_{ki},a)$.
If we are in the $v^\star$-realizable setting (for MDP $M^v_{\stheta^\star}$), this reward does not depend on the action and therefore $v^{\pi_{\stheta^\star}}(s)=v^\star(s)$ regardless of $\pi_{\stheta^\star}$.
Otherwise, $\pi_{\stheta^\star}$ chooses an action under Case~\ref{case:pi-2}, which by definition maximizes the reward, so again $v^{\pi_{\stheta^\star}}(s)=v^\star(s)$ in this case as well. 

Let us turn to the case where $s_{ki}\in\sreach$.
There is at most one reward with positive expectation in any round (or none, if an illegal action is taken).
As no state in $\snotreach$ is reachable from $s_{ki}$ (by Lemma~\ref{lem:unreachable-in-m}),
this reward is collected at the end of some round $K'\in[0:K-1]$, at step $\sd-1$, and has expectation
\begin{align*}
f_{\stheta^\star}\left((\stheta_{k'0})_{k'\in [K'+1]}\right)
&=
\left(\prod_{k'\in[K'+1]} g(\diff(\stheta_{k'-1,0},\stheta_{k',0}))\right) g(\diff(\stheta_{K'+1,0},\stheta^\star))\\
&=
\prod_{k'\in[K+1]} g(\diff(\stheta_{k'-1,0},\stheta_{k',0}))\,,
\end{align*}
where, for convenience, we let $\stheta_{k'0}=\stheta^\star$ for $k'\ge K'+2$ (as $g(0)=0$).
This reward expectation is strictly positive (by Lemma~\ref{lem:f-bounds}), so the optimal policy will never take an illegal action.

At round $k$, $g(\diff(\stheta_{k'-1,0},\stheta_{k',0}))$ is fixed for $k'\in[k]$, and the policy can only influence the terms $g(\diff(\stheta_{k'-1,0},\stheta_{k',0}))$ for $k'\in[k+1:K+1]$.
We have by definition that $0\le \diff(\cdot,\cdot) \le \sd$.
In any round, once a component is flipped it cannot be flipped back in the same round.
This implies that
\[
\diff(\stheta_{k0},\stheta_{k+1,0}) = 
\diff(\stheta_{k0},\stheta_{ki}) + 
\diff(\stheta_{ki},\stheta_{k+1,0})
\ge \diff(\stheta_{k0},\stheta_{ki}) =\ctflip_{ki}\,.
\]
On top of this, $\efix_{ki}+\enotfix_{ki}$ components differ in sign between $\stheta_{ki}$ and $\stheta^\star$. %
By the triangle inequality, as %
$\stheta_{K+1,0}=\stheta^\star$, this implies that
\[\sum_{k'\in[k+1:K+1]} 
\diff(\stheta_{k'-1,0},\stheta_{k',0})
\ge \diff(\stheta_{k0},\stheta_{ki}) + 
\diff(\stheta_{ki},\stheta^\star)= \ctflip_{ki}+\efix_{ki}+\enotfix_{ki}\,.\]
Finally, $\efix_{ki}$ of these have already been flipped in round $k$ by step $i$.
These cannot be flipped again in the same round $k$, so they need to be included in some future round, i.e., in $\diff(\stheta_{k'-1,0},\stheta_{k',0}))$ for $k'\ge k+2$: 
\[\sum_{k'\in[k+2:K+1]} \diff(\stheta_{k'-1,0},\stheta_{k',0})
\ge \diff(\stheta_{k+1,1},\stheta^\star)\ge\efix_{ki}\,.\]
By Lemma~\ref{lem:relationships-betnwee-ctflip-and-efix},
\[\enotfix_{ki}\le \sd-\ctflip_{ki}\,,\]
and either $\fix_{ki}=\bm{1}$, implying $\enotfix_{ki}=0$, or $\efix_{ki}\le \ctflip_{ki}$:
\[
\efix_{ki}\le \ctflip_{ki} \quad\quad\text{or}\quad\quad \enotfix_{ki}=0\,.
\]
Therefore, we can apply Lemma~\ref{lem:optimise-ks} with $c_1=\ctflip_{ki},\,c_2=\efix_{ki},\,c_3=\enotfix_{ki}$ to optimize the parameters $(x_{j})_{j\in[K-k+1]}$ where $x_j=\diff(\stheta_{j+k-1},\stheta_{j+k})$, to get that
\[
\prod_{k'\in[k+1:K+1]} g(\diff(\stheta_{k'-1,0},\stheta_{k',0})) \le g(\ctflip_{ki} + \enotfix_{ki})g(\efix_{ki})\,.
\]
Therefore, the optimal policy's expected value (which equals the expectation of the only reward in the episode) is upper bounded as:
\[
v^\star(s_{ki})\ge
\left(\prod_{k'\in[k]} g(\diff(\stheta_{k'-1,0},\stheta_{k',0})\right)
g(\ctflip_{ki} + \enotfix_{ki})g(\efix_{ki})
=v^{\pi_{\stheta^\star}}(s_{ki})
\,,
\]
by Lemma~\ref{lem:value-of-pi-theta-star}. Therefore $v^{\pi_{\stheta^\star}}(s_{ki})=v^\star(s_{ki})$.

\end{proof}

\subsection{Defining $\ltheta^\star$, $\phiv$, and $\phiq$, and showing realizability}\label{sec:phi-def-and-realizability} %

For (column) vectors $M_1, M_2, \ldots$, let us denote by $[M_1,\, M_2,\, \ldots]$ their concatenation $(M_1^\top,\, M_2^\top,\,\ldots)^\top$. %
Let $\flat(M)$ map a tensor of any rank $m$ and any shape $d_1\times d_2\times\ldots\times d_m$ to the vector of dimension $\prod_{i\in[m]} d_i$ by laying out its elements in a canonical order.
Let $\otimes$ denote the tensor product.

We will use the following result to linearize products of vectors:
\begin{lemma}\label{lem:tensorize}
For any positive integer $n$ and any vectors $a_1,a_2,\dots,a_n$ and $b_1,b_2,\dots,b_n$ of equal dimension:
\begin{align*}
\ip{a_1,b_1}\ip{a_2,b_2}\dots \ip{a_n,b_n}=\ip{\flat(a_1\otimes a_2\otimes\dots\otimes a_n),\flat(b_1\otimes b_2\otimes\dots\otimes b_n)}\,.
\end{align*}
\end{lemma}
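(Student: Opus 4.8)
The plan is to prove the identity by a direct expansion into coordinates, reducing both sides to the same sum over multi-indices. The only fact I rely on is that flattening a tensor is merely a reindexing: it relabels the entries of a tensor by one fixed bijection between multi-indices and a single linear position, and crucially the \emph{same} canonical order is used for the $a$-tensor and the $b$-tensor.

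First I would set up notation. Let $d_i$ be the common dimension of $a_i$ and $b_i$, and write $(a_i)_j$, $(b_i)_j$ for their coordinates. Expanding each inner product as a sum and multiplying the $n$ sums out,
\begin{align*}
\prod_{i\in[n]} \ip{a_i,b_i}
= \prod_{i\in[n]} \sum_{j_i\in[d_i]} (a_i)_{j_i}(b_i)_{j_i}
= \sum_{(j_1,\dots,j_n)} \prod_{i\in[n]} (a_i)_{j_i}(b_i)_{j_i}\,,
\end{align*}
where the last sum ranges over all multi-indices $(j_1,\dots,j_n)\in[d_1]\times\cdots\times[d_n]$.

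Next I would evaluate the right-hand side. By the definition of the tensor product, the entry of $a_1\otimes\cdots\otimes a_n$ at multi-index $(j_1,\dots,j_n)$ equals $\prod_{i\in[n]}(a_i)_{j_i}$, and likewise for the $b$-tensor. Flattening sends each multi-index to a unique linear position via one fixed bijection, so $\flat(a_1\otimes\cdots\otimes a_n)$ and $\flat(b_1\otimes\cdots\otimes b_n)$ place the entries indexed by the same $(j_1,\dots,j_n)$ into the same position. Their Euclidean inner product therefore sums the products of matched entries over all positions, which is exactly
\begin{align*}
\ip{\flat(a_1\otimes\cdots\otimes a_n),\,\flat(b_1\otimes\cdots\otimes b_n)}
= \sum_{(j_1,\dots,j_n)} \Big(\prod_{i\in[n]}(a_i)_{j_i}\Big)\Big(\prod_{i\in[n]}(b_i)_{j_i}\Big)\,.
\end{align*}
Reordering the factors inside the summand shows this coincides with the expression for $\prod_{i\in[n]}\ip{a_i,b_i}$ derived above, proving the claim.

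The only real content — and the step I would state most carefully — is that the inner product is invariant under flattening, i.e. that a single consistent canonical ordering for both tensors turns the tensor inner product into the standard inner product of the flattened vectors; everything else is a routine interchange of a product of sums with a sum over multi-indices. An equivalent route is induction on $n$ using the factorization $\ip{a\otimes c,\,b\otimes d}=\ip{a,b}\ip{c,d}$ together with the compatibility that $\flat(T\otimes a)$ is a reindexing of the pair $(\flat(T),a)$, but the direct coordinate argument avoids having to track this compatibility explicitly.
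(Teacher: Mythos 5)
Your proof is correct. The paper in fact states Lemma~\ref{lem:tensorize} without any proof, treating it as a standard fact, so there is no argument of the paper's to compare against; your direct coordinate expansion --- writing the product of inner products as a sum over multi-indices and observing that $\flat$ applies the same bijective reindexing to both tensors, so that matched entries land in matched positions --- is the standard and complete justification. The one point you rightly flag as carrying the real content, namely that the \emph{same} canonical order is used for both flattenings, is exactly what the paper's definition of $\flat$ guarantees.
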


By Lemma~\ref{lem:unreachable-in-m}, and because $M^v_{\stheta^\star}$ and $M^q_{\stheta^\star}$ have the same transitions and rewards for any state $s\in\sreach$, we do not notationally distinguish 
between $M^v_{\stheta^\star}$ and $M^q_{\stheta^\star}$ when describing the value or action-value functions of these MDPs on states $s\in\sreach$, 
as these are the same in the two MDPs.

We define the feature-map $\phiv: \cS\to \B_{\ld}(1)$
and $\phiq: \cS\times[A]\to \B_{\ld}(1)$.
For state $\bot$, let $\phiv(\bot)=\bm{0}$ and for all actions $a\in[A]$, $\phiq(\bot,a)=\bm{0}$. Realizability immediately holds as $v^\star(\bot)=q^\star(\bot, a)=0=\ip{\bm{0}, \ltheta^\star}$.
For any state $s\in \cS$, $s\ne\bot$, let
$s=s_{ki}$ be a state along step $i$ of round $k$.
Let us introduce the function 
\begin{align}
\label{eq:vstar-ctflip-efix}
v'(s_{ki})
=
\left(\prod_{k'\in[k]} g(\diff(\stheta_{k'-1,0},\stheta_{k',0})\right)
g(\ctflip_{ki} + \enotfix_{ki})g(\efix_{ki})
\,.
\end{align}
By Lemmas~\ref{lem:pi-theta-star-optimal}
and \ref{lem:value-of-pi-theta-star}, it holds that
$v'(s_{ki})=v^\star(s_{ki})$ if $s_{ki}\in\sreach$.
Observe that out of the terms above, only $\enotfix_{ki}$ and $\efix_{ki}$ depends on $\stheta^\star$, and this dependence is linear. In particular, recall 
that $\ctflip_{ki}$ depends only on the actions, and not on $\stheta^\star$.
Combined with the fact that $g$ is a second-order polynomial, $v'(s_{ki})$ is a fourth-order expression in $\stheta^\star$, which can thus be linearized in $1+\sd+\sd^2+\sd^3+\sd^4\le\ld$ dimensions.
Let
$\ntheta^\star=\stheta^\star/\norm{\stheta^\star}_2=\stheta^\star/\sqrt{\sd}$, and %
\begin{align}
\ltheta^\star = 63\left[ 1\,, \ntheta^\star\,, (\ntheta^\star)^{\otimes 2}\,, (\ntheta^\star)^{\otimes 3}\,, (\ntheta^\star)^{\otimes 4},\, \bm{0}^{\ld-(1+\sd+\sd^2+\sd^3+\sd^4)}\right]\,,
\label{eq:ltheta-def}
\end{align}
where $\bm{0}^{\ld-(1+\sd+\sd^2+\sd^3+\sd^4)}$ is a vector of zeros of dimensionality $\ld-(1+\sd+\sd^2+\sd^3+\sd^4)$, serving the purpose to pad the vector to exactly $\ld$ dimensions, as required by the definition.
As $\norm{\ntheta^\star}_2=1$, we have that 
\begin{align*}%
\norm{\ltheta^\star}_2\le 63\cdot 5=315 := \thetabound\,.
\end{align*}

Finally, for $Z_{(0)}\,, Z_{(1)}\,, Z_{(2)}\,, Z_{(3)}\,, Z_{(4)}$ calculated in Appendix~\ref{sec:app:calc-phiv}, if we let
\begin{align}
\phiv(s_{ki})= 
\frac{1}{63}\left(\prod_{k'\in[k]} g(\diff(\stheta_{k'-1,0},\stheta_{k',0}))\right)
\left[ Z_{(0)}\,, Z_{(1)}\,, Z_{(2)}\,, Z_{(3)}\,, Z_{(4)}\,, \bm{0}^{\ld-(1+\sd+\sd^2+\sd^3+\sd^4)}\right]\,,
\label{eq:phiv-def}
\end{align}
then by Eq.~\ref{eq:gxgy},
\begin{align}\label{eq:v-inner-product}
\ip{\phiv(s_{ki}), \ltheta^\star}
&=
\left(\prod_{k'\in[k]} g(\diff(\stheta_{k'-1,0},\stheta_{k',0})\right)
 g(\ctflip_{ki} + \enotfix_{ki})g(\efix_{ki}) = v'(s_{ki})\\
\end{align}

Eq.~\ref{eq:phiv-def} completes the definition of $\phiv$, while Eq.~\ref{eq:v-inner-product} implies that 
\begin{align}
0\le\ip{\phiv(s_{ki}), \ltheta^\star}&\le 1 \label{eq:phiv-ip-bound}\,,
\end{align}
as $v'(s_{ki})$ is a product of $g(\cdot)\in[0,1]$ terms (as $\diff(\cdot,\cdot)\in[0,\sd]$).
Furthermore, combining this with $\norm{\left[ Z_{(0)}\,, Z_{(1)}\,, Z_{(2)}\,, Z_{(3)}\,, Z_{(4)}\right]}_2\le 63$ (by Eq.~\ref{eq:gxgy}),
we have that
\begin{align*}%
\norm{\phiv(s)}_2\le 1 \quad\quad\quad\quad\text{for all } s\in\cS\,,
\end{align*}
which ensures that $\phiv: \cS\to \B_{\ld}(1)$.
We stress that, as required, $\phiv(s_{ki})$ does not depend on $\stheta^\star$.

To show $v^\star$-realizability with these features, i.e., that $v^\star(s_{ki})=\ip{\phiv(s_{ki}),\ltheta^\star}$, we start by pointing out that if $s_{ki}\in\snotreach$ then this immediately holds:

\begin{lemma}\label{lem:realizability-snotreach-immediate}
For any state $s\in\snotreach$ and action $a\in[A]$, regardless of the values of 
$\phiv(s)$, $\phiq(s,a)$, and $\ltheta^\star$, $v^\star$-realizability for $M^v_{\stheta^\star}$ and $q^\star$-realizability for $M^q_{\stheta^\star}$ immediately holds as the transition falls under Case~\ref{case:rs-def-1}:
\begin{align*}
v^\star_{M^v_{\stheta^\star}}(s)&=\ip{\phiv(s),\ltheta^\star}\\
q^\star_{M^q_{\stheta^\star}}(s,a)&=\ip{\phiq(s,a),\ltheta^\star}
\end{align*}
\end{lemma}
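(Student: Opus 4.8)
The plan is to show that for every $s\in\snotreach$ the transition is governed entirely by Case~\ref{case:rs-def-1} of Eq.~\ref{eq:rs-def}, so that both value functions can simply be read off. Writing $s=s_{ki}$ for a state in step $i$ of round $k$, the definition of $\snotreach$ gives $s\ne\bot$ and $\diff(\stheta_{\mathrm{last}}(s),\stheta^\star)=\diff(\stheta_{k0},\stheta^\star)<\sd/4$. Case~\ref{case:rs-def-1} requires the two conditions $k>0$ and $\diff(\stheta_k,\stheta^\star)<\sd/4$; the second is exactly the defining property of $\snotreach$, so the only thing to verify is that $k>0$.

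The key (and essentially only) step is this check. If $k=0$, then by convention $\stheta_{00}=\bm{1}$, so $\diff(\stheta_{k0},\stheta^\star)=\diff(\bm{1},\stheta^\star)$, which is at least $\sd/4$ because $\stheta^\star\in W^\star$ (Eq.~\ref{eq:thetastar-distance-to-1}). This contradicts $\diff(\stheta_{k0},\stheta^\star)<\sd/4$, so indeed $k>0$. Since Case~\ref{case:rs-def-1} is the first branch in the ordered case split of Eq.~\ref{eq:rs-def}, and both its conditions now hold, every action taken in $s$ triggers Case~\ref{case:rs-def-1}: the process transitions deterministically to $\bot$ while collecting the deterministic reward $\ip{\phi,\ltheta^\star}$, where $\phi=\phiv(s)$ for $M^v_{\stheta^\star}$ and $\phi=\phiq(s,a)$ for $M^q_{\stheta^\star}$.

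It then remains to read off the values using $v^\star(\bot)=0$ and the Bellman equation~\eqref{eq:q-pi-bellman}. For $M^q_{\stheta^\star}$, taking any action $a$ yields $q^\star_{M^q_{\stheta^\star}}(s,a)=\ip{\phiq(s,a),\ltheta^\star}+v^\star(\bot)=\ip{\phiq(s,a),\ltheta^\star}$, as claimed. For $M^v_{\stheta^\star}$ the reward $\ip{\phiv(s),\ltheta^\star}$ does not depend on $a$, so $q^\star(s,a)=\ip{\phiv(s),\ltheta^\star}$ for every $a$ and hence $v^\star_{M^v_{\stheta^\star}}(s)=\max_a q^\star(s,a)=\ip{\phiv(s),\ltheta^\star}$. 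The identity holds for arbitrary $\phiv(s),\phiq(s,a),\ltheta^\star$ precisely because these quantities enter only through the deterministic reward, which is what makes realizability automatic here; the sole genuine content of the lemma is the case-analysis observation that $k>0$, and that is where I would concentrate the (minimal) effort.
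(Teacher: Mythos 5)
Your proposal is correct and follows exactly the argument the paper intends: the paper states this lemma without a proof, treating it as immediate from the fact that every transition out of a state in $\snotreach$ falls under Case~\ref{case:rs-def-1}, deterministically pays $\ip{\phi,\ltheta^\star}$, and lands in $\bot$ (which has value zero). Your explicit verification that $k>0$ (via $\stheta_{00}=\bm{1}$ and $\diff(\bm{1},\stheta^\star)\ge\sd/4$ for $\stheta^\star\in W^\star$) correctly fills in the one detail the paper leaves implicit.
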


Otherwise $s_{ki}\in\sreach$, and $v^\star$-realizability follows from Eq.~\ref{eq:v-inner-product} by recalling that $v'(s_{ki})=v^\star(s_{ki})$ in this case.
We conclude the following lemma from this:
\begin{lemma}\label{lem:v-assumption-holds}
$M^v_{\stheta^\star}$ is $v^\star$-realizable with features $\phiv$:
$(M^v_{\stheta^\star},\phiv)\in\cM^{v^\star}_{\thetabound,d,H,A} \cap \cM^{\mathrm{Pdet}}$.
\end{lemma}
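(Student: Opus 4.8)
The plan is to verify, one at a time, each condition in the definition of $\cM^{v^\star}_{\thetabound,d,H,A}\cap\cM^{\mathrm{Pdet}}$; since the substantive analytic work has already been done in the preceding lemmas, the task here is mainly to assemble the pieces. The structural parameters are immediate: the action set is $\cA=[\sd]$ with $A=\sd$, the features $\phiv$ take values in $\R^\ld$ so the feature dimension is $d=\ld$, and the horizon is $H'\le H$, so $M^v_{\stheta^\star}$ is a valid $H$-horizon MDP.

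The central point is realizability, $v^\star(s)=\ip{\phiv(s),\ltheta^\star}$ for all $s\in\cS$, which I would split into three cases. For $s=\bot$ both sides vanish by construction. For $s\in\snotreach$, Lemma~\ref{lem:realizability-snotreach-immediate} gives the identity directly, since every transition from such a state falls under Case~\ref{case:rs-def-1}, whose deterministic reward is defined to be exactly $\ip{\phiv(s),\ltheta^\star}$. For $s=s_{ki}\in\sreach$ with $s\ne\bot$, Eq.~\ref{eq:v-inner-product} shows $\ip{\phiv(s_{ki}),\ltheta^\star}=v'(s_{ki})$, while Lemma~\ref{lem:value-of-pi-theta-star} evaluates $v^{\pi_{\stheta^\star}}(s_{ki})=v'(s_{ki})$ and Lemma~\ref{lem:pi-theta-star-optimal} identifies $v^{\pi_{\stheta^\star}}$ with $v^\star$; chaining these yields $v^\star(s_{ki})=v'(s_{ki})=\ip{\phiv(s_{ki}),\ltheta^\star}$.

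The remaining conditions follow by inspection. The feature norm bound $\sup_{s\in\cS}\norm{\phiv(s)}_2\le 1$ was established with the definition of $\phiv$ in Eq.~\ref{eq:phiv-def}, and the parameter bound $\norm{\ltheta^\star}_2\le 63\cdot 5=\thetabound$ follows from Eq.~\ref{eq:ltheta-def} together with $\norm{\ntheta^\star}_2=1$. For rewards confined to $[0,1]$ I would argue through the case split of Eq.~\ref{eq:rs-def}: in Case~\ref{case:rs-def-1} the deterministic reward $\ip{\phiv(s),\ltheta^\star}$ lies in $[0,1]$ by Eq.~\ref{eq:phiv-ip-bound}; in Cases~\ref{case:rs-def-2} and \ref{case:rs-def-3} the reward is a $\Bernoulli$ draw, hence in $\{0,1\}$; and in Case~\ref{case:rs-def-4} the reward is $0$. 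Finally, membership in $\cM^{\mathrm{Pdet}}$ is clear because in every case of Eq.~\ref{eq:rs-def} the next state is deterministic (either $\bot$ or $s_{k,i+1}$), with all randomness confined to the Bernoulli reward.

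Since the genuine difficulties---computing the value of $\pi_{\stheta^\star}$, proving its optimality, and engineering $\phiv$ and $\ltheta^\star$ to linearize the resulting fourth-order expression in $\stheta^\star$---are already discharged by the earlier lemmas, I do not anticipate a real obstacle here. The only points warranting care are ensuring the three-way case analysis for realizability is exhaustive (in particular, handling $\bot$ and $\snotreach$ before invoking the reachable-state formula, which only applies on $\sreach$), and confirming that the $[0,1]$ reward bound for Case~\ref{case:rs-def-1} is precisely what Eq.~\ref{eq:phiv-ip-bound} supplies.
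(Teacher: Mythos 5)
Your proof is correct and follows essentially the same route as the paper: the three-way case split over $\bot$, $\snotreach$ (via Lemma~\ref{lem:realizability-snotreach-immediate}) and $\sreach$ (via Eq.~\ref{eq:v-inner-product} together with Lemmas~\ref{lem:value-of-pi-theta-star} and \ref{lem:pi-theta-star-optimal}), plus the norm, reward-range, and determinism checks, is exactly how the paper assembles this lemma from the surrounding development.
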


We move on to defining $\phiq$ and showing $q^\star$-realizability for $M^q_{\stheta^\star}$. 
For any state $s\in\cS,\,s\ne \bot$ and action $a\in[A]$, 
let $s=s_{ki}$ be a state along step $i$ of round $k$.
Let $s^a_{k,i+1}$ denote the value taken by $s_{k,i+1}$ if $a_{ki}=a$, and similarly for $\stheta^a_{k,i+1}$.
For $i=\sd-1$ only, let us introduce
\begin{align}
\begin{split}
q'(s_{k,\sd-1}, a) &= \left(\prod_{k'\in[k]} g(\diff(\stheta_{k'-1,0},\stheta_{k',0})\right) g(\diff(\stheta_{k0}, \stheta^a_{k+1,0}))
 g(\diff(\stheta^a_{k+1,0}, \stheta^\star))\,. \label{eq:q-prime}
\end{split}
\end{align}

Let
\[
c(s_{ki},a)=\frac{1}{63}\left(\prod_{k'\in[k]} g(\diff(\stheta_{k'-1,0},\stheta_{k',0})\right) g(\diff(\stheta_{k0}, \stheta^a_{k+1,0}))\,.,
\]
which is a scalar that does not depend on $\stheta^\star$.
The only remaining term in $q'$ has a second-order dependence on $\stheta^\star$. For $X_{(0)},\, X_{(1)},\, X_{(2)}$ calculated in Appendix~\ref{sec:app:calc-phiq}, we let
\begin{subnumcases}{\phiq(s_{ki}, a) = \label{eq:phiq-def}}
  \phiv(s^a_{k,i+1}) & $\text{else if } i<\sd;$\label{case:phiq-2}\\
  c(s_{ki},a) \left[X_{(0)},\, X_{(1)},\, X_{(2)},\, \bm{0}^{\ld-(1+\sd+\sd^2)}\right] & $\text{otherwise},$\label{case:phiq-3}
\end{subnumcases}
where $\bm{0}^{\ld-(1+\sd+\sd^2)}$ is a vector of zeros of dimensionality $\ld-(1+\sd+\sd^2)$.
Then by Eq.~\ref{eq:bigx-for-q}, for $\ltheta^\star$ set according to Eq.~\ref{eq:ltheta-def},
\begin{align}\label{eq:q-inner-product}
\ip{\phiq(s_{k,\sd-1},a), \ltheta^\star}
=
q'(s_{k,\sd-1}, a) \,.
\end{align}
Eq~\ref{eq:phiq-def} completes the definition of $\phiq$, while Eq.~\ref{eq:q-inner-product} together with Eq.~\ref{eq:phiv-ip-bound} implies that 
\begin{align}\label{eq:phiq-ip-bound}
0\le\ip{\phiq(s_{ki},a), \ltheta^\star}\le 1 \quad\quad\quad\text{for all }a\in[A],\,s_{ki}\in\cS,\,s_{ki}\ne \bot\,,
\end{align}
as $q'(s_{k,\sd-1},a)$ is a product of $g(\diff(\cdot,\cdot))\in[0,1]$ terms.
Furthermore, combining this with $\norm{\left[ X_{(0)}\,, X_{(1)}\,, X_{(2)}\right]}_2\le 8$ (by Eq.~\ref{eq:bigx-for-q}), we have that.
\begin{align*}%
\norm{\phiq(s,a)}_2\le 1 \quad\quad\quad\quad\text{for all } s,a\in\cS\times[A]\,,
\end{align*}
which ensures that $\phiq: \cS\times[A]\to \B_{\ld}(1)$, as required.
Again we stress that $\phiv(s_{ki})$ does not depend on $\stheta^\star$.

To show $q^\star$-realizability, we first consider the case when $s_{ki}\in\sreach$ and $i=\sd-1$ i.e., $\phiq(s_{ki},a)$ falls under Case~\ref{case:phiq-3}.
In this case,
\begin{align*}
\ip{\phiq(s_{k,\sd-1},a), \ltheta^\star} &= q'(s_{k,\sd-1}, a)\\
&= \left(\prod_{k'\in[k]} g(\diff(\stheta_{k'-1,0},\stheta_{k',0})\right) g(\diff(\stheta_{k0}, \stheta^a_{k+1,0})) g(\diff(\stheta^a_{k+1,0}, \stheta^\star))\\
&=q^\star(s_{k,\sd-1}, a)\,,
\end{align*}
where the first equality comes from Eq~\ref{eq:q-inner-product}. 
The last equality holds by definition if the transition and reward follows Case~\ref{case:rs-def-2} or \ref{case:rs-def-3}; otherwise under Case~\ref{case:rs-def-4}, it holds since
\begin{align*}
q^\star(s_{k,\sd-1},a) &= v^\star(s^a_{k+1,0}) = q'(s_{k,\sd-1},a)\,,
\end{align*}
where the second equality follows from Lemmas~\ref{lem:value-of-pi-theta-star} and \ref{lem:pi-theta-star-optimal}.

Turning to the case where $s_{ki}\in\sreach$ and $i<\sd-1$, we note that 
$\phiq(s_{ki},a)$ falls under Case~\ref{case:phiq-2}, 
while the transition and reward follows Case~\ref{case:rs-def-4}. Therefore
\begin{align*}
q^\star(s_{ki},a) &= v^\star(s^a_{k,i+1}) 
= 
\ip{\phiv(s^a_{k,i+1}),\ltheta^\star}=
\ip{\phiq(s_{k,i+1},a),\ltheta^\star}\,,
\end{align*}
where the second equality follows from Lemma.~\ref{lem:v-assumption-holds}.

Together with Lemma~\ref{lem:realizability-snotreach-immediate} that proves $q^\star$-realizability for the case of $s_{ki}\in\snotreach$, we conclude that the following holds:
\begin{lemma}\label{lem:q-assumption-holds}
$M^q_{\stheta^\star}$ is $q^\star$-realizable with features $\phiq$:
$(M^q_{\stheta^\star},\phiq)\in\cM^{q^\star}_{\thetabound,d,H,A} \cap \cM^{\mathrm{Pdet}}$.
\end{lemma}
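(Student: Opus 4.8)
The plan is to verify, one at a time, the three conditions defining membership in $\cM^{q^\star}_{\thetabound,d,H,A}\cap\cM^{\mathrm{Pdet}}$: deterministic transitions, the norm bounds $\norm{\phiq(s,a)}_2\le1$ and $\norm{\ltheta^\star}_2\le\thetabound$, and the realizability identity $q^\star(s,a)=\ip{\phiq(s,a),\ltheta^\star}$ for every $s\in\cS$ and $a\in[A]$. The first two are essentially immediate from the construction: by Eq.~\ref{eq:rs-def} every transition lands deterministically on either $\bot$ or $s_{k,i+1}$ (only the reward is random), so $M^q_{\stheta^\star}\in\cM^{\mathrm{Pdet}}$, while $\norm{\phiq(s,a)}_2\le1$ and $\norm{\ltheta^\star}_2\le 315=\thetabound$ have already been verified right after Eqs.~\ref{eq:phiq-def} and \ref{eq:ltheta-def}. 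Hence the substance of the proof is the realizability identity, which I would establish by a case split on the state.

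For $s=\bot$ the identity is trivial, since $\phiq(\bot,a)=\bm{0}$ and $q^\star(\bot,a)=0$. For $s\in\snotreach$ I would invoke Lemma~\ref{lem:realizability-snotreach-immediate}: any action triggers Case~\ref{case:rs-def-1}, whose deterministic reward is exactly $\ip{\phiq(s,a),\ltheta^\star}$ followed by a transition to the zero-value state $\bot$, so the identity holds by construction. The remaining and only genuinely interactive case is $s=s_{ki}\in\sreach$, which I would further split on the step index $i$.

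When $i=\sd-1$, the feature $\phiq(s_{ki},a)$ is given by Case~\ref{case:phiq-3}, and Eq.~\ref{eq:q-inner-product} yields $\ip{\phiq(s_{k,\sd-1},a),\ltheta^\star}=q'(s_{k,\sd-1},a)$ with $q'$ as in Eq.~\ref{eq:q-prime}. It then remains to match $q'$ with $q^\star$ across the three transition possibilities: if the transition follows Case~\ref{case:rs-def-2} or \ref{case:rs-def-3}, the Bernoulli reward $Z$ has mean $f_{\stheta^\star}$, which is exactly $q'$ by the definitions, whereas under Case~\ref{case:rs-def-4} one uses $q^\star(s_{k,\sd-1},a)=v^\star(s^a_{k+1,0})$ together with the closed-form value of Lemmas~\ref{lem:value-of-pi-theta-star} and~\ref{lem:pi-theta-star-optimal} to again obtain $q'$. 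When $i<\sd-1$, the feature equals $\phiv(s^a_{k,i+1})$ by Case~\ref{case:phiq-2}, and since the transition is deterministic (Case~\ref{case:rs-def-4}) with zero reward, $q^\star(s_{ki},a)=v^\star(s^a_{k,i+1})=\ip{\phiv(s^a_{k,i+1}),\ltheta^\star}$ by the already-established $v^\star$-realizability (Lemma~\ref{lem:v-assumption-holds}).

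Since the genuinely hard work---computing $v^\star$ in closed form, proving $\pi_{\stheta^\star}$ optimal, and engineering $\phiq$ via tensorization so that Eq.~\ref{eq:q-inner-product} holds---has been discharged in the preceding lemmas, the main obstacle here is organizational rather than technical: I must reconcile each of the three transition cases at step $\sd-1$ with the single polynomial expression $q'$, and I must use the reachability result (Lemma~\ref{lem:unreachable-in-m}) to ensure that trajectories started from $\sreach$ never invoke the pathological Case~\ref{case:rs-def-1}, so that the value formula of Lemma~\ref{lem:value-of-pi-theta-star} is indeed the applicable one. Once these bookkeeping checks are in place, the three membership conditions combine to give the claim.
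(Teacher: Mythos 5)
Your proposal is correct and follows essentially the same route as the paper: the same case split ($\bot$; $\snotreach$ via Lemma~\ref{lem:realizability-snotreach-immediate}; $\sreach$ with $i=\sd-1$ handled through $q'$ and Eq.~\ref{eq:q-inner-product} across the three transition cases; $i<\sd-1$ reduced to $v^\star$-realizability via Lemma~\ref{lem:v-assumption-holds}), together with the norm bounds already established after Eqs.~\ref{eq:phiq-def} and~\ref{eq:ltheta-def}. No substantive differences from the paper's argument.
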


Recall that $\reach(s_{00})\subseteq \sreach$ under either MDP $M^v_{\stheta^\star}$ or $M^q_{\stheta^\star}$ (by Lemma~\ref{lem:unreachable-in-m}),
and that value and action-value functions on such states take the same value for the two MDPs.
Then, combining Lemmas~\ref{lem:v-assumption-holds} and \ref{lem:q-assumption-holds}, we have the following result:
\begin{lemma}\label{lem:vq-assumption-holds}
$M^v_{\stheta^\star}$ is reachable-$v^\star/q^\star$-realizable with features $\phiv$ and $\phiq$:
$(M^v_{\stheta^\star},\phiv,\phiq)\in\cM^{v^\star/q^\star \mathrm{ reach}}_{\thetabound,d,H,A} \cap \cM^{\mathrm{Pdet}}$.
\end{lemma}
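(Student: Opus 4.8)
The plan is to leverage the two realizability results already established for the separate MDPs $M^v_{\stheta^\star}$ and $M^q_{\stheta^\star}$ (Lemmas~\ref{lem:v-assumption-holds} and~\ref{lem:q-assumption-holds}) and glue them onto a single MDP over the reachable part of the state space, where the two MDPs are indistinguishable. Recall that by construction $M^v_{\stheta^\star}$ and $M^q_{\stheta^\star}$ share the same states, actions, and transition--reward kernels \emph{except} in Case~\ref{case:rs-def-1}, and that this case is triggered exactly at states in $\snotreach$. So the first step is to argue that the optimal value and action-value functions of the two MDPs coincide on all reachable states.

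For this I would invoke Lemma~\ref{lem:unreachable-in-m}: from any state in $\sreach$ one only ever reaches states in $\sreach\cup\{\bot\}$, so in particular $\reach(s_{00})\subseteq\sreach$ and Case~\ref{case:rs-def-1} is never encountered along a trajectory started from the initial state. Consequently the transition and reward kernels restricted to $\sreach$ are literally identical in $M^v_{\stheta^\star}$ and $M^q_{\stheta^\star}$, and since $v^\star$ and $q^\star$ at a state depend only on the dynamics reachable from it, we obtain $v^\star_{M^v_{\stheta^\star}}(s)=v^\star_{M^q_{\stheta^\star}}(s)$ and $q^\star_{M^v_{\stheta^\star}}(s,a)=q^\star_{M^q_{\stheta^\star}}(s,a)$ for every reachable $s$ and every action $a$.

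With this equality in hand the rest is immediate. For $v^\star$-realizability, Lemma~\ref{lem:v-assumption-holds} already gives $v^\star_{M^v_{\stheta^\star}}(s)=\ip{\phiv(s),\ltheta^\star}$ for all states, hence for the reachable ones. For $q^\star$-realizability, Lemma~\ref{lem:q-assumption-holds} gives $q^\star_{M^q_{\stheta^\star}}(s,a)=\ip{\phiq(s,a),\ltheta^\star}$; combining this with the equality of $q^\star$ on reachable states yields $q^\star_{M^v_{\stheta^\star}}(s,a)=\ip{\phiq(s,a),\ltheta^\star}$ for all reachable $s$ and all $a$. Crucially, both identities use the \emph{single} parameter $\ltheta^\star$ from Eq.~\eqref{eq:ltheta-def}, which is precisely what the definition of $\cM^{v^\star/q^\star \mathrm{ reach}}_{\thetabound,d,H,A}$ demands, and whose norm satisfies $\norm{\ltheta^\star}_2\le\thetabound$. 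The feature norm bounds $\norm{\phiv(s)}_2\le1$ and $\norm{\phiq(s,a)}_2\le1$ were already verified, and the transitions of $M^v_{\stheta^\star}$ are deterministic (only the rewards are randomized through the Bernoulli term), placing it in $\cM^{\mathrm{Pdet}}$; together these give the claimed membership.

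The main---and essentially only---obstacle is the transfer step of the second paragraph: the two prior lemmas speak about two \emph{different} MDPs, so the entire argument hinges on showing that their optimal (action-)value functions agree on the reachable set. This is where Lemma~\ref{lem:unreachable-in-m} does all the work, since without the unreachability of the Case~\ref{case:rs-def-1} states the value functions of $M^v_{\stheta^\star}$ and $M^q_{\stheta^\star}$ could differ on reachable states and the gluing would break down. Everything else is bookkeeping already carried out in the preceding lemmas.
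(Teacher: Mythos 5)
Your proposal is correct and follows the same route as the paper: it invokes Lemma~\ref{lem:unreachable-in-m} to confine reachable states to $\sreach$, observes that $M^v_{\stheta^\star}$ and $M^q_{\stheta^\star}$ have identical dynamics there (so their value and action-value functions coincide on reachable states), and then transfers the realizability identities from Lemmas~\ref{lem:v-assumption-holds} and~\ref{lem:q-assumption-holds} with the common parameter $\ltheta^\star$. This matches the paper's argument, which is exactly this gluing step stated in the paragraph preceding the lemma.
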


\subsection{Reduction to planning in the abstract game}

\begin{proof}[Proof of Theorem~\ref{thm:lb}]
Let $\delta\ge0.01,\,\thetabound\ge315,\,d\ge31,\,H\ge81$.
In what follows, we prove the theorem only for $A$ (and $\sd$) set according to Eq.~\ref{eq:a-choice}.
This is sufficient to prove the result for $A\ge\floor{\sqrt{H}}\vmin 0.8 \ld^{14}\ge \sd$ (Eq.~\ref{eq:a-choice}) as soundness with a lower action count cannot be harder to achieve, since it is always possible to duplicate some actions without changing the difficulty of the problem.

Let $\cP$ be any $\delta$-sound planner with worst-case query cost $\bar N$ for some class class $\cM\cap \cM^{\mathrm{Pdet}}$, where
\[
\cM \in \{ \cM^{v^\star}_{\thetabound,d,H,A}, \cM^{q^\star}_{\thetabound,d,H,A}, \cM^{v^\star/q^\star \mathrm{ reach}}_{\thetabound,d,H,A} \}\,.
\]
We show that $\cP$  gives rise to a sound abstract planner for the abstract game of Section~\ref{sec:abstract-game}
and therefore,  by Theorem~\ref{thm:abstract-lb}, it must use exponentially many queries.

Lemmas~\ref{lem:v-assumption-holds}, \ref{lem:q-assumption-holds}, and \ref{lem:vq-assumption-holds} show that
MDPs $\left(M^v_{\stheta^\star}\right)_{\stheta^\star\in W^\star}$, $\left(M^q_{\stheta^\star}\right)_{\stheta^\star\in W^\star}$, and $\left(M^v_{\stheta^\star}\right)_{\stheta^\star\in W^\star}$ respectively, together with feature-maps $\phiv$ and $\phiq$, belong to these classes.
Therefore, the $\delta$-sound planner $\cP$ satisfies, for any MDP $M$ with parameter $\stheta^\star$ in its class:
\[
v_{M}^{\pi_{M}}(s_{00}) \ge v^\star_{M}(s_{00})-0.01\,,
\]
where $s_{00}$ is the initial state in $M$ and 
$\pi_{M}$ is the policy induced by the interconnection of $\cP$ and MDP $M$. 
Let $\bbP$ and $\E$ be the probability measure and expectation, respectively, induced by this interconnection (as defined in Section~\ref{sec:planning}).
Then,
\begin{align*}
v_{M}^{\pi_{M}}(s_{00})&=\EEg{\sum_{t=1}^H R_t\,\big|\,S_0=s_{00}} \ge v^\star(s_{00})-0.01 \\
\EEg{\sum_{t=1}^{8\sd} R_t + v^\star(S_{8\cdot\sd}) \,\big|\,S_0=s_{00}} &\ge \EEg{\sum_{t=1}^H R_t\,\big|\,S_0=s_{00}} \ge v^\star(s_{00})-0.01 \,,\\
\end{align*}
where we put $\cdot$ in the index of $S$ to signify multiplication: as opposed to $s$, $S$ only has a single index.
It is valid to refer to the state $S_{8\cdot\sd}$ as $K\ge 9$ by Eq.~\ref{eq:asymptotic-params}.
Let us map any partial trajectory $S_0,A_0,S_1,A_1,\ldots,S_{8\cdot\sd-1},A_{8\cdot\sd-1}$ to the sequence
$(\tilde\stheta_i)_{i\in[8]}\in W^{\circ 8}$ as follows.
Let $j\in[8]$ be the smallest index for which $S_{j\cdot\sd-1}=\bot$, or let $j=9$ if no such index exist in $[8]$.
For $i\in[j-1]$, let
$\tilde\stheta_i=\stheta(S_{i\cdot\sd-1}, A_{i\cdot\sd-1})$; for $i\in[j:8]$,
let $\tilde\stheta_i$ be %
any values such that $(\tilde\stheta_i)_{i\in[8]}\in W^{\circ 8}$ (which is always possible as $(\tilde\stheta_i)_{i\in[j-1]}\in W^{\circ j-1}$ when $j>1$).
Let
\[
k^\star=\min\{i\in [8]\,:\, i=8\text{ or }\diff(\tilde\stheta_i, \stheta^\star)<\sd/4\}\,.
\]
Let $R$ be the final reward of an abstract game (with the same parameters $K,\sd,\stheta^\star$) for this sequence $(\tilde\stheta_i)_{i\in[8]}$. By Eq.~\ref{eq:final-abstract-reward},
\[
R=f_{\stheta^\star}(\stheta(S_{i\cdot\sd}))_{i\in[k^\star]}\,.
\]
Observe that if there is an illegal action in the sequence $A_0,\ldots,A_{8\cdot\sd-1}$, then $\sum_{t=1}^{8\sd} R_t + v^\star(S_{8\cdot\sd})=0$.
Otherwise, if
$\diff(\tilde\stheta_i, \stheta^\star)\ge\sd/4$ for all $i\in[8]$,
then 
all transitions leading to $S_{8\cdot \sd}$ fall under Case~\ref{case:rs-def-4} as $K\ge 9$, and
by Lemmas~\ref{lem:value-of-pi-theta-star} and \ref{lem:pi-theta-star-optimal}, $R=v^\star(S_{8\cdot \sd})=\sum_{t=1}^{8\sd} R_t + v^\star(S_{8\cdot\sd})$.
Finally, if $\diff(\tilde\stheta_{k^\star}, \stheta^\star)<\sd/4$, then
$S_{k^\star\cdot\sd}=\bot$, $v^\star(S_{k^\star\cdot\sd})=0$, $R=R_{k^\star\cdot\sd}$, and the rest of the rewards are zero.
Therefore, either way,
\begin{align}\label{eq:why-abstract-sound}
\E[R]\ge\EEg{\sum_{t=1}^{8\sd} R_t + v^\star(S_{8\cdot\sd}) \,\big|\,S_0=s_{00}} \ge v^\star(s_{00})-0.01=f_{\stheta^\star}(()) \,.
\end{align}

Recall that each response to $\cP$'s query to the MDP's simulator, 
as well as the transitions $(R_{t+1},S_{t+1})\sim Q(\cdot|S_t,A_t)$ (for $t\in[0:H-1]$)
can be implemented with at most one simulator call (respectively) to the abstract game (with the same parameters $K,\sd,\stheta^\star$; see Lemma~\ref{lem:can-simulate-mdp-with-abstract-game}).
In expectation, this results in at most $8\sd\bar N + 8\sd$ such queries to the abstract game simulator.
Together with Eq.~\ref{eq:why-abstract-sound}, and noting that the choice of $\stheta^\star\in W^\star$ was arbitrary,
we see that $\cP$ can be used to construct an abstract planner $\cA$ that is sound with worst-case query cost $8\sd\bar N + 8\sd$.
Therefore, by Theorem~\ref{thm:abstract-lb}, and using Eq.~\ref{eq:asymptotic-params},
\begin{align*}
8\sd\bar N + 8\sd &= 2^{\Omega\left(\sd\wedge K\right)}  \\
\bar N &= 2^{\Omega\left(H^{1/2} \vmin \ld^{1/4}\right)}\,.
\end{align*}

\end{proof}

\section{Extending the guarantees of TensorPlan: the Proof of Theorem~\ref{thm:ub}}
\label{sec:ub}

The purpose of this section is to provide a proof of Theorem~\ref{thm:ub}.
As this theorem has three parts depending on the choice of the class of featurized MDPs $\cM$,
we proceed based on this choice.

As mentioned beforehand, \citet{weisz2021query} already proved the theorem
for $\cM=\cM^{v^\star}_{\thetabound,d,H,A}$.
Hence, it remains to show the theorem for
\begin{align*}
\cM =  \cM^{v^\star/q^\star \mathrm{ reach}}_{\thetabound,d,H,A}
\qquad
\text{and }
\qquad
\cM =  \cM^{q^\star}_{\thetabound,d,H,A} \cap \cM^{\mathrm{Pdet}}\,.
\end{align*}
We start with the former case.

First, for easy reference,
we include the pseudocode of \tensorplan, the planner
from \citet{weisz2021query}, which establishes the claim for 
$\cM =  \cM^{v^\star}_{\thetabound,d,H,A}$.
The pseudocode, which can be found in Appendix~\ref{app:tensorplan-pseudocode},
is adjusted in minor ways to fit our conventions.

In the pseudocode, the simulator oracle is represented through the function $\simm$, which returns a reward, next-state, associated-feature triplet as the response to a call (query) of a state and action tuple, as defined in Section~\ref{sec:planning}. %
\tensorplan (TP), as a planner, is defined through the call to function {\tt GetAction}. 
An input to this function indicates whether this function is called for the first state of an episode.
The significance of this is that for the first state, \tensorplan runs a more expensive planning step,
the result of which is reused in subsequent calls to {\tt GetAction} \emph{within} the episode.

From \citet{weisz2021query}, we have the following result:
\begin{theorem}[\citet{weisz2021query}, Theorem 4.2 with Corollary 4.3]
\label{thm:tensorplan-original}
For 
arbitrary positive reals $\delta,\thetabound$ and arbitrary positive integers $d,H$, it holds that
\[
\cC^\star_{\mathrm{LA}}(\cM^{v^\star}_{\thetabound,d,H,A},\delta) =
 O\Big(\mathrm{poly}\Big( \big(\tfrac{dH}{\delta}\big)^A, \thetabound \Big)\Big)\,.
\]
and \tensorplan is a planner that achieves this.
\end{theorem}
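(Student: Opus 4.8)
The statement is a verbatim restatement of \citet[Theorem 4.2 and Corollary 4.3]{weisz2021query}, so the cleanest route is simply to invoke their construction; the plan below is how I would reconstruct the argument if it were not already available. The planner in question is \tensorplan, and the target is to show that, acting with local access, it is $\delta$-sound on $\cM^{v^\star}_{\thetabound,d,H,A}$ with worst-case query cost polynomial in $(dH/\delta)^A$ and $\thetabound$. The first thing I would set up is the central device, \emph{optimism over a candidate set for $\theta^\star$}. Since $v^\star$ is realizable, $v^\star(s)=\langle\phi(s),\theta^\star\rangle$, and the optimal action maximizes $q^\star(s,a)=r(s,a)+\bE_{s'\sim P(\cdot|s,a)}[\langle\phi(s'),\theta^\star\rangle]$, which is \emph{linear} in $\theta^\star$ once the expected next-state feature $\bE[\phi(s')]$ and the expected reward $r(s,a)$ are estimated by drawing $\mathrm{poly}(d,H,1/\delta)$ simulator samples (Hoeffding suffices). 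The obstacle is that $\theta^\star$ cannot be read off directly, so I would maintain a set $\Theta$ of candidate parameters and, at the episode start, pick the \emph{most optimistic} consistent $\theta\in\Theta$, i.e.\ the one maximizing the predicted start value $\langle\phi(s_0),\theta\rangle$.

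The core loop is then: roll out the policy greedy with respect to the current $\theta$ (evaluating the greedy $\argmax$ via the sampled next-feature estimates), and compare the realized return against the prediction $\langle\phi(s_0),\theta\rangle$. If they agree up to $\delta$, optimism certifies near-optimality, since $v^\star(s_0)\le\langle\phi(s_0),\theta\rangle$, which is $\approx$ the realized return; the planner then commits to this policy for the episode. If they disagree, the rollout \emph{witnesses} a violated Bellman/consistency constraint, which I would use to discard an informative portion of $\Theta$, so that only boundedly many such rounds can occur before a certificate is produced.

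The main obstacle — and the source of both the tensor construction and the $(dH/\delta)^A$ factor — is that the greedy policy depends on $\theta$ through nested $\argmax$ operations over $A$ actions at up to $H$ stages, so the realized-return-as-a-function-of-$\theta$ is only piecewise linear, with pieces indexed by the chosen action pattern. To keep the elimination argument linear (hence amenable to a volumetric bound on the number of rounds), I would \emph{lift} the features to tensor powers, replacing $\theta$ by $\theta^{\otimes m}$ for an appropriate degree $m=\Theta(A)$; the comparisons among the $A$ actions then become linear constraints in the lifted parameter, whose ambient dimension is $\sim d^{m}=(dH/\delta)^{O(A)}$. A standard volumetric/potential argument in this lifted space bounds the number of informative elimination rounds by a polynomial in the lifted dimension, and since each round costs $\mathrm{poly}(d,H,1/\delta,A)$ queries for the feature and reward estimates, the total is $O(\mathrm{poly}((dH/\delta)^A,\thetabound))$.

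The residual technical points, which I would defer to \citet{weisz2021query}, are: controlling the estimation error uniformly so that a genuine constraint violation is reliably detectable (while spurious violations from sampling noise are ruled out); verifying that the tensor lift preserves realizability with a parameter norm inherited from $\thetabound$ and feature norms inherited from the unit-ball bound on $\phi$; and the $\delta$-soundness bookkeeping across the $H$ within-episode calls, where the expensive start-of-episode computation is amortized by caching its output and reusing it in the subsequent calls so that the per-call query cost remains within the claimed bound.
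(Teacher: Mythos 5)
The paper does not prove this statement at all: it is imported verbatim from \citet{weisz2021query} (with the \tensorplan pseudocode merely reproduced in Appendix~\ref{app:tensorplan-pseudocode} for reference), and your primary move---invoking that result---is exactly what the paper does, so the proposal takes the same approach and is correct. Your accompanying reconstruction sketch is broadly faithful to the actual mechanism (optimistic parameter selection from a constraint set, violation-driven elimination, tensorization over the $A$ actions, an eluder-dimension bound in the lifted space of dimension $(d+1)^A$), with only immaterial inaccuracies: \tensorplan selects the action minimizing the estimated Bellman residual $\left|\ip{\ldd_{a},[1,\theta]}\right|$ rather than a greedy $\argmax$, and the lifted constraints are products of per-action residuals (which vanish for $\theta^\star$ because the optimal action's residual is zero) rather than pairwise comparisons.
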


Let us now show that the same result also holds for
$\cM =  \cM^{v^\star/q^\star \mathrm{ reach}}_{\thetabound,d,H,A}$.
We actually 
 state and show this result for the case when $q^\star$ realizability over the reachable states is dropped, leading to the class
$\cM^{v^\star \mathrm{ reach}}_{\thetabound,d,H,A}$.
Clearly, it suffices to show the polynomial query complexity bound for $
\cM^{v^\star \mathrm{ reach}}_{\thetabound,d,H,A}$.
\begin{lemma}\label{lem:tensorplan-reach}
For 
arbitrary positive reals $\delta,\thetabound$ and arbitrary positive integers $d,H$, it holds that
\[
\cC^\star_{\mathrm{LA}}(\cM^{v^\star \mathrm{ reach}}_{\thetabound,d,H,A},\delta) =
 O\Big(\mathrm{poly}\Big( \big(\tfrac{dH}{\delta}\big)^A, \thetabound \Big)\Big)\,.
\]
and \tensorplan is a planner that achieves this.
\end{lemma}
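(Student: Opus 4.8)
The plan is to reduce the lemma directly to Theorem~\ref{thm:tensorplan-original} by exploiting the fact that, under the local access model, \tensorplan only ever interacts with states reachable from the initial state, together with the observation that the guarantees of Theorem~\ref{thm:tensorplan-original} depend on the featurized MDP only through $v^\star$-realizability at the states the planner actually encounters. Since $v^\star$-realizability over reachable states is all that the class $\cM^{v^\star \mathrm{ reach}}_{\thetabound,d,H,A}$ provides, the idea is that \tensorplan never ``notices'' that realizability may fail off the reachable set, and so its analysis carries over unchanged.

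The first step is to establish a reachability invariant. I would argue by induction on the order in which states are encountered during a run of \tensorplan that every such state lies in $\reach_M(s_0)$, where $s_0\in\cS_0$ is the initial state. The base case is immediate, since \tensorplan is called with $s_0$, which is reachable from itself. For the inductive step, the local access restriction guarantees that any query $(s,a)$ passed to \simm has first argument $s$ previously encountered, hence reachable by the induction hypothesis; the response $(R',S',\phiv(S'))$ satisfies $S'\sim Q(\cdot\mid s,a)$, so $P(S'\mid s,a)>0$ and $S'$ is reachable from $s$, and therefore from $s_0$. Consequently the set of all states at which \tensorplan ever reads a feature vector or simulates a transition is contained in $\reach_M(s_0)$.

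The second step is to transfer the soundness and query-cost bounds. For any $(M,\phiv)\in\cM^{v^\star \mathrm{ reach}}_{\thetabound,d,H,A}$ there is a $\theta^\star$ with $\norm{\theta^\star}_2\le\thetabound$ such that $v^\star_M(s)=\phiv(s)^\top\theta^\star$ at every reachable $s$. By the invariant this identity holds at every state \tensorplan encounters, which is exactly the collection of realizability facts that the proof of Theorem~\ref{thm:tensorplan-original} relies on. I would therefore go through that proof and check that each invocation of $v^\star$-realizability refers either to the feature of the state the planner is called with or to a feature vector returned by \simm, and never to features of states outside the planner's view. Granting this, the original argument applies verbatim, yielding that \tensorplan is $\delta$-sound with worst-case query cost $O(\poly((dH/\delta)^A,\thetabound))$, which is the claim.

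The main obstacle is precisely this verification: confirming that the correctness certificate maintained internally by \tensorplan --- the optimistic constraints and the value estimates it assembles from sampled features --- never implicitly appeals to realizability at a state it has not encountered. This is plausible because \tensorplan is a genuine local-access algorithm whose internal state is a function only of observed feature vectors and simulated transitions, so no quantity it computes can depend on unreachable states; but it must be checked against the pseudocode, since a single use of global realizability would break the reduction.
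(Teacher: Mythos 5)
Your key observation --- that under local access \tensorplan only ever encounters states in $\reach_M(s_0)$ --- is exactly the observation the paper's proof is built on, so the first half of your argument matches. Where you diverge is in how that observation is converted into a guarantee. You propose a white-box route: re-open the proof of Theorem~\ref{thm:tensorplan-original} and audit every invocation of $v^\star$-realizability to confirm it only refers to states the planner has seen. You correctly flag this audit as the main obstacle, and as written it is a genuine gap: the lemma is not proved until that verification is actually carried out, and it requires detailed access to the analysis in \citet{weisz2021query}, which is not self-contained here. The paper closes this gap with a cleaner black-box device that you should adopt. Define $M'$ to be the MDP obtained from $M$ by restricting the state space to $\reach_M(s_0)$; since no transition leaves $\reach_M(s_0)$, the MDP $M'$ is well-defined and $v^\star_{M'}$ agrees with $v^\star_M$ on $\reach_M(s_0)$, so $(M',\phiv|_{\reach_M(s_0)})\in \cM^{v^\star}_{\thetabound,d,H,A}$, i.e.\ realizability holds on \emph{all} of $M'$'s states. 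Your reachability invariant then shows that the joint distribution over queries, transitions, and returned actions is identical whether \tensorplan runs in $M$ or in $M'$, so Theorem~\ref{thm:tensorplan-original} applied to $M'$ transfers verbatim: the induced policy is $\delta$-optimal in $M'$, hence in $M$, with the same query cost. This replaces the line-by-line audit with a single structural fact about $M'$ and lets the upper-bound theorem be used as a black box.
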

\begin{proof}
This result is based on the observation that 
when \tensorplan is used with an MDP $M$ from some initial state $s_0$ of the MDP,
it only collects data from transitions for states in $\reach_M(s_0)$, the set of states that are reachable from $s_0$.
As such, running \tensorplan in $M$ from $s_0$ generates the same joint distribution over queries and transitions as running it in an MDP $M'$ whose state space is restricted to $\reach_M(s_0)$.
Since in $M$ there are no transitions from $\reach_M(s_0)$ to outside of this set, 
MDP $M'$ is well-defined and its optimal value function matches that of $M$ on the states in $\reach_M(s_0)$. 
Therefore $(M',\phiv|_{\reach_M(s_0)})\in \cM^{v^\star}_{\thetabound,d,H,A}$.
Since \tensorplan is $\delta$-optimal with the required polynomial complexity over the latter class,
it induces a $\delta$-optimal policy in $M'$ with polynomial query cost, while, based on the relationship between $M$ and $M'$, this policy is also $\delta$-optimal in $M$.
\end{proof}

To prove 
Theorem~\ref{thm:ub},
it remains to show that the query complexity of 
$ \cM^{q^\star}_{\thetabound,d,H,A} \cap \cM^{\mathrm{Pdet}}$ is also polynomial.
\begin{lemma}\label{lem:tensorplan-qstar}
For 
arbitrary positive reals $\delta,\thetabound$ and arbitrary positive integers $d,H$, it holds that
\[
\cC^\star_{\mathrm{LA}}(  \cM^{q^\star}_{\thetabound,d,H,A} \cap \cM^{\mathrm{Pdet}}, \delta) =
 O\Big(\mathrm{poly}\Big( \big(\tfrac{dH}{\delta}\big)^A, \thetabound \Big)\Big)\,.
\]
and \tensorplan can be adjusted to achieve this.
\end{lemma}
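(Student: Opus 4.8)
The plan is to reduce the $q^\star$-realizable, deterministic-transition problem to a $v^\star$-realizable problem to which Theorem~\ref{thm:tensorplan-original} applies directly, with the determinism of the transitions being exactly what makes the reduction valid. Given $(M,\phiq)\in \cM^{q^\star}_{\thetabound,d,H,A}\cap\cM^{\mathrm{Pdet}}$, write $\gamma(s,a)$ for the deterministic successor of $(s,a)$ and $R(s,a)$ for its (possibly random) reward. I would build an auxiliary MDP $M'$ whose states are the state--action pairs of $M$ together with $\bot$: taking action $a'$ at the $M'$-state $(s,a)$ yields reward $R(s,a)$ and moves deterministically to the $M'$-state $\big(\gamma(s,a),a'\big)$. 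Since $s\in\cS_h$ forces $\gamma(s,a)\in\cS_{h+1}$, $M'$ inherits the layered horizon-$H$ structure of Assumption~\ref{ass:fixed}. Equip $M'$ with the state feature-map $\psi\big((s,a)\big):=\phiq(s,a)$ and $\psi(\bot):=\bm{0}$. The crucial claim is that $v^\star_{M'}((s,a))=q^\star_M(s,a)$ for all $(s,a)$, which I would prove by backward induction on the stage index using the Bellman optimality equation
\[
v^\star_{M'}\big((s,a)\big)=r(s,a)+\max_{a'\in[A]} v^\star_{M'}\big((\gamma(s,a),a')\big)\,.
\]
Here determinism is essential: because $\gamma(s,a)$ is a single state, $\max_{a'}v^\star_{M'}((\gamma(s,a),a'))=\max_{a'}q^\star_M(\gamma(s,a),a')=v^\star_M(\gamma(s,a))$ by the induction hypothesis, so the right-hand side equals $r(s,a)+v^\star_M(\gamma(s,a))=q^\star_M(s,a)=\phiq(s,a)^\top\theta^\star$. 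For stochastic transitions the maximiser $a'$ would have to be fixed before the next state is revealed, replacing $\E_{s'}\max_{a'}$ by $\max_{a'}\E_{s'}$, and the identity would fail; this is precisely why the statement is confined to $\cM^{\mathrm{Pdet}}$. Since $\|\phiq\|_2\le1$ and $\|\theta^\star\|_2\le\thetabound$ are inherited, this places $(M',\psi)\in\cM^{v^\star}_{\thetabound,d,H,A}$.

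Next I would verify that $M'$ can be simulated under local access at a cost of one $M$-query per $M'$-query. To answer a query $((s,a),a')$ one calls the $M$-simulator at $(s,a)$, receiving $(R',s',(\phiq(s',a''))_{a''\in[A]})$, and returns $(R',(s',a'))$ together with the feature $\psi((s',a'))=\phiq(s',a')$, which is among the features just received. Hence \tensorplan run on $M'$ never needs more than local access to $M$, and by Theorem~\ref{thm:tensorplan-original} (with soundness parameter $\delta/2$) it induces a $(\delta/2)$-sound policy for $M'$ with query cost $O(\mathrm{poly}((dH/\delta)^A,\thetabound))$. Because the transitions are deterministic, an $M'$-trajectory started at $(s_0,a_0)$ is in bijection with the $M$-trajectory that plays $a_0,a_1,\dots$, and the two collect identical rewards; therefore the \tensorplan-induced $M'$-policy from $(s_0,a_0)$ gives an $M$-policy (which, at $M$-state $s_t$, reports the action \tensorplan selected one step earlier) whose expected return equals $v^{\tensorplan}_{M'}((s_0,a_0))\ge q^\star_M(s_0,a_0)-\delta/2$.

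It remains to select a good first action, the one place the reduction is not automatic, since $v^\star_M(s_0)=\max_a\phiq(s_0,a)^\top\theta^\star$ is a maximum of linear functions and hence not itself linearly realizable, so $s_0$ cannot simply be added to $M'$ as a realizable state. I would handle this at the planner level: at the initial state $s_0$, for each $a\in[A]$ estimate the value $v^{\tensorplan}_{M'}((s_0,a))$ of the induced $M'$-policy by $\mathrm{poly}(H/\delta)$ Monte Carlo rollouts (returns lie in $[0,H]$), commit to the empirically best $a_0$, and then run the induced policy as above; a union bound over the $A$ actions, together with the crude bound $H$ on the value, controls the estimation error to $\delta/2$ while keeping the extra cost polynomial. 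Combining, the chosen $a_0$ satisfies $v^{\tensorplan}_{M'}((s_0,a_0))\ge \max_a v^{\tensorplan}_{M'}((s_0,a))-\delta/2\ge v^\star_M(s_0)-\delta$, so the resulting $M$-planner is $\delta$-sound with query cost $O(\mathrm{poly}((dH/\delta)^A,\thetabound))$, as claimed. I expect the main obstacle to be establishing the realizability identity $v^\star_{M'}=q^\star_M$ cleanly and pinning down exactly why it requires deterministic transitions; the first-action selection is a comparatively minor technical wrinkle.
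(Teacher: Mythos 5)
Your proof is correct, and its core is the same reduction the paper uses: build the ``delayed'' MDP whose states are state--action pairs $(s,a)$ of $M$, prove $\bar v^\star((s,a))=q^\star_M(s,a)$ by backward induction on the stage (with exactly the observation you make about where determinism enters --- the swap of $\max_{a'}$ and the expectation over the next state), equip it with $\barphiv((s,a))=\phiq(s,a)$, and simulate each query to the new MDP with one local-access query to $M$. The only place you genuinely diverge is the initial state, and this is worth comparing. You correctly identify that $v^\star_M(s_0)=\max_a\ip{\phiq(s_0,a),\theta^\star}$ is not itself linear in the features, and you resolve this \emph{algorithmically}: run \tensorplan from each of the $A$ candidate initial states $(s_0,a)$, Monte-Carlo estimate the value of each induced policy with $\mathrm{poly}(H/\delta)$ rollouts, and commit to the empirical maximizer. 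This works --- the rollouts are implementable under local access since $s_0$ has been handed to the planner, the returns lie in $[0,H]$ so Hoeffding plus a union bound over $A$ actions gives $\delta/2$ accuracy, and the extra multiplicative cost $A\cdot\mathrm{poly}(H/\delta)$ is absorbed into the polynomial bound. The paper instead resolves it \emph{structurally}: it adds a single dummy initial state $(s_0,0)$ that transitions to $(s_0,a)$ with zero reward, appends one coordinate to the feature space with $\barphiv((s_0,0))=[1,\bm{0}]$, and sets the extra coefficient of the parameter vector to $v^\star(s_0)$, which keeps everything inside one run of \tensorplan at the price of working in $\cM^{v^\star}_{2\thetabound,d+1,H+1,A}$. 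Both routes are sound; the paper's is tidier in that it needs no separate estimation phase or union bound, while yours avoids touching the feature dimension and makes the role of the first action more transparent. If you write yours up fully, the one point to be careful about is stating that each rollout (a full run of \tensorplan.{\tt Init} followed by $H$ calls to {\tt GetAction}, all driven by the simulator) yields an unbiased sample of the induced policy's value, and that the selected action $\hat a_0$ is independent of the randomness of the final, fresh run of the planner.
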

\begin{proof}
\newcommand{\barv}{\bar{v}}
Take any featurized MDP $(M, \phiq)\in\cM^{q^\star}_{\thetabound,d,H,A} \cap \cM^{\mathrm{Pdet}}$.
Let $M=(\cS, [A], Q)$ for some set of states $\cS$ and reward-transition kernel $Q$ and let $q^\star$ be the optimal action-value function in $M$. 
Let $P$ and $R$ be the transition and reward kernels respectively, corresponding to $Q$.
Let $f:\cS \times \cA \to \cS$ be such that $P(f(s,a)|s,a)=1$. This function exists because $M$ has deterministic transitions.
Let $\cS=\{\bot\}\,\cup\,\bigcup_{h=0}^{H-1} \cS_h$ be the decomposition of the state-space of $M$ 
from Assumption~\ref{ass:fixed},
noting that $\cS_h$ are pairwise disjoint. 
Fix an initial state $s_0\in \cS_0$.

The idea of the proof is to construct a new MDP $\bar M$ from $M$ such that
{\em (i)} acting near-optimally in the new MDP implies acting near-optimally in $M$;
{\em (ii)} the optimal value function $\bar v^\star$ of $\bar M$ is linearly realizable with some feature-map $\barphiv$ with a ``small parameter vector'' $\bar \ltheta^\star$;
{\em (iii)} transitions in the new MDP can be simulated by using a simulator of $M$, while this simulator can also provide access to $\barphiv$. Then, one can use \tensorplan with the new MDP to get a good policy in $M$ while keeping the query cost under control.

The new MDP $\bar M=({\bar{\cS}}, [A], \bar{Q})$ is an $H+1$-horizon MDP which
 `delays' rewards and transitions by one step. This will allow us to satisfy all three requirements listed above. In particular, the key to this is that states in the new MDP will be of the form $(s,a)$ where $(s,a)$ is a ``generic''  state-action pair in $M$ and we will ensure that $\bar v^\star( (s,a) ) = q^\star(s,a)$.
The state features in the new MDP can then be essentially chosen to be $\barphiv((s,a))=\phiq(s,a)$.
 
The details are as follows:
Apart from some special cases, the states in this new MDP are pairs of the original MDP's states and actions, of the form $(s,a)$, with the action component corresponding to an action taken in the ``previous step'' in the new MDP. 
Then, when action $a'$ is used in state $(s,a)$, unless $f(s,a)=\bot$ the next state is $(f(s,a),a')$, while the reward incurred comes from $R(\cdot|s,a)$ where $R$ is the reward kernel underlying $Q$.
When $f(s,a)=\bot$, the next state is simply $\bot$, while the reward still comes from $R(\cdot|s,a)$).
At state $\bot$, as before, any action still transitions to $\bot$ with no reward incurred.
Finally, we 
add another state, $(s_0,0)$, to the state of the new MDP so that $\{ (s_0,0) \}$ becomes the set of initial states. The transitions here are as follows:  if action $a$ is taken in state $(s_0,0)$, the next state becomes $(s_0,a)$ with no reward incurred. (The reason for not adding all states from $\cS_1 \times \{0\}$ to $\bar M$ will become clear later.)
In summary, in the new MDP, when an action is taken, the action chosen in the previous time step (and stored as part of the state) is carried out and the new action is stored to be used in the next step.

Let $r(s,a)$ be the expected immediate reward when action $a$ is taken in state $s$.
We claim that by the construction of $\bar M$,
\begin{align}
\label{eq:qsbv}
\bar v^\star( (s,a) )=q^\star(s,a)\,,
\quad
(s,a)\in \cS_h\times [A]\,, 0\le h\le H-1\,.
\end{align}
Indeed, this is trivial for $h=H-1$ as here as $q^\star(s,a)=\barv^\star( (s,a) )=r(s,a)$, as the final step of the episode transitions to $\bot$.
Now, recall that the optimal value functions satisfy the \textbf{Bellman-optimality equations}.
In the case of $M$, these take the form 
\begin{align}
v^\star(s) &= \max_a q^\star(s,a)\,, \label{eq:vmq} \\
q^\star(s,a) & = r(s,a) + v^\star(f(s,a))\,, \qquad (s,a)\in \cS \times [A]\,. \label{eq:qrv}
\end{align}

Denote the same reward for $\bar M$ and state $(s,a)$ and action $a'$ by $\bar r( (s,a), a' )$.
Of course, the Bellman-optimality equations also hold for $\barv^\star$.
With this, if Eq.~\ref{eq:qsbv} holds up to $h+1$, 
for $(s,a)\in \cS_h \times [A]$
we have
\begin{align*}
\barv^\star( (s,a) ) 
& = \max_{a'\in [A]} \bar r( (s,a), a' ) + \barv^\star( (f(s,a),a') ) \tag{Bellman optimality equations for $\bar v^\star$} \\
& = \max_{a'\in [A]} r( s,a ) + q^\star(f(s,a),a') \tag{definition of $\bar r$ and induction hypothesis} \\
& = r( s,a ) + v^\star(f(s,a)) \tag{Eq.~\ref{eq:vmq}} \\
& = q^\star(s,a)\,, \tag{Eq.~\ref{eq:qrv}} 
\end{align*}
finishing the proof of Eq.~\ref{eq:qsbv}.

Now, Eq.~\ref{eq:qsbv} 
suggests just to define $\barphiv( (s,a) ) = \phiq(s,a)$ and $\barphiv(\bot)=\bm{0}$.
This almost works except that we also need to define $\barphiv$ at $(s_0,0)$. To deal with this case, we extend the dimension of the feature space by one, setting 
$\barphiv( (s_0,0) ) = [1, \bm{0}]$.%
\footnote{$\bm{0}$ is a $\ld$-dimensional vector of zeros, and recall that $[1, \bm{0}]=(1, \bm{0}^T)^T$.}
Now, using that $\bar r( (s_0,0),a)=0$ for all $a\in [A]$, 
we have
\begin{align}
\bar v^\star( (s_0,0) ) = \max_a \bar r( (s_0,0),a) + \bar v^\star( (s_0,a) )
= \max_a q^\star(s_0,a) = v^\star(s_0)\,,
\label{eq:vs0}
\end{align}
where the second equality used Eq.~\ref{eq:qsbv} with $(s,a)=(s_0,a)$.
Using this, 
we set $\barltheta^\star = (v^\star(s_0),\ltheta^\star)$ so that $\bar v^\star$ is linearly realizable with the new features.
(This is the point where we exploit that in $\bar M$ there is only a single new initial state: this is why it suffices to add a single extra dimension to the feature space.)

As $\phiq \in\cS\times[A]\to \B_{\ld}(1)$ and $\ltheta^\star\in\B_{\ld}(\thetabound)$ we have that 
$v^\star(s_{0})\le \thetabound$, therefore, by the triangle inequality,
$\bar{\ltheta^\star}\in\B_{\ld+1}(2\thetabound)$.
Using $(M,\phiq) \in  \cM^{q^\star}_{\thetabound,d,H,A}$,
it follows that 
\[
(\bar M,\barphiv)\in  \cM^{v^\star}_{2\thetabound,d+1,H+1,A}\,.
\]

The planning method $\mtpdetq$ for the class $\cM^{q^\star}_{\thetabound,d,H,A} \cap \cM^{\mathrm{Pdet}}$ is designed as follows:
When $\mtpdetq$ is called with $s_0,\phiq(s_0,\cdot)$ at the beginning of an episode,
$\mtpdetq$ calls the {\tt GetAction} method of \tensorplan with 
\[
( (s_0,0),(1,\bm{0}), \true, A,H+1,d+1,\simulatesc',\delta,2B)
\] 
where the pseudocode of 
$\simulatesc'$ is given in Algorithm~\ref{alg:simulate-detq}.

\begin{algorithm}[H]
\caption{
\simulatesc'
}\label{alg:simulate-detq}
\begin{algorithmic}[1]
\State \textbf{Inputs:} $\bar{s}, a'$; \textbf{returns:} rewards, next-states, associated features
\If{$\bar{s}=\bot$}
  \State \Return $(0, \bot,\bm{0})$ %
\ElsIf{$\bar{s}=(s,0)$ for some $s\in\cS$}
  \State \Return $(0, (s,a'), (0,\phiq( s,a' )) )$ \Comment{Note: $\barphiv( (s,a') ) = (0,\phiq( s,a' ))$}
\Else
  \State $(s, a)\gets \bar{s}$
  \State $(R, S')\gets \simulatesc(s, a)$
  \If{$S'=\bot$}
    \State \Return $(R, \bot, \bm{0})$  %
  \Else
    \State \Return $(R, (S', a'), (0, \phiq(S',a')) )$  \Comment{Note: $\barphiv( (S',a') ) = (0,\phiq( S',a' ))$}
  \EndIf
\EndIf
\end{algorithmic}
\end{algorithm}

The action $A_0$ returned by {\tt GetAction} is returned by $\mtpdetq$,
which is then executed in $M$, transitioning to state $S_1$ 
while incurring reward $R_1$
(we set $S_0=s_0$). More generally, for $1\le t \le H$ let $S_t$ be the $t$th state of MDP $M$,
 $R_t$ be the reward associated with transitioning to $S_t$,
and $A_{t-1}$ be the action that led to this transition.
Let $\bar S_0 = (s_0,0)$ be the corresponding state in $\bar M$, and
for $1 \le t \le H-1$ let
 $\bar S_t = (S_t,A_{t-1})$ when $S_t\ne \bot$, and $\bar S_t = \bot$ when $S_t=\bot$.
 We also let $\bar S_{H}=\bot$.

As the interaction between $\bar M$ and $\mtpdetq$ continues, for $1\le t \le H$, 
$\mtpdetq$ is called with $S_t, \phiq(S_t,\cdot)$.
$\mtpdetq$ then calls
{\tt GetAction}  of \tensorplan 
with
\[
( \bar S_t , \barphiv(\bar S_t), \false, A,H+1,d+1,\simulatesc',\delta,2B)\,,
\]
where $\bar S_t$ is constructed as described above, from $S_t$ and from $A_{t-1}$, the action returned by the last call to \tensorplan.{\tt GetAction}, which is stored by $\mtpdetq$ in the global memory.
For $1\le t \le H-1$, the action $A_t$ returned by {\tt GetAction} is used in $M$.

Note that $(R_{t+1},\bar S_{t+1}) \sim  \bar Q(\cdot|\bar S_t,A_t)$ holds for $0\le t \le H-1$:
Formally, if $\bbP$ denotes the distribution induced over interaction sequences 
and $\cF_t$ is the smallest $\sigma$-algebra that makes the history leading up to the choice of $A_t$ (and including $A_t$) measurable,
then 
\begin{align}
\bbP(R_{t+1},\bar S_{t+1}\in \cdot\,|\,\cF_t) = \bar Q(\cdot\,|\,\bar S_t,A_t)\,
\label{eq:barq}
\end{align}
 holds $\bbP$-almost surely for $0\le t\le H-1$.
This relation follows directly from the definitions.
Due to Eq.~\ref{eq:barq},
from the perspective of \tensorplan, the environment is exactly $\bar M$.
Hence, letting $\bar\pi$ denote the policy induced in $\bar M$ by these calls,
and $\bar v^{\bar\pi}$ denote the corresponding value function in $\bar M$,
by Theorem~\ref{thm:tensorplan-original} and Eq.~\ref{eq:vs0},
$\bar v^{\bar \pi}( (s_0,0) ) \ge \bar v^\star( (s_0,0) ) - \delta= v^\star(s_0)-\delta$, while the total expected number of queries 
issued to \simulatesc'
is 
$ O\Big(\mathrm{poly}\Big( \big(\tfrac{(d+1)(H+1)}{\delta}\big)^A, 2\thetabound \Big)\Big)
=
O\Big(\mathrm{poly}\Big( \big(\tfrac{dH}{\delta}\big)^A, \thetabound \Big)\Big)$.
Since in each call to \simulatesc', \simulatesc is called at most once, 
the total number of queries issued  to 
 \simulatesc during the course of an episode satisfies the same bound. It follows that the query cost of the new planner also enjoys this bound. Hence, it remains to show that the new planner is also sound.

To see this, let $\pi$ denote the policy induced by $\mtpdetq$, the planner constructed above for $M$.
Then, because rewards after $H$ steps in $M$ are by definition zero,
\begin{align*}
v^{\pi}(s_0) = \E[ \textstyle \sum_{t=1}^{H} R_t \,|\,S_0=s_{0}] = \bar v^{\bar \pi}((s_0,0))\,,
\end{align*}
where $\E$ is the expectation corresponding to $\bbP$ and the second equality holds because of 
Eq.~\ref{eq:barq} (as noted before, from the point of view of \tensorplan, the environment is $\bar M$ thanks to this identity).
Putting things together thus finishes the proof.
\end{proof}

\begin{proof}[Proof of Theorem~\ref{thm:ub}]
The theorem follows directly from Theorem~\ref{thm:tensorplan-original}
and Lemmas~\ref{lem:tensorplan-reach} and \ref{lem:tensorplan-qstar}.
\end{proof}

\bibliography{linear_fa}

\appendix

\section{Calculating the linear features}\label{app:phiv-z-calc}

\subsection{Calculating feature components of $\phiv$}\label{sec:app:calc-phiv}

We follow the notation of Section~\ref{sec:phi-def-and-realizability}.
In particular, for any state $s\in \cS$, $s\ne\bot$, let
$s=s_{ki}$ be a state along step $i$ of round $k$.
We intend to linearize the expression $g(\ctflip_{ki} + \enotfix_{ki})g(\efix_{ki})$.

Let $x=\ctflip_{ki}+\enotfix_{ki}$ and $y=\efix_{ki}$. Then,
$x$ and $y$ can be written according to Eqs.~\ref{eq:eflip-enotflip},~\ref{eq:eflip-enotflip2} as:
\begin{align*}
\begin{split}
y&=\frac{1}{2}\left( \ip{\bm{1}, \fix_{ki}}-\ip{\fix_{ki}\cdot\ntheta_{ki},\ntheta^\star} \right)
=
\ip{y_{(1,0)}, 1}+\ip{y_{(1,1)}, \ntheta^\star}\\
&\quad\quad\text{for }y_{(1,0)}=\frac{1}{2} \ip{\bm{1}, \fix_{ki}} \text{ and } y_{(1,1)}=-\frac{\sqrt{\sd}}{2}\fix_{ki}\cdot\ntheta_{ki}\\
&\quad\quad\text{with }\norm{y_{(1,0)}}_2,\norm{y_{(1,1)}}_2\le\sd
\end{split}\\
\begin{split}
x&=\ctflip_{ki}+\frac{1}{2}\left( \ip{\bm{1}, \notfix_{ki}}-\ip{\notfix_{ki}\cdot\ntheta_{ki},\ntheta^\star} \right)
=
\ip{x_{(1,0)}, 1}+\ip{x_{(1,1)}, \ntheta^\star}\\
&\quad\quad\text{for }x_{(1,0)}=\ctflip_{ki}+\frac{1}{2}\ip{\bm{1}, \notfix_{ki}} \text{ and } x_{(1,1)}=-\frac{\sqrt{\sd}}{2}\notfix_{ki}\cdot\ntheta_{ki}\\
&\quad\quad\text{with }\norm{x_{(1,0)}}_2,\norm{x_{(1,1)}}_2\le\sd
\end{split}
\end{align*}
Notice that $x_{(\cdot,\cdot)}$ and $y_{(\cdot,\cdot)}$ do not depend on $\stheta^\star$, only on the current state $s_{ki}$.
Furthermore, using Lemma~\ref{lem:tensorize}
\begin{align*}
\begin{split}
x^2&=\ip{x_{(1,0)}^2, 1}+\ip{2x_{(1,0)} x_{(1,1)},\ntheta^\star} + \ip{\flat(x_{(1,1)} \otimes x_{(1,1)}), \flat(\ntheta^\star\otimes \ntheta^\star)}\\
&=\ip{x_{(2,0)},1} + \ip{x_{(2,1)},\ntheta^\star} + \ip{x_{(2,2)},(\ntheta^
\star)^2}\\
&\quad\quad\text{for }x_{(2,0)}=x_{(1,0)}^2 ,\, x_{(2,1)}=2x_{(1,0)} x_{(1,1)},\,
x_{(2,2)}=\flat(x_{(1,1)} \otimes x_{(1,1)}),\\
&\quad\quad\quad\quad\text{and }(\ntheta^\star)^{\otimes 2}=\flat(\ntheta^\star\otimes\ntheta^\star)\\
&\quad\quad\text{with }\norm{x_{(2,0)}}_2,\norm{x_{(2,1)}}_2,\norm{x_{(2,2)}}_2\le2\sd^2
\end{split}\\
\begin{split}
g(x)&=1+x\frac{-2\sd-1}{2\sd^2}+x^2\frac{1}{2\sd^2}
=\ip{X_{(0)},1} + \ip{X_{(1)},\ntheta^\star} + \ip{X_{(2)},(\ntheta^
\star)^2}\\
&\quad\text{for }X_{(0)}=1+\frac{-2\sd-1}{2\sd^2} x_{(1,0)}+\frac{1}{2\sd^2} x_{(2,0)},\,\\
&\quad\quad\quad X_{(1)}=\frac{-2\sd-1}{2\sd^2} x_{(1,1)}+\frac{1}{2\sd^2} x_{(2,1)},\,\text{ and }
X_{(2)}=\frac{1}{2\sd^2} x_{(2,2)}\\
&\quad\quad\text{with }
\norm{X_{(0)}}_2\le 4,\,
\norm{X_{(1)}}_2\le 3,\,
\norm{X_{(2)}}_2\le 1.\\
\end{split}
\end{align*}
and by a similar calculation,
\begin{align*}
\begin{split}
y^2&=
\ip{y_{(2,0)},1} + \ip{y_{(2,1)},\ntheta^\star} + \ip{y_{(2,2)},(\ntheta^
\star)^2}\\
&\quad\quad\text{for }y_{(2,0)}=y_{(1,0)}^2 ,\, y_{(2,1)}=2y_{(1,0)} y_{(1,1)},\,
y_{(2,2)}=\flat(y_{(1,1)} \otimes y_{(1,1)})\\
&\quad\quad\text{with }\norm{y_{(2,0)}}_2,\norm{y_{(2,1)}}_2,\norm{y_{(2,2)}}_2\le2\sd^2
\end{split}\\
\begin{split}
g(y)&=
\ip{Y_{(0)},1} + \ip{Y_{(1)},\ntheta^\star} + \ip{Y_{(2)},(\ntheta^
\star)^2}\\
&\quad\text{for }Y_{(0)}=1+\frac{-2\sd-1}{2\sd^2} y_{(1,0)}+\frac{1}{2\sd^2} y_{(2,0)},\,\\
&\quad\quad\quad Y_{(1)}=\frac{-2\sd-1}{2\sd^2} y_{(1,1)}+\frac{1}{2\sd^2} y_{(2,1)},\,\text{ and }
Y_{(2)}=\frac{1}{2\sd^2} y_{(2,2)} \\
&\quad\quad\text{with }
\norm{Y_{(0)}}_2\le 4,\,
\norm{Y_{(1)}}_2\le 3,\,
\norm{Y_{(2)}}_2\le 1.\\
\end{split}
\end{align*}
Therefore, again using Lemma~\ref{lem:tensorize},
\begin{align}
\begin{split}
\label{eq:gxgy}
g(\ctflip_{ki} + \enotfix_{ki})g(\efix_{ki})&=g(x)g(y)\\
&=\ip{\flat\left(X_{(0)}\otimes Y_{(0)}\right),1} + 
\ip{\flat\left(X_{(0)}\otimes Y_{(1)}+X_{(1)}\otimes Y_{(0)}\right),\ntheta^\star}
\\&\quad+
\ip{\flat\left(X_{(0)}\otimes Y_{(2)}+X_{(1)}\otimes Y_{(1)}+X_{(2)}\otimes Y_{(0)}\right),(\ntheta^\star)^{\otimes 2}}
\\&\quad+
\ip{\flat\left(X_{(1)}\otimes Y_{(2)}+X_{(2)}\otimes Y_{(1)}\right),(\ntheta^\star)^{\otimes 3}}+
\ip{\flat\left(X_{(2)}\otimes Y_{(2)}\right),(\ntheta^\star)^{\otimes 4}}\\
&=\ip{Z_{(0)},1} + \ip{Z_{(1)},\ntheta^\star} + \ip{Z_{(2)},(\ntheta^\star)^{\otimes 2}}
+ \ip{Z_{(3)},(\ntheta^\star)^{\otimes 3}}+ \ip{Z_{(4)},(\ntheta^\star)^{\otimes 4}}
\\
&\quad\text{for }Z_{(0)}=\flat\left(X_{(0)}\otimes Y_{(0)}\right),\\
&\quad\quad\quad Z_{(1)}=\flat\left(X_{(0)}\otimes Y_{(1)}+X_{(1)}\otimes Y_{(0)}\right),\\
&\quad\quad\quad Z_{(2)}=\flat\left(X_{(0)}\otimes Y_{(2)}+X_{(1)}\otimes Y_{(1)}+X_{(2)}\otimes Y_{(0)}\right),\\
&\quad\quad\quad Z_{(3)}=\flat\left(X_{(1)}\otimes Y_{(2)}+X_{(2)}\otimes Y_{(1)}\right),\\
&\quad\quad\quad (\ntheta^\star)^{\otimes 3}=\flat\left(\ntheta^\star\otimes \ntheta^\star\otimes \ntheta^\star\right),\\
&\quad\quad\quad (\ntheta^\star)^{\otimes 4}=\flat\left(\ntheta^\star\otimes \ntheta^\star\otimes \ntheta^\star \otimes \ntheta^\star\right)\\
&\quad\quad\text{with }
\norm{Z_{(0)}}_2\le 16,\,
\norm{Z_{(1)}}_2\le 24,\,
\norm{Z_{(2)}}_2\le 17,\,
\norm{Z_{(3)}}_2\le 6.\\
\end{split}
\end{align}

\subsection{Calculating feature components of $\phiq$}\label{sec:app:calc-phiq}

We follow the notation of Section~\ref{sec:phi-def-and-realizability}.
In particular,
for any state $s\in\cS,\,s\ne \bot$ and action $a\in[A]$, 
let $s=s_{ki}$ be a state along step $i$ of round $k$.
Let $s^a_{k,i+1}$ denote the value taken by $s_{k,i+1}$ if $a_{ki}=a$, and similarly for $\stheta^a_{k,i+1}$.
We intend to linearize the expression $g(\diff(\stheta^a_{k+1,0}, \stheta^\star))$.

Let
$x = \diff\left(\stheta^a_{k+1,0}, \stheta^\star\right)$.
By Eq.~\ref{eq:diff-def},
\begin{align}
\begin{split}
x &= \frac12\left(\sd-\ip{\stheta^a_{k+1,1},\stheta_\star}\right)
= 
\ip{x_{(1,0)}, 1}+\ip{x_{(1,1)}, \stheta^\star}\\
&\quad\quad\text{for }x_{(1,0)}=\frac{1}{2}\sd 
\text{ and } x_{(1,1)}=-\frac{1}{2}\stheta^a_{k+1,1}\\
&\quad\quad\text{with }\norm{x_{(1,0)}}_2,\norm{x_{(1,1)}}_2\le\sd
\label{eq:x-for-q}
\end{split}
\end{align}
By a similar calculation to the previous case,
\begin{align}
\begin{split}
x^2&=\ip{x_{(1,0)}^2, 1}+\ip{2x_{(1,0)} x_{(1,1)},\stheta^\star} + \ip{\flat(x_{(1,1)} \otimes x_{(1,1)}), \flat(\stheta^\star\otimes \stheta^\star)}\\
&=\ip{x_{(2,0)},1} + \ip{x_{(2,1)},\stheta^\star} + \ip{x_{(2,2)},(\stheta^
\star)^2}\\
&\quad\quad\text{for }x_{(2,0)}=x_{(1,0)}^2 ,\, x_{(2,1)}=2x_{(1,0)} x_{(1,1)},\,
x_{(2,2)}=\flat(x_{(1,1)} \otimes x_{(1,1)}),\\
&\quad\quad\quad\quad\text{and }(\stheta^\star)^{\otimes 2}=\flat(\stheta^\star\otimes\stheta^\star)\\
&\quad\quad\text{with }\norm{x_{(2,0)}}_2,\norm{x_{(2,1)}}_2,\norm{x_{(2,2)}}_2\le2\sd^2
\label{eq:x2-for-q}
\end{split}\\
\begin{split}
g(x)&=1+x\frac{-2\sd-1}{2\sd^2}+x^2\frac{1}{2\sd^2}
=\ip{X_{(0)},1} + \ip{X_{(1)},\stheta^\star} + \ip{X_{(2)},(\stheta^
\star)^2}\\
&\quad\text{for }X_{(0)}=1+\frac{-2\sd-1}{2\sd^2} x_{(1,0)}+\frac{1}{2\sd^2} x_{(2,0)},\,\\
&\quad\quad\quad X_{(1)}=\frac{-2\sd-1}{2\sd^2} x_{(1,1)}+\frac{1}{2\sd^2} x_{(2,1)},\,\text{ and }
X_{(2)}=\frac{1}{2\sd^2} x_{(2,2)}\\
&\quad\quad\text{with }
\norm{X_{(0)}}_2\le 4,\,
\norm{X_{(1)}}_2\le 3,\,
\norm{X_{(2)}}_2\le 1.\\
\label{eq:bigx-for-q}
\end{split}
\end{align}

\section{Pseudocode and constants of \tensorplan}\label{app:tensorplan-pseudocode}

\begin{minipage}[t]{0.57\textwidth}
\begin{flushleft}
\begin{algorithm}[H]
\caption{
\tensorplan.{\tt GetAction}
}\label{alg:local}
\begin{algorithmic}[1]
\State \textbf{Inputs:} $s, \phiv, \EpisodeStart$, 
\State $\qquad\qquad$ $A, H, d, \simm,  \delta, \thetabound$
\If{\EpisodeStart} \Comment{Initialize global $\thetafinal$}
	\State \begin{varwidth}[t]{\linewidth}
      \tensorplan.{\tt Init}$($\par
        \hskip\algorithmicindent$s,\phi_v(s), A, H, d, \simm, \delta)$
      \end{varwidth}
\EndIf
\State $\ldd_{\cdot}\gets \mathrm{\approxmeasure}(s,\phiv(s),A,n_2,\simm
)$ \label{line2:avg-calc1} \label{line2:avg-calc} %
\State Access $\thetafinal$ saved by \tensorplan.{\tt Init}
\State \Return $\argmin_{a\in[A]} \left|\ip{\ldd_{a},\left[1,\thetafinal\right]}\right|$ \label{line2:action-choice}
\end{algorithmic}
\end{algorithm}
\end{flushleft}
\end{minipage}
\hfill
\begin{minipage}[t]{0.43\textwidth}
\begin{algorithm}[H]
\caption{\approxmeasure}\label{alg:approx-measure}
\begin{algorithmic}[1]
\State \textbf{Inputs:}
$s,\phiv(s),A,n,\simm$
\For{$a=1$ to $A$}
    \For{$l=1$ to $n$}
      \State $(R_l, S'_l,\phi_v(S'_l))\gets $
      \State $\qquad \qquad \simulatesc(s,a)$
      \State $\tilde \Delta_l\gets \left[R_l, \left(\phiv(S'_l)-\phiv(s)\right)\right]$\label{line:approx-measure}
    \EndFor
    \State $\Delta_{a}:=\frac{1}{n}\sum_{l\in[n]} \tilde \Delta_l$
\EndFor
\State \Return $(\Delta_{a})_{a\in [A]}$ \label{line:avg-calc-approxmeasure}
\end{algorithmic}
\end{algorithm}
\end{minipage}

\begin{algorithm}[t]
\caption{\tensorplan.{\tt Init}}\label{alg:global}
\begin{algorithmic}[1]
\State \textbf{Inputs:}
$s_0,\phiv(s_0),A,H, d, \simm,\delta$
\State $\bfitDelta\gets\{\}$ \Comment{$\bfitDelta$ is a list} \label{line:x-def}
\State Initialize $\zeta, \epsilon,n_1,n_2,n_3$ via equations \eqref{eq:algzetadef}, \eqref{eq:algepsdef}, \eqref{eq:algn1def}, \eqref{eq:algn2def}, \eqref{eq:algn3def}, respectively.
\For{$\tau=1$ to $E_d+2$}
	\State
	Choose any $\theta_\tau \in\argmax_{\theta\in\sol(\bfitDelta)} \ip{\phiv(s_0), \theta}$
		\Comment{Optimistic choice} \label{line:new-theta} \label{line:new-iter}
	\State $\CleanTest \gets\true$
	\For{$t=1$ to $n_1$}			\Comment{$n_1$ rollouts with $\theta_{\tau}$-induced policy}
		\State  $S_{\tau t1}=s_0$ \Comment{Initialize rollout} \label{line:tau-t-j-s0}
		\For{$j=1$ to $H$} \Comment{Stages in episode}
			\State $\ldd_{\tau tj, \cdot} \gets
			\mathrm{\approxmeasure}(
			S_{\tau tj},\phiv(S_{\tau tj}),A,n_2,\simm
			)$ \label{line:avg-calc1}
			\If{
			$\CleanTest$ and
			$\min_{a\in[A]} \left|\ip{\ldd_{\tau tja},\left[1,\theta\right]_\tau}\right|>\tfrac{\delta}{4H}$}\label{line:consistency-test}
			\Comment{Consistency failure?}
				\State $\hd_{\tau tj,\cdot}\gets \mathrm{\approxmeasure}(
				S_{\tau tj},\phiv(S_{\tau tj}),A,n_3,\simm
				)$ \label{line:avg-calc2} \Comment {Refined data}
				\State $\bfitDelta\mathrm{.append}\left(\otimes_{a \in [A]}\hd_{\tau tja}\right)$ \label{line:new-eluder-element} \Comment{Save failure data}
				\State $\CleanTest \gets \false$ \Comment{Not clean anymore}
			\EndIf
            \State $A_{\tau tj}\gets \argmin_{a\in[A]} \left|\ip{\ldd_{\tau tja},\left[1,\theta\right]_\tau}\right|$ \label{line:action-choice} \Comment{Find most consistent action}
			\State $(R_{\tau tj},S_{\tau tj+1}, \phi_v(S_{\tau tj+1})) \gets \simulatesc(S_{\tau tj}, A_{\tau tj})$
			 \Comment{Roll forward} \label{line:simulate-choice}
		\EndFor
	\EndFor
	\State \textbf{if} $\CleanTest$ \textbf{then} \Break \Comment{Success?} \label{line:cleantest-break}
\EndFor
\State Save into global memory $\thetafinal\gets\theta_\tau$\label{line:return}
\end{algorithmic}
\end{algorithm}

\begin{align}
E_d &= \floor{3(d+1)^A\frac{e}{e-1} \ln\left\{ 3+3\left(\frac{2(\thetabound+1)^A 3^A}{H^A\epsilon}\right)^2 \right\}+1}
\label{eq:eddef}\\
\sol\left(\Delta_1, \dots, \Delta_\tau \right) &= \left\{ \theta\in\bR^d \,:\, \norm{\theta}_2\le \thetabound, \forall i\in[\tau]\,: \,\, \left|\ip{\Delta_i,\, \otimes_{a\in[A]}\concat{1\theta}}\right|\le \frac{H^A\epsilon}{2\sqrt{E_d}} \right\} \label{eq:sol-def} \\
\zeta &= \frac{1}{4H}\delta \label{eq:algzetadef} \\
\epsilon &= \left(\frac{\delta}{12H^2}\right)^A/\left(1+\frac{1}{2\sqrt{E_d}}\right) \label{eq:algepsdef}  \\
n_1 &=  \ceil{\frac{32(1+2\thetabound)^2}{\delta^2} \log \frac{E_d+1}{\zeta}} \label{eq:algn1def} \\
n_2 &=  \ceil{\frac{1867H^2(\thetabound+1)^2(d+1)}{2\delta^2}\log (4(E_d+1)n_1HA(d+1)/\zeta)} \label{eq:algn2def} \\
n_3 &= \ceil{\max\left\{n_2, \frac{32(H+1)^2E_d}{\epsilon^2}\log((2(E_d+1)n_1HA))/\zeta\right\}}
\label{eq:algn3def}
\end{align}

\end{document}